%
%

\documentclass[SIG,biber]{nowfnt}   

\usepackage[utf8]{inputenc}
\usepackage{booktabs}
\usepackage{url}
\usepackage{amstext}
\usepackage{amsmath,amssymb}
\usepackage{pifont} 
\usepackage{graphicx}
\usepackage{subcaption}

\usepackage{physics}

\usepackage{mathtools}
\usepackage{wrapfig}
\usepackage{algorithm}
\usepackage{algpseudocode}
\usepackage{makecell}


\makeatletter
\def\blfootnote{\xdef\@thefnmark{}\@footnotetext}
\makeatother

\DeclareMathOperator*{\argmax}{arg\,max}
\DeclareMathOperator*{\argmin}{arg\,min}
\newcommand{\EE}{\mathbb{E}}
\newcommand{\btheta}{\boldsymbol{\theta}}
\newcommand{\bphi}{\boldsymbol{\phi}}

\title{Learning with Limited Samples – Meta-Learning and Applications to Communication Systems}



\maintitleauthorlist{
}

\issuesetup
{%
 copyrightowner={...},
 volume        = xx,
 issue         = xx,
 pubyear       = 2022,
 isbn          = xxx-x-xxxxx-xxx-x,
 eisbn         = xxx-x-xxxxx-xxx-x,
 doi           = 10.1561/XXXXXXXXX,
 firstpage     = 1, 
 lastpage      = \pageref{LastPage}
 }


\addbibresource{FnTmeta.bib}

\usepackage{mwe}

\author[$\star$]{Lisha Chen}
\author[$\dagger$]{Sharu Theresa Jose}
\author[$\dagger$]{Ivana Nikoloska}
\author[$\dagger$]{Sangwoo Park}
\author[$\star$]{Tianyi Chen}
\author[$\dagger$]{Osvaldo Simeone\blfootnote{The first four authors are listed in alphabetical order. Lisha Chen is the main author of Section~\ref{sec:ch2} excluding Section~\ref{sec:ch2:modular}, as well as Sections~\ref{sec:ch3:bilevel-opt}, \ref{sec:ch4:meta_linear_risk}, and~\ref{sec:ch7.2:theory}; Sharu Theresa Jose is the main author of Section~\ref{sec:ch4}; Ivana Nikoloska is the main author of Sections~\ref{sec:ch2:modular},~\ref{sec:ch5.5:power_control} and \ref{sec:ch6.2:quantum_compute}; Sangwoo Park is the main author of Section~\ref{sec:ch5} excluding Section~\ref{sec:ch5.5:power_control}, as well as Sections~\ref{sec:ch7.1:methods} and \ref{sec:ch7.3:apps}; Tianyi Chen is the main author of Section~\ref{sec:ch3:bilevel-opt}; and Osvaldo Simeone is the main author of Section~\ref{sec:ch1} and Section~\ref{sec:ch6.1:neuro_compute}.  This monograph is based on a tutorial delivered by Tianyi Chen and Osvaldo Simeone at IEEE ICASSP 2022. Tianyi Chen and Osvaldo Simeone have supervised the writing process, and Osvaldo Simeone led the  editing of the document.
}
}

\affil[$\dagger$]{King's College London}
\affil[$\star$]{Rensselaer Polytechnic Institute}

\articledatabox{\nowfntstandardcitation}

\begin{document}

\makeabstracttitle

\begin{abstract}
Deep learning has achieved remarkable success in many machine learning tasks such as image classification, speech recognition, and game playing. However, these breakthroughs are often difficult to translate into real-world engineering systems because deep learning models require a massive number of training samples, which are costly to obtain in practice.  To address labeled data scarcity, few-shot meta-learning optimizes  learning algorithms that can efficiently adapt to new tasks quickly. While meta-learning is gaining significant interest in the machine learning literature, its working principles and theoretic fundamentals are not as well understood in the engineering community. 

This review monograph provides an introduction to meta-learning by covering principles, algorithms, theory, and engineering applications. After introducing meta-learning in comparison with conventional and joint learning, we describe the main meta-learning algorithms, as well as a general bilevel optimization framework for the definition of meta-learning techniques. Then, we summarize known results on the generalization capabilities of meta-learning from a statistical learning viewpoint. Applications to communication systems, including decoding and power allocation, are discussed next, followed by an introduction to aspects related to the integration of meta-learning with emerging computing technologies, namely neuromorphic and quantum computing. The monograph is concluded with an overview of open research challenges.

\end{abstract}

\chapter{Introduction and Background}
\label{sec:ch1}
\section{Introduction}

One of the main principles underlying the design of data-efficient machine learning is \textbf{knowledge sharing} across learning tasks.  As an example, consider the problem of \textbf{few-shot classification}. In it, one is interested in designing a classifier based on few examples for each class. The limited availability of data is typically an insurmountable problem for conventional machine learning solutions, unless one has detailed information about the structure of the problem that can be used to handcraft a well-performing classifier. When such domain knowledge is not available, it may be, however, possible to collect data sets from distinct 
classification tasks that are deemed to be related to the task of interest. Transferring knowledge from such auxiliary tasks to the target task may compensate for the lack of sufficient data or domain knowledge. 
%
%

The specific way in which knowledge sharing can be realized depends on the setting of interest and on the availability of data. Central to these distinctions is the notion of a \textbf{learning task}. A learning task generally refers to a specific supervised, unsupervised, or reinforcement learning instance characterized by an underlying data-generation distribution and loss or reward function. For instance, a learning task may amount to the problem of classifying images in a number of categories based on labelled examples. With this definition, at a high level, we can distinguish the following methodologies (see, e.g., \cite{simeoneCUP}).

\begin{itemize}
	\item \textbf{Transfer learning}: In transfer learning, one is concerned with two learning tasks -- a source task and a target task. Data are typically available for both tasks, although data for the target task may be limited. The goal is to address the target task by utilizing also data from the source task with the aim of reducing data requirements for the target task. In the image classification example, transfer learning would facilitate the optimization of a classifier for a target classification task, e.g., distinguishing images of cats and dogs, using data for another classification task, e.g., distinguishing images of teapots and mugs.
	\item \textbf{Multi-task learning and joint learning}: In multi-task learning, there are $K>1$ learning tasks, and one is interested in learning a machine learning model that is able to address \emph{all} the tasks based on data pooled from all the tasks. Generally, the machine learning model has some shared components, e.g., layers of a neural network, and also separate parts pertaining each task, e.g., ``heads'' of a classifier. When the model is fully shared across tasks, multi-task learning is also known as \textbf{joint learning}.  In the image classification example, multi-task learning would optimize a classifier producing decisions for a set of classification tasks.
	\item \textbf{Meta-learning}: In meta-learning, we have access to data for a number of tasks, but we are not interested in training a machine learning model for them as in multi-task learning. Rather, we would like to use data from multiple tasks in order to design a \textbf{training procedure}, and not to produce a single machine learning model. Specifically, the goal is ensure that the \emph{meta-learned} training procedure can  efficiently optimize a machine learning model for \emph{any}, a priori unknown, learning task. Accordingly, in a meta-learning setting, one does not know a priori what the target task will be, although one expects it to be similar to those for which data are available. By optimizing the learning process, meta-learning implements a form of \textbf{learning to learn}.  In the image classification example, meta-learning would produce a procedure able to optimize a classifier for \textit{any new classification task} by using data from a pool of other similar classification tasks.
\end{itemize}

This review monograph provides an introduction to meta-learning by covering principles, algorithms, theory, and engineering applications. In this section, we start by providing a first exposition to meta-learning by contrasting it with conventional machine learning and multi-task learning. The chapter concludes with a description of the organization of the rest of the monograph.

\section{Meta-Learning}

In meta-learning, we target an entire \textbf{class of tasks}, also known as the \textbf{task environment}, and we wish to ``prepare'' for any new task that may be encountered from this class. As we will review in this subsection, conventional learning aims at optimizing model parameters, such as the weights of a neural network, by applying a given training algorithm, which is defined by a set of \textbf{hyperparameters}. Training algorithms typically involve local search procedures, e.g., based on gradient information, and hyperparameters include  the learning rate -- i.e., the size of the updates at each iteration -- and the initialization. In contrast,  the goal of meta-learning is to 
optimize \textbf{hyperparameters} with the goal of identifying a training algorithm that may perform well on new tasks. 


\subsection{Meta-Training and Meta-Testing}

The working assumption underlying meta-learning is that, prior to
observing the -- typically small -- training data set for a new
task, one has access to a larger data set of examples
from related tasks. This is known as the \textbf{meta-training data set}. Meta-learning consists of two distinct phases: \vspace{-0.2cm} \begin{itemize}  \item \textbf{Meta-training}: Given the meta-training data set, a set of hyperparameters is optimized;
	\item \textbf{Meta-testing}: After the meta-learning phase is completed, data for
	a target task, known as \textbf{meta-test task}, is revealed, and model parameters
	are optimized using the meta-trained hyperparameters.\end{itemize}\vspace{-0.2cm}
As such, the meta-training phase aims at optimizing hyperparameters
that enable efficient training on a new, a priori unknown, target
task in the meta-testing phase.

\subsection{Reviewing Conventional Learning}

In order to introduce the notation necessary to describe
meta-learning, let us briefly review the operation of conventional
machine learning.

\noindent \textbf{Training and testing.}	 In conventional machine learning, the starting point is the selection of  a model class $\mathcal{H}$ and of a training algorithm. The choice of model class and training algorithm determines the \textbf{inductive bias} applied by the learning procedure to generalize from training to test data.  The model class $\mathcal{H}$ contains models parameterized
by a vector $\phi$, such as neural networks. Model class and training algorithm are ideally tailored to information available about the problem of interest. 

Furthermore, both model class and training algorithm generally depend on a \emph{fixed} vector of
hyperparameters, denoted as $\theta$. Thereafter, hyperparameters may specify, for instance, a mapping defining the vector of features
to be used in a linear model, or the initialization and learning rate
of an iterative optimizer.

The training algorithm is applied to a training set $\mathcal{D}^{\text{tr}}$,
which may include also a separate validation set. The training algorithm
produces a model parameter vector $\phi$ by minimizing the \textbf{training loss} 
\begin{equation}\label{eq:ch1:trloss}
	L_{\mathcal{D}^{\text{tr}}}(\phi),
\end{equation}  which  is obtained by evaluating an empirical average of the loss accrued over the data points in the training set $\mathcal{D}^{\text{tr}}$. Note that regularized versions of the training loss can also be used. Finally, the trained model is tested on a separate test
data set $\mathcal{D}^{\text{va}}$ by evaluating the \textbf{validation loss} 	$L_{\mathcal{D}^{\text{va}}}(\phi)$, in which the loss is averaged over the test data in data set $\mathcal{D}^{\text{va}}$. 
The overall process is summarized in Fig. \ref{fig1ch1}.

\begin{figure}
	\begin{center}
		\includegraphics[width=0.4\textwidth]{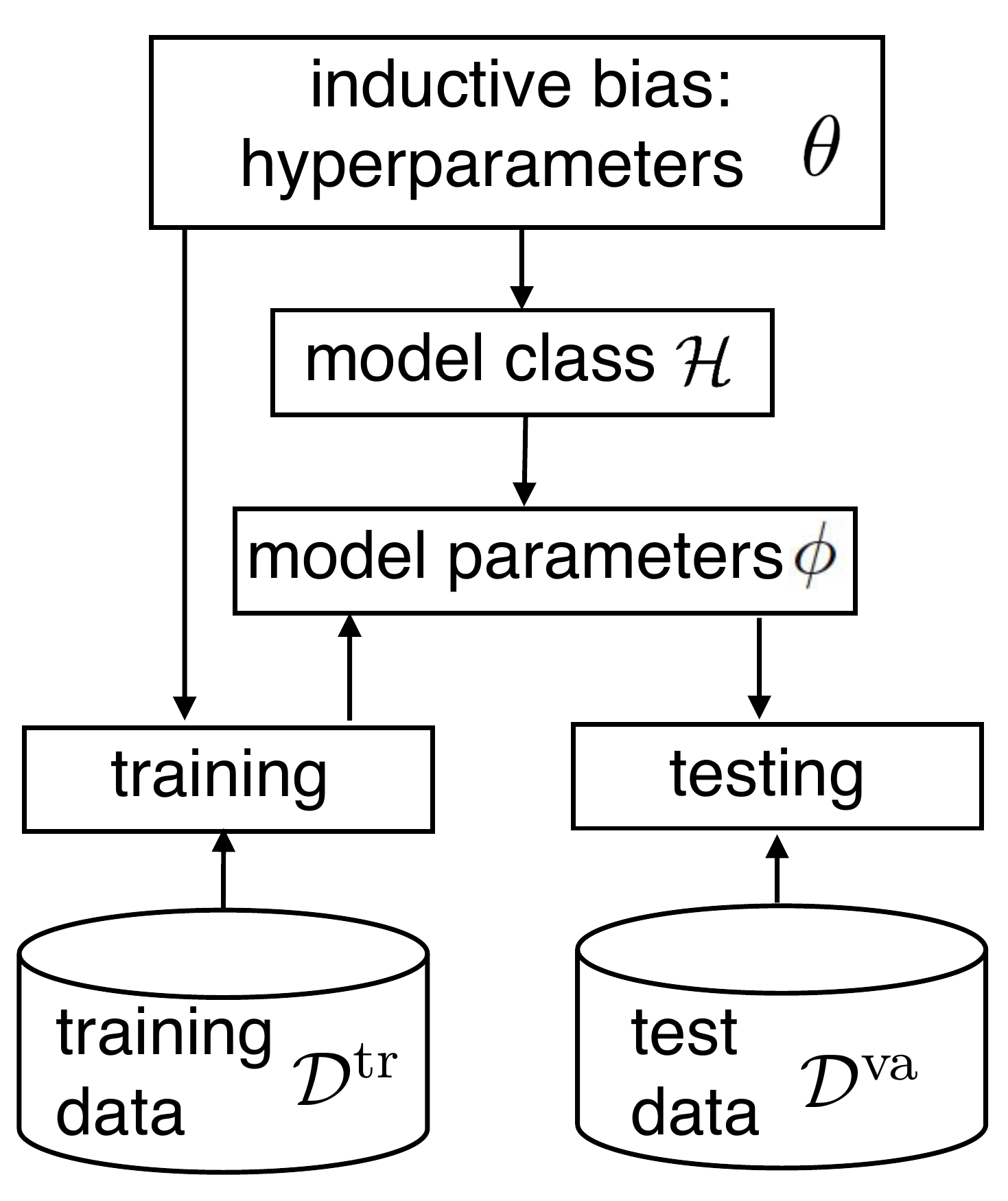}
	\end{center}
	\caption{Illustration of conventional machine learning.  \label{fig1ch1} }
\end{figure}

%
%

\noindent \textbf{Drawbacks of conventional learning.} As anticipated, conventional machine learning suffers from two main potential shortcomings
that meta-learning can help address, namely: \begin{itemize}
	\item  Large \textbf{sample complexity}: By training a model ``from scratch'', conventional learning generally requires a large number
	of training samples, $N$, to obtain a suitable
	test performance. The number of samples needed to obtain some level of accuracy is known as sample complexity.
\item Large \textbf{iteration complexity}: By relying on a generic optimization procedure, conventional learning may  require a large number of iterations to converge to a well-performing model.

\end{itemize}

Both issues can be potentially mitigated if the inductive
bias -- i.e., the selection of model class and training algorithm -- is tailored to the problem under study  based on domain knowledge. 
For instance, as part of the inductive bias, we may choose an architecture for a neural network model that satisfies known symmetries in the data; or  select an initialization point for the model parameters that
$\phi$ is suitably adapted to the learning task at hand. With such informed inductive biases, one we can generally reduce both sample and iteration complexities.

When one does not have access to sufficient information about the problem to identify a tailored inductive bias, it may become useful to transfer knowledge from data pertaining related tasks.

\subsection{Joint Learning}

Suppose that we have access to training data
sets $\mathcal{D}_{k}^{\text{tr}}$ for a number of distinct learning tasks in the same task environment that are indexed by the integer $k=1,...,K.$ Each data set 
$\mathcal{D}_{k}^{\text{tr}}$ contains $N$ training examples. We now review the idea of joint learning, which is a special case of multi-task learning in which a common model is trained for all $K$ learning tasks.

\noindent \textbf{Training and testing.}	 \textbf{Joint learning} pools together
all the training sets $\{\mathcal{D}_{k}^{\text{tr}}\}_{k=1}^{K}$,
and uses the resulting aggregate training loss 
\begin{equation}\label{eq:ch14:joint2}
L_{\{\mathcal{D}_{k}^{\text{tr}}\}_{k=1}^{K}}(\phi)=\frac{1}{K}\sum_{k=1}^{K}L_{\mathcal{D}_{k}^{\text{tr}}}(\phi)
\end{equation} 
as the learning criterion to train a shared model parameter $\phi$.

As illustrated in Fig. \ref{fig2ch1}, joint learning inherently caters only to the $K$ tasks in the original pool, and is hence generally unable to provide desirable performance for new, as of yet unknown, tasks.

\begin{figure}
\begin{center}
	\includegraphics[width=0.4\textwidth]{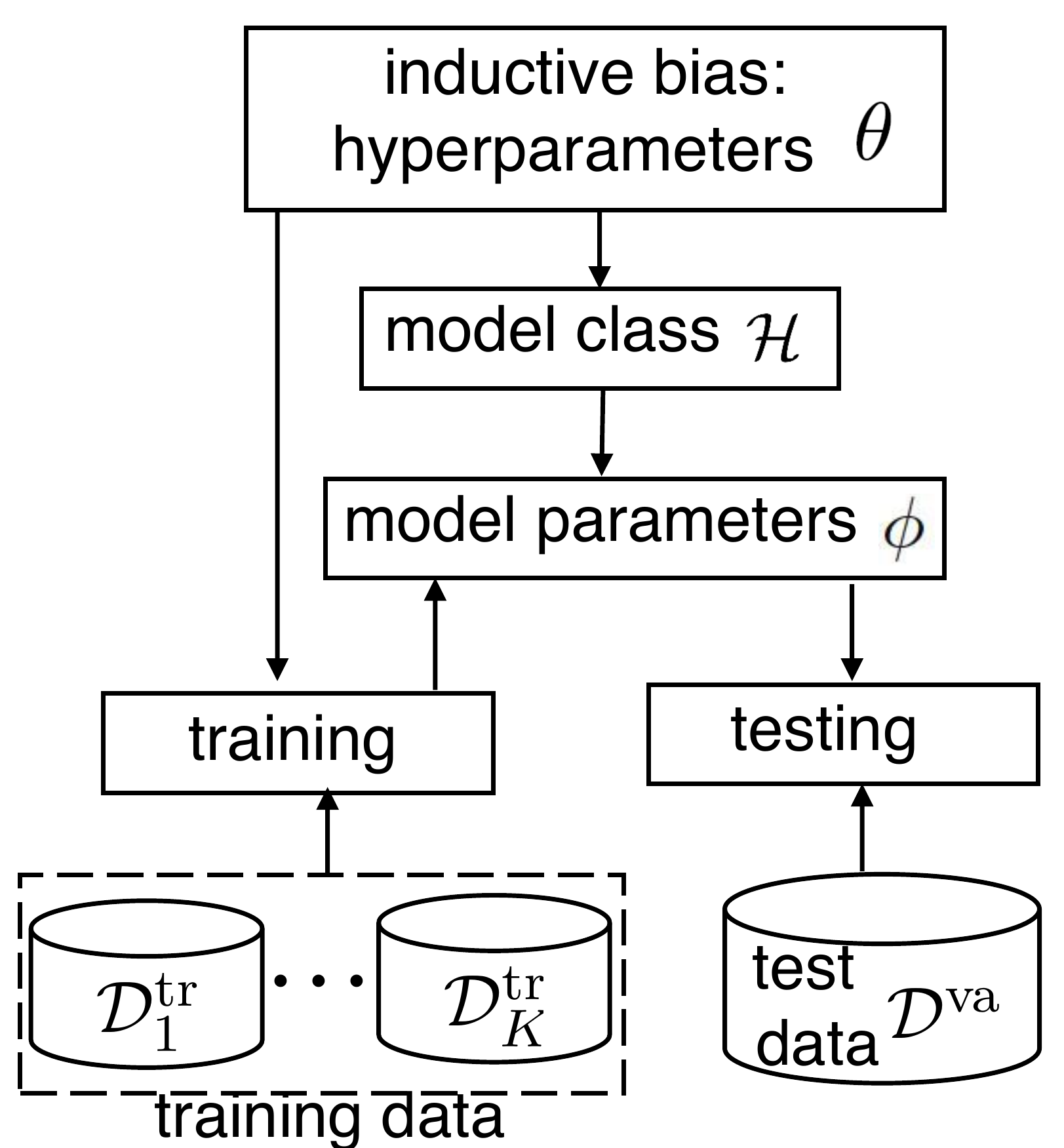}
\end{center}
\caption{Illustration of joint learning. \label{fig2ch1} }
\end{figure}

Joint learning is a natural first attempt to transfer knowledge across tasks with the aim of improving sample and iteration complexities.
First, by pooling together data from $K$ tasks, the overall size
of the training set is $K\cdot N,$ which may be large even when the
available data per task is limited, i.e., when $N$ is small.
Second, training only once for $K$ tasks amortizes the iteration
complexity across the tasks, yielding a potential reduction of the
number of iterations by a factor equal to $K$.

\noindent \textbf{Drawbacks of joint learning.}  Joint learning has two potentially critical shortcomings. \begin{itemize}
	\item \textbf{Bias}: The jointly trained model may improve the performance of conventional learning only if there is a single model
	parameter $\phi$ that ``works well'' for all tasks. This may not be the case if the tasks are sufficiently distinct.
	\item \textbf{Lack of adaptation}: Even if there is a single model parameter $\phi$ that yields
	desirable test results on all $K$ tasks, this does not guarantee
	that the same is true for a new task. In fact, by focusing on training a common model for all tasks, joint learning is not designed
	to enable adaptation to a new task.
\end{itemize}

 As a remedy for the second shortcoming just highlighted, one could use the jointly trained model parameter $\phi$ to initialize
the training process on a new task -- a process known as \textbf{fine-tuning}.
However, there is generally no guarantee that this would yield a desirable outcome, since the training process used by joint learning does not account for the subsequent step 
of adaptation on a new task. This is a key distinction between joint learning and meta-learning, which will be introduced next.

\subsection{Introducing Meta-Learning}\label{sec:ch1:intro}

As for joint learning, in meta-learning one assumes the availability
of data from $K$ related tasks from the same task environment, which are referred to as \textbf{meta-training
tasks}.
However, unlike joint learning, data from these tasks are kept separate,
and a distinct model parameter $\phi_{k}$ is trained for each $k$
task. 
As illustrated in Fig. \ref{fig3ch1}, meta-learning tasks only share a  \textbf{common hyperparameter vector}
$\theta$ that is optimized based on meta-training data. As a result, meta-training data is not used to optimize a common model, but only a  \textbf{shared inductive bias}. In other words, the optimization carried out by meta-learning operates at a higher level of abstraction, leaving the model parameters free to adapt to each individual task.

We now introduce meta-learning by emphasizing the differences with respect to joint learning and by detailing the meta-training and meta-testing phases.

\begin{figure}
\begin{center}
	\includegraphics[width=0.8\textwidth]{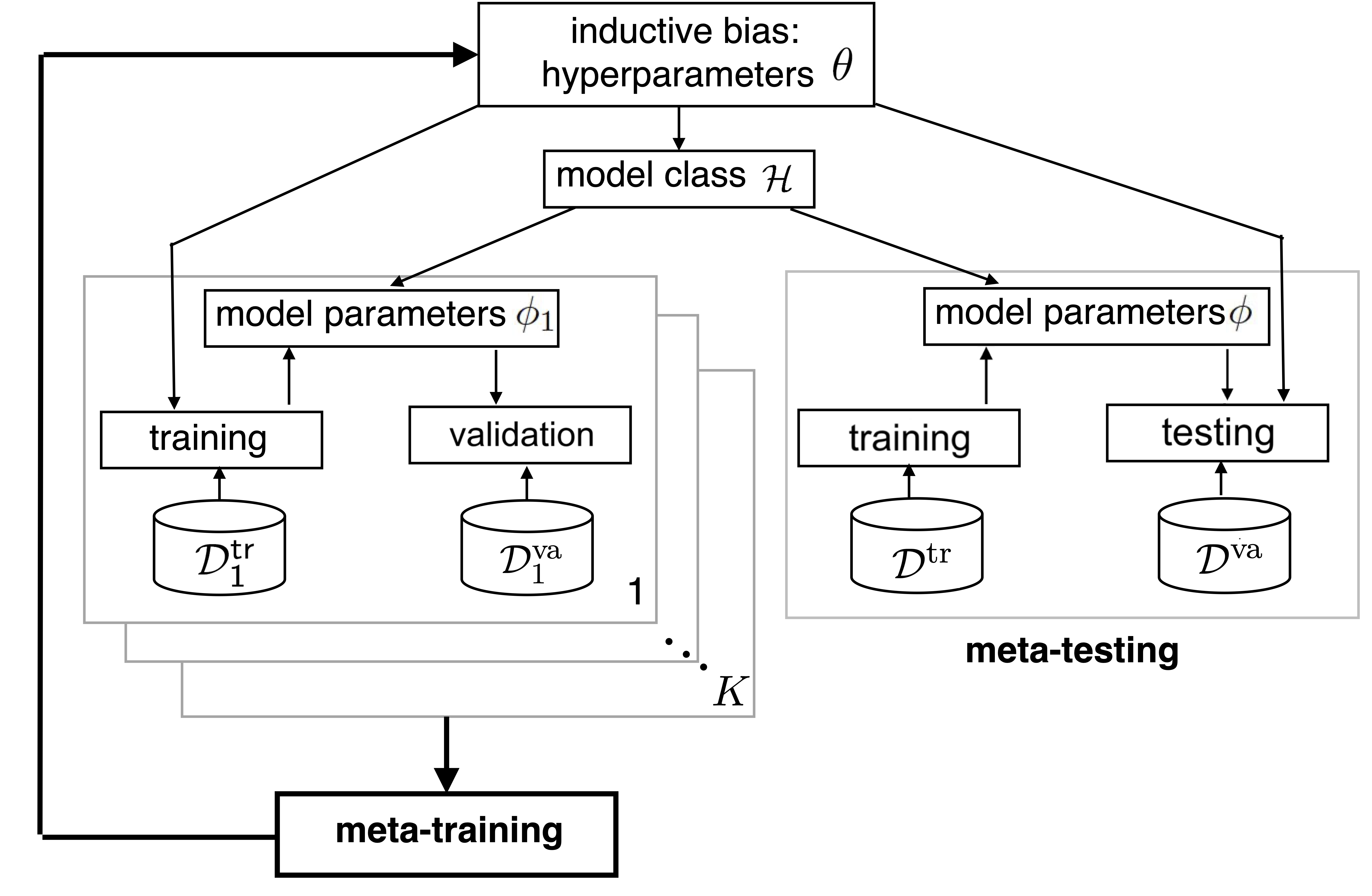}
\end{center}
\caption{Illustration of meta-learning. \label{fig3ch1} }
\end{figure}

\noindent \textbf{Inductive bias and hyperparameters.} As discussed, the goal of meta-learning is optimizing the hyperparameter vector $\theta$ and, through it, the inductive bias that is applied for the training of each task. To simplify the discussion and focus on the most common setting, let us assume that the model class $\mathcal{H}$ is fixed, while the training algorithm is a mapping $\phi^{\textrm{tr}}(\mathcal{D}|\theta)$ between
a training set $\mathcal{D}$ and a model parameter vector $\phi$
that depends on the hyperparameter vector $\theta$, i.e.,
\begin{equation}\label{eq:ch14:trainalgo}
\phi=\phi^{\textrm{tr}}(\mathcal{D}|\theta).
\end{equation}As an example, the training algorithm $\phi^{\textrm{tr}}(\mathcal{D}|\theta)$ could
output the last iterate of an optimizer. 

The hyperparameter $\theta$ can affect the output $\phi^{\textrm{tr}}(\mathcal{D}|\theta)$ of the training procedure in different ways. For instance, it can determine the regularization constant; the learning rate and/or the initialization of an iterative training procedure; the mini-batch size; a subset of the parameters in vector $\phi$, e.g., used to define a shared feature extractor;  the parameters of a prior distribution; and so on.


The output $\phi^{\textrm{tr}}(\mathcal{D}|\theta)$
of a training algorithm is generally random. This is the case, for
instance, if the algorithm relies on stochastic gradient descent (SGD). In the following discussion, we will assume for simplicity a deterministic training algorithm, but the approach carries over directly to the more general case of a random training procedure
by adding an average over the randomized of the trained model $\phi^{\textrm{tr}}(\mathcal{D}|\theta)$.

\noindent \textbf{Meta-training.}	 To formulate meta-training, a natural idea is to use as the optimization criterion the aggregate
training loss 
\begin{equation}\label{eq:ch14:metatrloss}
\mathcal{L}_{\{\mathcal{D}_{k}^{\text{tr}}\}_{k=1}^{K}}(\theta)=\frac{1}{K}\sum_{k=1}^{K}L_{\mathcal{D}_{k}^{\text{tr}}}(\phi^{\textrm{tr}}(\mathcal{D}_{k}^{\text{tr}}|\theta)),
\end{equation}
which is a function of the hyperparameter $\theta$. This quantity is known as the \textbf{meta-training loss}. The resulting problem \begin{equation}\label{eq:ch14:minmtloss}\min_{\theta}\mathcal{L}_{\{\mathcal{D}_{k}^{\text{tr}}\}_{k=1}^{K}}(\theta)\end{equation} of minimizing the meta-training loss over the hyperparameter $\theta$
is different from the ERM problem $\min_{\phi}L_{\{\mathcal{D}_{k}^{\text{tr}}\}_{k=1}^{K}}(\phi)$
tackled in joint learning for the following reasons:\vspace{-0.2cm}\begin{itemize} \item First, optimization is over the \textbf{hyperparameter} vector $\theta$ and not
over a shared model parameter $\phi$. \item Second, the model parameter $\phi$ is trained \textbf{separately} for each
task $k$ through the parallel applications of the training function
$\phi^{\textrm{tr}}(\cdot|\theta)$ to the training set $\mathcal{D}_{k}^{\text{tr}}$
of each task $k=1,...,K$.\end{itemize}\vspace{-0.2cm}

As a result of these two key differences with respect to joint training, the minimization of the meta-training loss (\ref{eq:ch14:metatrloss}) inherently caters for \textbf{adaptation}:
The hyperparameter vector $\theta$ is optimized in such a way that the
trained model parameter vectors $\phi_{k}=\phi^{\textrm{tr}}(\mathcal{D}_{k}^{\text{tr}}|\theta)$,
adapted separately to the data of each task $k$, minimize the aggregate
loss across all meta-training tasks $k=1,...,K$.

\noindent \textbf{Advantages of meta-training over joint training.}  While retaining the advantages of joint learning in terms of sample
and iteration complexity, meta-learning  addresses the two shortcomings
of joint learning: \vspace{-0.2cm}\begin{itemize}\item \textbf{Knowledge sharing via hyperparameters}: Meta-learning does not assume that there is a single model
parameter $\phi$ that ``works well'' for all tasks. It only assumes
that there exists a common model class and a common training algorithm,
as specified by \textbf{hyperparameters} $\theta$, that can be effectively applied
across the class of tasks of interest. 
\item  \textbf{Optimization for adaptation}: Meta-learning prepares the training algorithm $\phi^{\textrm{tr}}(\mathcal{D}|\theta)$
to \textbf{adapt} to potentially new tasks through the selection of the hyperparameters
$\theta$. This is because the model parameter vector $\phi$ is left free by design 
to be adapted to the training data $\mathcal{D}^{\textrm{tr}}_{k}$ of each task $k$.\end{itemize}\vspace{-0.2cm}

\noindent \textbf{Meta-testing.} 
As mentioned, the goal of meta-learning is
ensuring generalization to any new task that is drawn at random from the same task environment. 	For any new task, during the meta-testing phase, we have access to training set $\mathcal{D}^{\text{tr}}$
and validation set $\mathcal{D}^{\text{va}}.$ The new task is referred to as the \textbf{meta-test task}, and is illustrated in Fig. \ref{fig3ch1} along with the meta-training tasks.

The training data $\mathcal{D}^{\text{tr}}$ of the meta-test task is used to adapt the model parameter vector to the meta-test task, obtaining $\phi^{\textrm{tr}}(\mathcal{D}^{\text{tr}}|\theta)$. Importantly, the training algorithm depends on the hyperparameter $\theta$. 
The performance metric of interest for a given hyperparameter $\theta$
is the test loss for the meta-test task, or \textbf{meta-test loss}, given by \begin{equation}\label{eq:ch14:metatloss}L_{\mathcal{D}^{\text{va}}}(\phi^{\textrm{tr}}(\mathcal{D}^{\text{tr}}|\theta)).\end{equation} In (\ref{eq:ch14:metatloss}), the population loss of the trained model is estimated via the test loss evaluated with the test set  $\mathcal{D}^{\text{va}}.$

We have just seen that  meta-testing requires a split of the data for the new task into a training part, used for adaptation, and a validation part, used to estimate the population loss (\ref{eq:ch14:metatloss}). We now discuss how the idea of splitting per-task data sets into training and validation parts can be useful also during the meta-training phase.

As explained in Section \ref{sec:ch1:intro}, the training algorithm $\phi(\mathcal{D}^{\text{tr}}|\theta)$ 
is defined by an optimization procedure for the problem
of minimizing the training loss on the training set $\mathcal{D}^{\text{tr}}$. We can write the learning procedure informally as \begin{equation}\label{eq:ch14:dum}
\phi^{\textrm{tr}}(\mathcal{D}^{\text{tr}}|\theta)\underset{\theta}{\leftarrow}\min_{\phi}L_{\mathcal{D}^{\text{tr}}}(\phi),
\end{equation}  highlighting the dependence of the training algorithm on the training loss $L_{\mathcal{D}^{\text{tr}}}(\phi)$ and on the hyperparameter $\theta$.

 Because of (\ref{eq:ch14:dum}), in problem (\ref{eq:ch14:minmtloss})
one is effectively optimizing the training losses $L_{\mathcal{D}_{k}^{\text{tr}}}(\phi)$ for the meta-training tasks $k=1,...,K$ twice, first over
the model parameters in the inner optimization (\ref{eq:ch14:dum}) and then over the hyperparameters $\theta$ in the outer optimization (\ref{eq:ch14:minmtloss}). 
This reuse of the meta-training data for both adaptation and meta-learning may cause overfitting to the meta-training
data, and thus result in a training algorithm $\phi^{\textrm{tr}}(\cdot|\theta)$
that fails to generalize to new tasks. 

 The problem highlighted above is caused by the fact that the meta-training loss (\ref{eq:ch14:metatrloss}) does not provide an unbiased estimate of the sum of the population losses across the meta-training tasks. The bias is a consequence of the reuse of the same data for both adaptation and hyperparameter optimization. To address this problem, for each meta-training task $k$, we can partition the
available data into two data sets, a training data set $\mathcal{D}_{k}^{\text{tr}}$ and a validation data set $\mathcal{D}_{k}^{\text{va}}$.
Therefore, the overall meta-training data set is given as $\mathcal{D}^{\textrm{mtr}}=\{(\mathcal{D}_{k}^{\text{tr}},\mathcal{D}_{k}^{\text{va}})_{k=1}^{K}\}$.

The key idea is that the training data set $\mathcal{D}_{k}^{\text{tr}}$ is used for adaptation using the training algorithm (\ref{eq:ch14:dum}), while the test data set $\mathcal{D}_{k}^{\text{va}}$ is kept aside to estimate the population distribution of task $k$ for the trained model. The hyperparameter $\theta$ is not optimized to minimize the sum of the training losses as in (\ref{eq:ch14:minmtloss}). Rather, they target the sum of the test losses, which provides an unbiased estimate of the corresponding sum of population losses. 

\noindent \textbf{Meta-learning as nested optimization.}	To summarize, the general procedure followed by many meta-learning algorithms consists of a nested optimization of the following form:\vspace{-0.2cm}\begin{itemize}
\item \textbf{Inner loop}: For a fixed hyperparameter vector $\theta$, training
on each task $k$ is done separately, producing per-task model parameters
\begin{equation}\label{eq:ch14:inner}\phi_{k}=\phi^{\textrm{tr}}(\mathcal{D}_{k}^{\text{tr}}|\theta)\underset{\theta}{\leftarrow}\min_{\phi}L_{\mathcal{D}_{k}^{\text{tr}}}(\phi)\end{equation}
for $k=1,...,K;$
\item \textbf{Outer loop}: The hyperparameter vector $\theta$ is optimized as \begin{equation}\label{eq:ch14:outer}\theta_{\mathcal{D}^{\textrm{mtr}}}=\arg\min_{\theta}\mathcal{L}_{\mathcal{D}^{\textrm{mtr}}}(\theta),\end{equation}
where the  \textbf{meta-training loss} is (re-)defined as
\begin{equation}
	\mathcal{L}_{\mathcal{D}^{\textrm{mtr}}}(\theta)=\frac{1}{K}\sum_{k=1}^{K}L_{\mathcal{D}_{k}^{\text{va}}}(\phi^{\textrm{tr}}(\mathcal{D}_{k}^{\text{tr}}|\theta)).
\end{equation}\end{itemize}\vspace{-0.2cm} 	
As we will detail in Section 2, the specific implementation of a meta-learning algorithm depends on the selection of the training
algorithm $\phi^{\textrm{tr}}(\mathcal{D}|\theta)$ and on the method used to solve the outer optimization.

\subsection{Meta-Inductive Bias}

While the inductive bias underlying the training algorithm used in the inner loop is
optimized by means of meta-learning, the meta-learning process itself
assumes a \textbf{meta-inductive bias}. The meta-inductive bias encompasses
the choices of the hyperparameters to optimize in the outer loop -- e.g., the initialization
of an SGD training algorithm
-- as well as the optimization algorithm used in the outer loop. 
There is of course no end to this nesting of inductive biases: any
new learning level brings its own assumptions and biases. Meta-learning
moves the potential cause of bias at the outer level of the meta-learning
loop, which may improve the efficiency of training.

It is important, however, to note that the selection of a meta-inductive
bias may cause \textbf{meta-overfitting}\index{meta-overfitting} in a similar way as the choice of an inductive bias can cause overfitting in conventional
learning. In a nutshell, if the meta-inductive bias is too broad and the number of tasks insufficient, the meta-trained inductive bias may overfit the meta-training data and fail to prepare for adaptation to new tasks.

\section{Organization of the Monograph}

The rest of the monograph is organized as follows. 

\textbf{Section 2. Meta-learning algorithms}: This section provides a taxonomy and an introduction to the most common meta-learning algorithms, including model agnostic meta-learning (MAML).

\textbf{Section 3. Bilevel optimization for meta learning}: Section 3 presents a general optimization-based perspective on meta-learning, which views meta-learning as a form of stochastic bilevel optimization.

\textbf{Section 4. Statistical learning theory for meta-learning}: This section revisits meta-learning through the different perspective of generalization. Specifically, it investigates from a theoretical viewpoint the performance of meta-learning algorithms in terms of their capacity to generalize outside the meta-training data set to new tasks.

\textbf{Section 5. Meta-learning applications to communications}: The section turns to several examples of applications of meta-learning to the engineering problem of designing communication systems. Examples of reviewed applications include demodulation and power control.

\textbf{Section 6. Integration with emerging computing technologies}: This section highlights the potential synergies between meta-learning and two emerging computing technologies, namely neuromorphic and quantum computing. 

\textbf{Section 7. Outlook}: The last section presents an outlook on the area of meta-learning by offering a brief review of open problems and further directions for reading and research.

\chapter{Meta-Learning Algorithms}
\label{sec:ch2}

 In this section, we review the main classes of meta-learning  algorithms by focusing on selected notable representatives from each class.

\section{Overview of Meta-Learning Algorithms} 
\label{sec:intro_meta_algorithms}
Existing meta-learning algorithms can be roughly grouped into three categories according to the principle underlying the transfer of information among tasks \cite{hospedalesmeta}. We specifically distinguish among: 
(i) \textbf{metric-based} methods, in which  information shared across tasks is encoded in a distance measure used to instantiate  non-parametric predictors;  (ii) \textbf{model-based} methods, whereby data from multiple tasks is used to determine a ``hyper-model'' that maps data from a new task to a model; 
and (iii) \textbf{optimization-based} methods, which target the design of the hyperparameters  of  an optimization procedure for training on new tasks.
We now briefly review each class in turn.


\subsection{Metric-Based Meta-Learning}
\textbf{Metric-based} methods assume that the training and testing tasks in the given environment share a common feature representation mapping that can be used to gauge the similarity between data points. A similarity metric meta-learned based on data from multiple tasks can be leveraged to implement \textbf{non-parametric} predictive models without the need for training on a new task. 
Modern metric-based meta-learning methods include the Matching Network~\cite{vinyals2016matching}, the Prototypical Network~\cite{snell2017prototypical}, and the Relation Network~\cite{sung2018_relationnet}. 
The approach is aligned with empirical Bayes methods that are routinely used in models such as Gaussian Processes, with the caveat that  data is collected here from distinct tasks. 
In this monograph, we will concentrate on parametric models, which have been more commonly adopted for engineering problems, and hence we will not elaborate further on metric-based meta-learning.

\subsection{Optimization-Based Meta-Learning}

Owing to their performance and relative ease of implementation, \\
\textbf{optimization-based} methods constitute the dominant class of meta-learning solutions for \emph{parametric} models. Recently, the most common approach within this class optimizes the \emph{initialization} of the model parameters used by the training procedure. The rationale underlying such optimization-based methods is that a good initialization  can help the training procedure quickly adapt the model parameters to new tasks with  few optimization steps. Notable examples of initialization-based schemes are model agnostic meta-learning (MAML) algorithm and its variants (see e.g., \cite{Finn2017_maml,rajeswaran2019_imaml}). 
More broadly, optimization-based methods may design other hyperparameters of the
training algorithm such as the learning rate~\cite{maclaurin15}.

Existing optimization-based methods that address model initialization can be further divided into two main categories, depending on the type of optimization used for training, namely second-order algorithms and first-order algorithms. 
Second-order algorithms, to be presented in {Section~\ref{sec:second_order_algorithms}}, require second-order derivatives of the per-task loss functions during meta-learning; while first-order algorithms, described in {Section~\ref{sec:first_order_algorithms}}, only need first-order gradient information of the per-task loss functions to be available.

As a distinct example of optimization-based methods,
we will also study \textbf{modular meta-learning}. 
Modular meta-learning relies on the assumption that suitable models for the given environment share a common repository of modules that can be recombined to address each individual task.
Accordingly, modular meta-learning optimizes  the hyperparameters as a set of modules that can be assembled in different ways to yield models for new tasks using combinatorial optimization. 
Modules may consist of instance of layers of a neural network.
We refer to Section~\ref{sec:ch2:modular} for details.

\subsection{Model-Based Meta-Learning}
\textbf{Model-based} methods optimize a hyper-model that directly maps the training set from a task to a model. This mapping can be realized using recurrent neural networks~\cite{schmidhuber1993_recurrent,hochreiter2001_l2l},  convolutional neural networks~\cite{mishra2018simple}, or hypernetworks~\cite{qiao2018few,gidaris2018dynamic}. In Section~\ref{sec:ch2:cavia}, we will elaborate on a simple representative of model-based meta-learning, whereby the training set for the new task is used to optimize a \textbf{context} vector that determines the operation of a model shared across tasks.

\section{Second-Order Optimization-Based Meta-Learning} 
\label{sec:second_order_algorithms}

In this subsection, we introduce second-order optimization-based meta-learning methods by covering the key representatives, MAML~\cite{Finn2017_maml}, implicit MAML (iMAML)~\cite{rajeswaran2019_imaml}, and Bayesian MAML~\cite{grant2018recasting,yoon2018_BMAML,nguyen2020_VAMPIRE}.

\subsection{MAML}
\begin{figure}[ht]
  \centering
  \includegraphics[width=.25\linewidth]{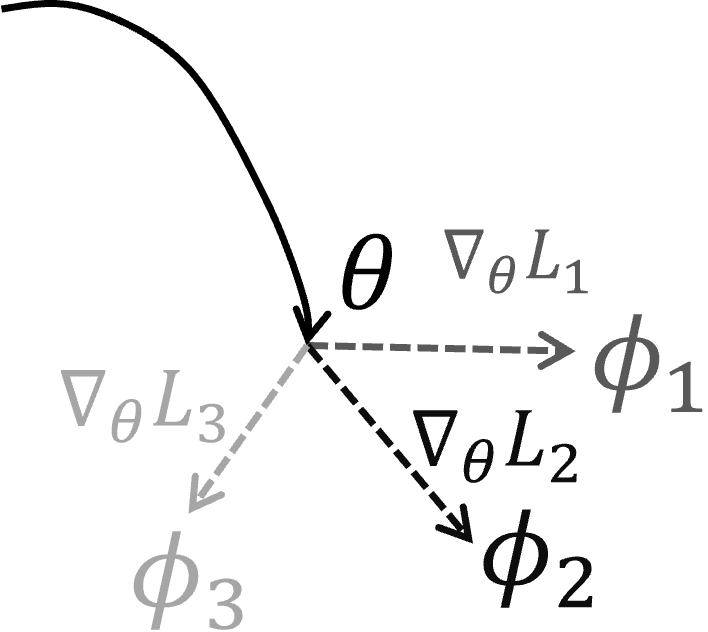}
  \caption{Illustration of MAML: MAML aims at finding an initial parameter vector $\theta$ that allows quick adaptation to new tasks via gradient descent of loss function for the $k$-th task, $L_k = L_{\mathcal{D}_k^{\rm tr}}(\theta)$. The adapted parameter for task $k$ is denoted as ${\phi}_k = \phi(\mathcal{D}_k^{\rm tr} | \theta)$, and is obtained as shown in the figure via a single gradient step.}
  \label{fig:maml}
\end{figure}

As illustrated in~Figure~\ref{fig:maml},
MAML aims at finding an initial parameter vector $\theta$ that allows quick adaptation to new tasks via gradient descent~\cite{Finn2017_maml}. 
In the simplest form of MAML, as seen in Figure~\ref{fig:maml},
starting from the initial parameter vector $\theta$,
the per-task parameter $\phi$ is  adapted using a one-step gradient update for the task-specific loss function $L_{\mathcal{D}_k^{\rm tr}}(\phi )$
for each $k$-th task. 
We recall from~\eqref{eq:ch1:trloss} that
we write as $L_{\mathcal{D}_{k}^{\rm tr}}(\phi )$ the empirical loss evaluated on a training set $\mathcal{D}_{k}^{\rm tr}$ when model parameter $\phi$ is used. Data for the $k$ task comprises the train set $\mathcal{D}_k^{\rm tr}$, which is used for training, as well as the validation set $\mathcal{D}_k^{\rm va}$
that is used to estimate the population loss via the validation loss $L_{\mathcal{D}_k^{\rm va}}(\phi)$.
Let $\mathcal{D}^{\rm mtr} = \{\mathcal{D}_{k}^{\text{mtr}}\}_{k=1}^{K}= \{\mathcal{D}_{k}^{\text{tr}}, \mathcal{D}_{k}^{\text{va}}\}_{k=1}^{K}$ denote the overall meta training dataset.
With these definitions,
the \textbf{meta-training loss function} $\mathcal{L}^{\mathrm{ma}}_{\mathcal{D}^{\rm mtr}}\left(\theta\right)$ for MAML is the average of the validation loss across all meta-training tasks.
Following~\eqref{eq:ch14:trainalgo}, we also write
as ${\phi}^{\mathrm{ma}}(\mathcal{D}_{k}^{\mathrm{tr}} |  \theta)$  the updated model parameter vector based on training data $\mathcal{D}_{k}^{\mathrm{tr}}$ for task $k$ with initialization $\theta$, and aim to optimize
\begin{subequations}
	  \begin{align}
  \min_{\theta}\ 
     \label{eqn.maml}
  &\mathcal{L}^{\mathrm{ma}}_{\mathcal{D}^{\rm mtr}}\left(\theta\right)=\frac{1}{K} \sum_{k=1}^{K} L_{\mathcal{D}_{k}^{\mathrm{va}}}\left({\phi}^{\mathrm{ma}}(\mathcal{D}_{k}^{\mathrm{tr}} |  \theta) \right) \\
  \label{eq:pertask_para_ma}
  &\mathrm{s.t.}~~ {\phi}^{\mathrm{ma}}(\mathcal{D}_{k}^{\mathrm{tr}} |  \theta)=\theta-\alpha \nabla_{\theta} L_{ \mathcal{D}_{k}^{\mathrm{tr}}}
  \left( \theta \right).
  \end{align}
\end{subequations}
  Where $\alpha$ is predefined stepsize. Note that the updated model from~\eqref{eq:pertask_para_ma} corresponds to the one-step gradient update illustrated in Figure~\ref{fig:maml}.
  %

  %
 The MAML algorithm is summarized in Algorithm \ref{alg:maml_train_0}. 
  \begin{algorithm}[H]
    \caption{MAML}
    \label{alg:maml_train_0}
  \begin{algorithmic}[1]
    \State{\textbf{Input:}~
    Initial iterate $\theta$;
    meta-training data $\mathcal{D}^{\rm mtr}$;
    loss function $\ell(z  |  \phi)$;
    stepsizes $\alpha$ and $\beta$}
    \While{not converged}
    \State{Sample batch of tasks $\tilde{\mathcal{K}} \subseteq \mathcal{K} = \{1,\ldots, K\}$}
      \For{all $k \in \tilde{\mathcal{K}}$}
        \State{{Compute per-task parameter ${\phi}^{\rm ma} (\mathcal{D}_k^{\rm tr}  |  \theta)$ using \eqref{eq:pertask_para_ma}}}
      \EndFor
    \State{{Update hyperparameter vector $\theta$ as\\
    \hspace{0.5cm}$\theta \leftarrow \theta - \beta \nabla_{\theta} \mathcal{L}^{\rm ma}_{\mathcal{D}^{\rm mtr}}(\theta)$ using  in~\eqref{eqn.maml}}}
    \EndWhile
    \end{algorithmic}
  \end{algorithm}
  In order to apply MAML, in line~8 of Algorithm~\ref{alg:maml_train_0}, we need to compute the gradient $\nabla_{\theta} \mathcal{L}^{\rm ma}_{\mathcal{D}^{\rm mtr}} (\theta )$ of the meta-training loss in~\eqref{eqn.maml}.
  Using the chain rule of differentiation, with $I$ denoting the identity matrix, the gradient $\nabla_{\theta} \mathcal{L}^{\rm ma}_{\mathcal{D}^{\rm mtr}} (\theta)$  is computed as
  \begin{align}
    \label{eq:maml_chainrule}
    &\nabla_{\theta} 
    \mathcal{L}^{\rm ma}_{\mathcal{D}^{\rm mtr}} (\theta )
    = \frac{1}{K}\sum_{k=1}^K
    \nabla_{\theta} {\phi}^{\mathrm{ma}}(\mathcal{D}_{k}^{\mathrm{tr}} |  \theta) 
    \nabla_{{\phi}} L_{\mathcal{D}_{k}^{\mathrm{va}}}({\phi})  | _{{\phi} = {\phi}(\mathcal{D}_{k}^{\mathrm{tr}} |  \theta)}
    \nonumber \\
    &= \frac{1}{K}\sum_{k=1}^{K}\left(I-\alpha \nabla_{\theta}^{2} L_{ \mathcal{D}_{k}^{\mathrm{tr}}}\left(\theta\right)\right) 
    \nabla_{{\phi}} L_{\mathcal{D}_{k}^{\mathrm{va}}}({\phi}) | _{{\phi} = {\phi}(\mathcal{D}_{k}^{\mathrm{tr}} |  \theta)} ,
  \end{align}
  where  $\nabla_{\theta} {\phi}^{\mathrm{ma}}(\mathcal{D}_{k}^{\mathrm{tr}} |  \theta)$ represents the Jacobian of the updated parameter in \eqref{eq:pertask_para_ma} with respect to the initial parameter $\theta$.
  Therefore the update of $\theta$ in line~7 of Algorithm~\ref{alg:maml_train_0} is specified as
  \begin{equation}
  \label{eq:outer_update_ma}
  \theta \leftarrow \theta-\frac{\beta}{K}\sum_{k=1}^{K}\left(I-\alpha \nabla_{\theta}^{2} L_{ \mathcal{D}_{k}^{\mathrm{tr}}}\left(\theta\right)\right) \nabla_{{\phi}} L_{\mathcal{D}_{k}^{\mathrm{va}}}({\phi})  | _{{\phi} = {\phi}^{\mathrm{ma}}(\mathcal{D}_{k}^{\mathrm{tr}} |  \theta)}.
  \end{equation}
  The convergence rate of MAML has been first established in \cite{fallah2020convergence}, and later been improved in \cite{chen2021solving}.


\subsection{Implicit MAML}
\label{subs:imaml}
In implicit MAML (iMAML),
the per-task parameter ${\phi}$ is updated using hyperparameter vector $\theta$ by solving an $l_2$-regularized empirical risk minimization problem that penalizes deviations between per-task parameter ${\phi}$ and the hyperparameter $\theta$. 
Accordingly, the meta-training loss function $\mathcal{L}^{\mathrm{im}}_{\mathcal{D}^{\rm mtr}}\left(\theta \right)$ is defined as
\begin{subequations}\label{eqn.imaml}
	  \begin{align}
  &\mathcal{L}^{\mathrm{im}}_{\mathcal{D}^{\rm mtr}}\left(\theta \right)
  =\frac{1}{K} \sum_{k=1}^{K} L_{\mathcal{D}_{k}^{\mathrm{va}}}\left({\phi}^{\mathrm{im}}( \mathcal{D}_{k}^{\mathrm{tr}} |  \theta) \right) \\
  \label{eq:pertask_para_im}
  &\mathrm{s.t.}~~ {\phi}^{\mathrm{im}}( \mathcal{D}_{k}^{\mathrm{tr}} |  \theta)
  =\underset{\phi}{\arg \min }
  \left\{L_{\mathcal{D}_{k}^{\mathrm{tr}}}\left(\phi \right)+\frac{\lambda}{2}\left\|\phi-\theta\right\|^{2}\right\},
  \end{align}
\end{subequations}
  where $\lambda >0$ is a regularization constant.
  As compared to MAML, the gradient update in \eqref{eq:pertask_para_ma} is replaced by the minimizer of problem~\eqref{eq:pertask_para_im}.
  Note that, if the loss function $L_{\mathcal{D}_k^{\rm tr}} (\phi )$ is replaced in \eqref{eq:pertask_para_im} by its first-order Taylor expansion at $\theta$, i.e., by
  \begin{align}
    L_{\mathcal{D}_k^{\rm tr}}(\phi )
    = L_{\mathcal{D}_k^{\rm tr}}(\theta ) + \nabla L_{ \mathcal{D}_k^{\rm tr}}(\theta) ^{\top } (\phi - \theta),
  \end{align}
then problem~\eqref{eqn.imaml} coincides with problem~\eqref{eqn.maml}.
  
  The adapted parameter ${\phi}^{\rm im}( \mathcal{D}_k^{\rm tr}  |  \theta)$ in \eqref{eq:pertask_para_im} can be explained in terms of the proximal mapping for the per-task training loss $L_{\mathcal{D}_k^{\rm tr}}(\phi )$~\cite{Zhou2019_proxmaml}.
  This function is defined as
  \begin{align}\label{eq:prox_func}
    \mathrm{prox}_{L_{{\cal D}_k^{\rm tr}}, \lambda}(\theta) 
    = \mathop{\arg\min}_{\phi} \frac{\lambda}{2}\|\phi-\theta\|^{2} +
      L_{\mathcal{D}_{k}^{\rm tr}}(\phi ) .
  \end{align} 
  Therefore, the constraint in \eqref{eq:pertask_para_im} can be written as 
  \begin{align}\label{eq:prox_im}
    {\phi}^{\mathrm{im}}( \mathcal{D}_{k}^{\mathrm{tr}} |  \theta)
  = \mathrm{prox}_{L_{{\cal D}_k^{\rm tr}}, \lambda}(\theta) .
  \end{align}
  Based on the chain rule of differentiation and the implicit function theorem, the gradient descent update of hyperparameter $\theta$ during meta-learning is obtained from problem~\eqref{eqn.imaml}-\eqref{eq:pertask_para_im}  as \cite{rajeswaran2019_imaml}
  \begin{equation}
    \label{eq:outer_update_im}
  \theta \leftarrow \theta-\frac{\beta}{K}\sum_{k=1}^{K}\left(I+\frac{1}{\lambda} \nabla^{2}_{\theta} L_{{\mathcal{D}}_{k}^{\mathrm{tr}}}\left(\theta\right)\right)^{-1} \nabla_{{\phi}} L_{ \mathcal{D}_{k}^{\mathrm{tr}}}({\phi})  | _{{\phi} = {\phi}^{\mathrm{im}}( \mathcal{D}_{k}^{\mathrm{tr}} |  \theta)}.
  \end{equation}
The iMAML algorithm is summarized in Algorithm~\ref{alg:imaml_train}. 
\begin{algorithm}[H]
  \caption{iMAML}
  \label{alg:imaml_train}
\begin{algorithmic}[1]
  \State{\textbf{Input:}~
  Initial iterate $\theta$;
    meta-training data $\mathcal{D}$;
    loss function $\ell(z  |  \phi)$;
    stepsize $\beta$; regularization weight $\lambda$}
    \While{not converged}
    \State{Sample batch of tasks $\tilde{\mathcal{K}} \subseteq \mathcal{K} = \{1,\ldots, K\}$}
      \For{all $k \in \tilde{\mathcal{K}}$}
      \State{{Compute per-task ${\phi}^{\rm im} ( \mathcal{D}_k^{\rm tr}  |  \theta)$ by solving problem \\
      \hspace{1cm} \eqref{eq:pertask_para_im}}}
    \EndFor
  \State{{Update hyperparameter vector  $\theta$ via the gradient update \eqref{eq:outer_update_im}}}
  \EndWhile
  \end{algorithmic}
\end{algorithm}

\subsection{Implicit MAML for Ridge Regression}
\label{subs:imaml_RR}
In this subsection, we instantiate the iMAML scheme for the example of linear prediction via ridge regression.
Consider a linear prediction problem in which each $k$ task amounts to the optimization of a linear prediction over the model parameter vector ${\phi} \in \mathbb{R}^d$ given input vector $x_{k} \in \mathbb{R}^{d}$, which is computed as
\begin{align}
  \hat{y}_{k} = {\phi}^\top x_{k}.
\end{align}
The training data set is given as $\mathcal{D}_k^\text{tr} = (X_k^\text{tr}, \mathrm{y}_k^\text{tr})$, where \\
$X_k^\text{tr}=[x_{k,1}^\top,\ldots,x_{k,N^{\rm tr}}^\top]^\top$ is the $N^{\rm tr} \times d$ matrix that contains by row the transpose of the input vectors $\{ x_{k,n} \}_{n=1}^{N^{\rm tr}}$, and $\mathrm{y}_k^\text{tr}=[y_{k,1},\ldots,y_{k,N^{\rm tr}}]^\top$ as the $N^{\rm tr} \times 1$ vector that collects the corresponding labels $\{ y_{k,n}\}_{n=1}^{N^{\rm tr}}$. Similarly, we  define $\mathcal{D}_k^\text{va} = (X_k^\text{va}, \mathrm{y}_k^\text{va})$ as $X_k^\text{va}=[x_{k,1}^\top,\ldots,x_{k,N^\text{va}}^\top]^\top$ as the $N^\text{va}\times d$ input data and $\mathrm{y}_k^\text{va}=[y_{k,1},\ldots,y_{k,N^\text{va}}]^\top$ as the $N^\text{va}\times 1$ target labels for the validation data of the $k$-th task.

Given the task-specific model parameter ${\phi}$, the mean squared error (MSE) prediction loss given the data set $\mathcal{D}_k^\text{tr}$ can be written as
\begin{align}\label{eq:imaml_linear}
  L_{\mathcal{D}_k^\text{tr}}({\phi}) = \| X_k^\text{tr}{\phi} - \mathrm{y}_k^\text{tr} \|^2.
\end{align}
%
With the quadratic loss in~\eqref{eq:imaml_linear},
%
the solution of the inner problem \eqref{eq:pertask_para_im}, i.e.,  the proximal function in \eqref{eq:prox_im}, can be obtained analytically as
\begin{align}
  {\phi}^\text{im}(\mathcal{D}_k^\text{tr}  |  \theta) = 
  \Big(X_k^{\text{tr} \top} X_k^\text{tr} + \frac{\lambda}{2} I \Big)^{-1}
  \Big(X_k^{\text{tr} \top} \mathrm{y}_k^\text{tr} + \frac{\lambda}{2} \theta \Big).
\end{align}
As a result, the solution of the meta-training problem \eqref{eqn.imaml} can also be  computed in closed form as
\begin{align}
  \hat{\theta} &= \argmin_{\theta} \sum_{k=1}^K ||\tilde{X}_k^\text{va}\theta - \tilde{\rm y}_k^\text{va}||^2 \nonumber\\
&= \tilde{X}^{\dagger} \tilde{\rm y},
\label{eq:closed_from_sol_meta_scalar}
\end{align}
where the $N^\text{va}\times d$ matrix $\tilde{X}_k^\text{va}$ contains by row the transpose of the pre-conditioned input vectors $\{ \frac{\lambda}{2} (A_k^\text{tr})^{-1} x_{k,n}^\text{va} \}_{n=1}^{N^\text{va}}$, with $A_k^\text{tr} = (X_k^\text{tr})^\top X_k^\text{tr} + \frac{\lambda}{2} I$; $\tilde{y}_k^\text{va}$ is $N^\text{va}\times 1$ vector containing vertically the transformed outputs $\{ y^\text{va}_{k,n} - (\mathrm{y}_k^\text{tr})^\top X_k^\text{tr} (A_{k}^\text{tr})^{-1} x_{k,n}^\text{va} \}_{n=1}^{N^\text{va}}$; the $KN^\text{va}\times d$ matrix $\tilde{X} = [\tilde{X}_1^{\text{va}}, \ldots,\tilde{X}_K^{\text{va}}]^\top$ stacks vertically the $N^\text{va}\times d$ matrices $\{\tilde{X}_k^\text{va}\}_{k=1}^K$; and the $KN^\text{va}\times 1$ vector $\tilde{\rm y} = [\tilde{\rm y}_{1}^{\text{va}},\ldots,\tilde{\rm y}_{K}^{\text{va}}]^\top$ stacks vertically the $N^\text{va}\times 1$ vectors $\{\tilde{\rm y}_k^\text{va}\}_{k=1}^K$. Further discussions  can be found in~\cite{denevi2018_l2l_common_mean,bai2021important,chen2022_bamaml}.

\subsection{Sharp-MAML}

The nested  structure of the MAML problem~\eqref{eqn.maml}-\eqref{eq:pertask_para_ma} may cause the optimization landscape in the space of the hyperparameter $\theta$ to have many saddle points and local minima.
To illustrate this point, Figure~\ref{fig:MAML_JL_losslandscape} shows the loss landscapes of  MAML on $\mathcal{L}^{\rm ma}_{\mathcal{D}^{\rm mtr}}(\theta)$ given by \eqref{eqn.maml},  as compared to a standard joint learning model (see~\cite{abbas2022_sharpmaml} for details). 
Reference~\cite{abbas2022_sharpmaml} provides a formal statement of the observation in Figure~\ref{fig:MAML_JL_losslandscape} that the loss landscape of MAML is more involved as compared to joint learning, making the optimization problem potentially difficult to solve. 
\begin{figure}[ht]
  \centering
  \includegraphics[width=.7\textwidth]{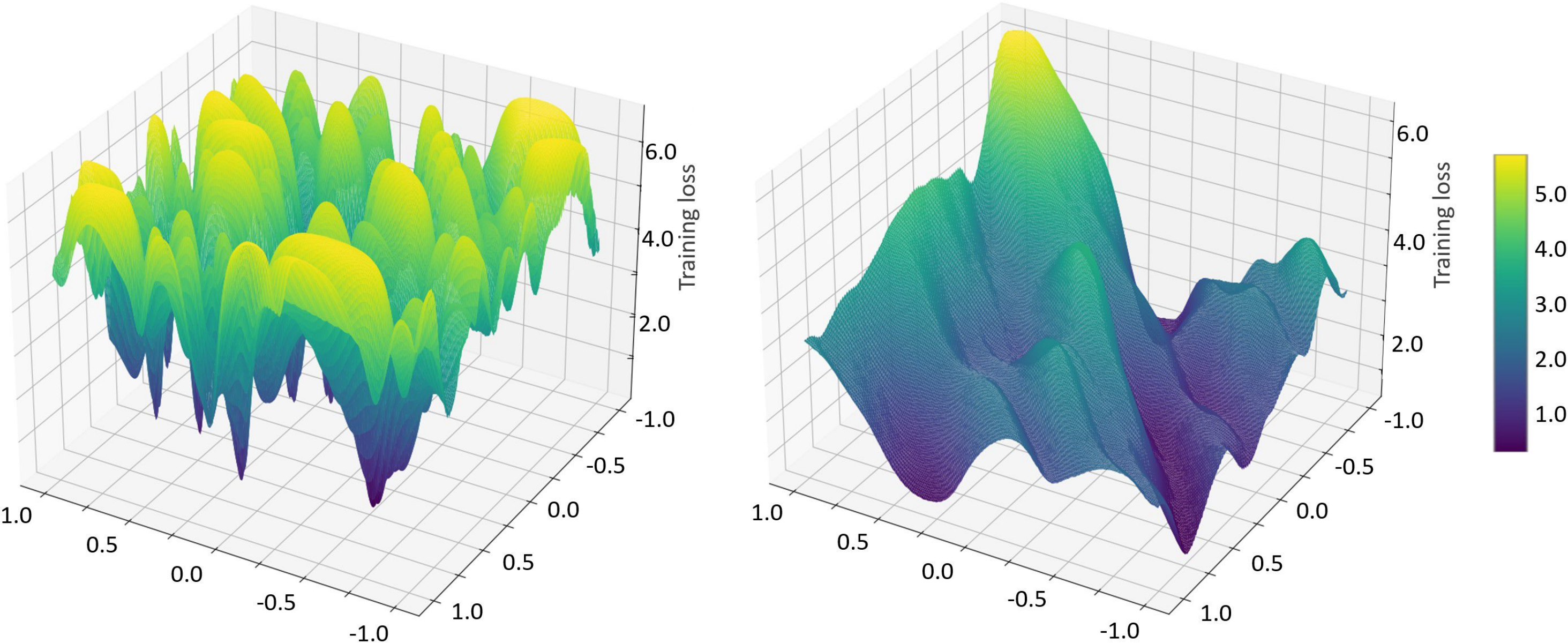}
  \caption{Loss landscapes for the MAML loss  $\mathcal{L}^{\rm ma}_{\mathcal{D}^{\rm mtr}}(\theta)$ (\textbf{left}) and for joint learning (see~\eqref{eq:ch14:joint2}) (\textbf{right}) for a single task on CIFAR-100 dataset~\cite{abbas2022_sharpmaml}.}
  \label{fig:MAML_JL_losslandscape}
\end{figure}

While some of the local minimizers in the loss landscape of MAML are indeed effective few-shot learners, there are a number of sharp local minimizers in MAML that may have undesired generalization performance.
Therefore, it is of interest to develop a method that can find local minimizers with better generalization ability, which motivates the Sharp-MAML algorithm introduced in~\cite{abbas2022_sharpmaml}.

Sharp-MAML is inspired by the recent development of the sharpness-aware minimization (SAM) algorithm~\cite{foret2020_sam}, which  avoids  sharp  local minimizers of the loss landscape to improve the generalization ability of the algorithm.
The idea is to find a solution such that the maximum loss of the parameter in the neighborhood of this solution is minimized.

Since MAML is formulated in \eqref{eqn.maml} as a bilevel optimization problem, ideally  the solutions of both inner-level and outer-level problems should have good generalization.
Sharp-MAML applies the idea of SAM  to both the inner- and outer-level problems~\eqref{eqn.maml} and \eqref{eq:pertask_para_ma}.
The resulting minimax problem is approximated by adding perturbations along the gradient ascent direction for both inner- and outer-level parameters, which are denoted as $\epsilon_k(\theta)$ and $\epsilon(\theta)$.
The loss function of Sharp-MAML is accordingly given as
\begin{subequations}
	\begin{align}
  &\mathcal{L}^{\mathrm{sm}}_{\mathcal{D}^{\rm mtr}}\left(\theta \right)
  =\frac{1}{K} \sum_{k=1}^{K} L_{\mathcal{D}_{k}^{\mathrm{va}}}\left({\phi}^{\mathrm{sm}}( \mathcal{D}_{k}^{\mathrm{tr}} |  \theta) \right) \\
  \mathrm{ s.t. }~~ 
  &{\phi}^{\mathrm{sm}}( \mathcal{D}_{k}^{\mathrm{tr}} |  \theta)=\theta+\epsilon\left(\theta\right)-\alpha \nabla_{\theta} L_{\mathcal{D}_{k}^{\mathrm{tr}}}\left(\theta+\epsilon\left(\theta\right)+\epsilon_{k}\left(\theta\right)\right),
  \label{eq:pertask_sm}
\end{align}
\end{subequations}
where the perturbations $\epsilon_k(\theta)$ and $\epsilon(\theta)$ are given as
\begin{subequations}
	\begin{align}
  \epsilon_k(\theta) &= \alpha_{\rm in} 
  \nabla_{\theta} L_{\mathcal{D}_k^{\rm tr}}(\theta )/\|\nabla_{\theta} L_{\mathcal{D}_k^{\rm tr}}(\theta )\|_2 \\
  \epsilon(\theta) &= \alpha_{\rm ot} 
  { \nabla_{\theta} {L}_{\mathcal{D}_k^{\rm va}} (\tilde{\phi}(\mathcal{D}_{k}^{\mathrm{tr}} |  \theta))}/{\|\nabla_{\theta} {L}_{\mathcal{D}_k^{\rm va}} (\tilde{\phi}(\mathcal{D}_{k}^{\mathrm{tr}} |  \theta))\|_2} \\
  \text{with }
  &\tilde{\phi}( \mathcal{D}_{k}^{\mathrm{tr}}  |  \theta) 
  = \theta-\alpha \nabla_{\theta} L_{\mathcal{D}_{k}^{\mathrm{tr}}}\left(\theta+\epsilon_{k}\left(\theta\right) \right),
\end{align}
\end{subequations}
with $\alpha_{\rm in}$ and $\alpha_{\rm ot}$ denoting the scalar hyperparameters for inner and outer-level perturbations to be used in \eqref{eq:pertask_sm}.

The outer-level update for Sharp-MAML is
\begin{equation}
  \theta \leftarrow \theta-\frac{\beta}{K} \sum_{k=1}^{K} \nabla_{\theta} L_{\mathcal{D}_{k}^{\mathrm{va}}}\left({\phi}^{\mathrm{sm}}(\mathcal{D}_{k}^{\mathrm{tr}}  |  \theta) \right).
\end{equation}

\section{First-Order Optimization-Based Meta-Learning} 
\label{sec:first_order_algorithms}
In this section, we {cover} optimization-based meta-learning algorithms that, 
unlike the second-order methods described in Section~\ref{sec:second_order_algorithms}, do not require computing the second-order Hessian of the loss function during training, leading to significantly reduced computational complexity.
These methods include first-order MAML ~\cite{Finn2017_maml}, ES-MAML, Reptile~\cite{nichol2018_reptile}, and Proximal MAML (Prox-MAML)~\cite{Zhou2019_proxmaml}.

\subsection{FOMAML}
First-order MAML (FOMAML), originally proposed in~\cite{Finn2017_maml}, uses the same formulation as MAML in \eqref{eqn.maml}.
  However, for the update of the hyperparameter $\theta$, FOMAML replaces the Jacobian $\nabla_{\theta} {\phi}^{\mathrm{ma}}(\mathcal{D}_{k}^{\mathrm{tr}} |  \theta)$ in \eqref{eq:maml_chainrule} by an identity matrix, hence foregoing the computation of the Hessian $\nabla_{\theta}^2 L_{\mathcal{D}_k^{\rm tr}} (\theta )$.
  The outer update of FOMAML is given by
  \begin{equation}
    \label{eq:outer_update_fo}
  \theta \leftarrow \theta-\frac{\beta}{K}\sum_{k=1}^{K} \nabla_{{\phi}} L_{\mathcal{D}_{k}^{\mathrm{va}}}({\phi} ) | _{{\phi} = {\phi}( \mathcal{D}_{k}^{\mathrm{tr}} |  \theta)},
  \end{equation}
  where the function ${\phi}^{\mathrm{fo}}( \mathcal{D}_{k}^{\mathrm{tr}} |  \theta)$ is computed by
  \begin{align}\label{eq:pertask_para_fo}
    {\phi}^{\mathrm{fo}}(\mathcal{D}_{k}^{\mathrm{tr}} |  \theta)=\theta-\alpha \nabla_{\theta} L_{ \mathcal{D}_{k}^{\mathrm{tr}}}\left(\theta\right).
  \end{align}
  The FOMAML algorithm is summarized in Algorithm \ref{alg:fomaml_train}.  
  \begin{algorithm}[H]
    \caption{FOMAML}
    \label{alg:fomaml_train}
  \begin{algorithmic}[1]
    \State{\textbf{Input:}~
    Initial iterate $\theta$; meta-training data $\mathcal{D}$;
    loss function $\ell(z  |  \phi)$;
    stepsizes $\alpha, \beta$}
    \While{not converged}
    \State{Sample batch of tasks $\tilde{\mathcal{K}} \subseteq \mathcal{K} = \{1,\ldots, K\}$}
      \For{all $k \in \tilde{\mathcal{K}}$}
        \State{ {Compute per-task parameter ${\phi}^{\rm fo}(\mathcal{D}_k^{\rm tr}  |  \theta)$ using \eqref{eq:pertask_para_ma}}}
      \EndFor
    \State{ {Update hyperparameter vector  $\theta$ via the gradient update \eqref{eq:outer_update_fo}}}
    \EndWhile
    \end{algorithmic}
  \end{algorithm}

\subsection{ES-MAML}
ES-MAML~\cite{song2019_esmaml} addresses the MAML problem in \eqref{eqn.maml} via evolution strategies (ES), a black-box optimization algorithm~\cite{Beyer2002}.  
In a nutshell,
similar to MAML, the task-specific parameter ${\phi}^{\mathrm{es}}$ is also obtained via one-step gradient update initialized at the hyperparameter $\theta$. 
The difference with MAML concerns the meta-update in lines 5 and 8 of Algorithm~\ref{alg:maml_train_0}, in which
the gradient 
${\nabla}_{\theta} L_{\mathcal{D}_{k}^{\mathrm {tr}}}\left(\theta \right)$ is replaced with the ES multi-point gradient estimator.
Accordingly, 
the update  of the hyperparameter $\theta$ is obtained as
\begin{align}
  \label{eq:outer_update_es}
  &\theta \leftarrow \theta-\frac{\beta}{K}\sum_{k=1}^{K}\left(I-\alpha H_{k}^{\rm es}\right) \hat{\nabla}_{{\phi}} L_{\mathcal{D}_{k}^{\mathrm{va}}}({\phi}) | _{{\phi} = {\phi}( \mathcal{D}_{k}^{\mathrm{tr}} |  \theta)}\\
  \label{eq:outer_update_es2}
\text{or as~~} &\theta \leftarrow \theta-\frac{\beta}{K}\sum_{k=1}^{K} \hat{\nabla}_{{\phi}} L_{\mathcal{D}_{k}^{\mathrm{va}}}({\phi}) | _{{\phi} = {\phi}^{\mathrm{es}}( \mathcal{D}_{k}^{\mathrm{tr}} |  \theta)},
\end{align}    
where $\hat{\nabla}_{{\phi}} L_{\mathcal{D}_{k}^{\mathrm{va}}}({\phi})$ is the ES multi-point gradient estimator of  
$\nabla_{{\phi}} L_{\mathcal{D}_{k}^{\mathrm{va}}}({\phi})$, which queries multiple points in the parameter space of the hyperparameter $\theta$, along with their loss function values. 
And $H_{k}^{\rm es}$ denotes the ES Hessian estimator of $\nabla_{\theta}^{2} L_{\mathcal{D}_{k}^{\mathrm {tr}}}(\theta)$. 
The gradient is estimated by the sample average of the function value difference in randomly sampled directions.
Specifically, the $n$-point ES gradient estimator of a loss function $L(\phi)$ is computed as
\begin{align}\label{eq:es_gradient_estimator}
  \hat{\nabla}_{\phi} L(\phi)
  = \frac{1}{n}\sum_{i=1}^n \Big[\frac{{u}_i}{\delta}
  \Big(L(\phi + \delta {u}_i) \Big)\Big],
\end{align}
where ${u}_i$ is a random vector sampled from distribution $\mathcal{N}(\mathrm{0}, \mathrm{I})$ in the same space as $\phi$;
and
$\delta$ is a fixed parameter that controls the distance between the two points used to estimate the gradient.

Analogously, the ES Hessian estimator $H^{\rm es}$ can be computed by applying the gradient estimator twice, yielding
\begin{align}
  H^{\rm es} = \frac{1}{\delta^2}
  \Big(\frac{1}{n}\sum_{i=1}^n L(\phi + \delta {u}_i)
  {u}_i {u}_i^{\top} 
  - \frac{1}{n}\sum_{i=1}^n L(\phi + \delta {u}_i) \mathrm{I} \Big).
\end{align}

The ES-MAML algorithm is summarized in Algorithm \ref{alg:esmaml_train}. 
\begin{algorithm}[t]
  \caption{ES-MAML}
  \label{alg:esmaml_train}
\begin{algorithmic}[1]
  \State{\textbf{Input:}~
  Initial iterate $\theta$;
  meta-training data $\mathcal{D}$;
  loss function $\ell(z  |  \phi)$;
  stepsizes $\alpha, \beta$}
  \While{not converged}
  \State{Sample batch of tasks $\tilde{\mathcal{K}} \subseteq \mathcal{K} = \{1,\ldots, K\}$}
      \For{all $k \in \tilde{\mathcal{K}}$}
      \State{{Compute per-task parameter ${\phi}^{\rm es}(\mathcal{D}_k^{\rm tr}  |  \theta)$ using \\
      \hspace{1cm}${\phi}^{\mathrm{es}}\left(\mathcal{D}_{k}^{\mathrm{tr}} |  \theta \right)=\theta-\alpha \hat{\nabla}_{\theta} L_{ \mathcal{D}_{k}^{\mathrm{tr}}}\left(\theta\right)$ estimated via \eqref{eq:es_gradient_estimator}}}
    \EndFor
  \State{{Update hyperparameter vector $\theta$ via the gradient update \eqref{eq:outer_update_es} \\
  \hspace{0.5cm} or \eqref{eq:outer_update_es2}}}
  \EndWhile
  \end{algorithmic}
\end{algorithm}

\subsection{Reptile}
Reptile~\cite{nichol2018_reptile} shares the same general {formulation} as FOMAML. Considering the one-step per-task gradient update
    \begin{align}
    \label{eq:pertask_para_re}
    {\phi}^{\mathrm{re}}(\mathcal{D}_{k}^{\rm tr} |  \theta )=
    \theta-\alpha \nabla_{\theta} L_{ \mathcal{D}_{k}^{\mathrm{tr}}}\left(\theta\right),
    \end{align}
which coincides with the FOMAML update~\eqref{eq:pertask_para_fo}.
Reptile follows an approach akin to the Fed Avg algorithm~\cite{mcmahan2017communication} to update the hyperparameter $\theta$.  
Specifically, the hyperparameter vector $\theta$ is updated in the direction of the average of the task-specific parameters in \eqref{eq:pertask_para_re} as
  \begin{equation}
    \label{eq:outer_update_re}
  \theta \leftarrow(1-\beta) \theta+\frac{\beta}{K}\sum_{k=1}^{K} {\phi}^{\mathrm{re}}(\mathcal{D}_k^{\rm tr} |  \theta),
  \end{equation}
  where $\beta > 0$ is a constant.
Reptile is summarized in Algorithm \ref{alg:reptile_train}.
\begin{algorithm}[H]
  \caption{Reptile}
  \label{alg:reptile_train}
\begin{algorithmic}[1]
  \State{\textbf{Input:}~
  Initial iterate $\theta$;
  meta-training data $\mathcal{D}$;
  loss function $\ell(z  |  \phi)$;
  stepsizes $\alpha,\beta$}
  \While{not converged}
  \State{Sample batch of tasks $\tilde{\mathcal{K}} \subseteq \mathcal{K} = \{1,\ldots, K\}$}
      \For{all $k \in \tilde{\mathcal{K}}$}
      \State{ {Compute per-task parameter ${\phi}^{\rm re}(\mathcal{D}_{k}^{\rm tr} |  \theta )$ using \eqref{eq:pertask_para_re}}}
    \EndFor
  \State{ {Update hyperparameter vector  $\theta$ by the gradient update \eqref{eq:outer_update_re}}}
  \EndWhile
  \end{algorithmic}
\end{algorithm}

\subsection{Prox-MAML}
\label{subs:proxmaml}
Prox-MAML~\cite{Zhou2019_proxmaml} adopts a bilevel formulation where the inner-level loss function is the same as that of iMAML in \eqref{eq:pertask_para_im}, and the outer-level meta-loss is the average of the inner-level loss across all tasks.
Mathematically, the bilevel problem is formulated as
  \begin{align}
  &\hspace{-0.3cm}\mathcal{L}^{\mathrm{pr}}_{\mathcal{D}^{\rm mtr}}\left(\theta\right)
  =\frac{1}{K} \sum_{k=1}^{K} L_{\mathcal{D}_{k}^{\rm mtr}}\left({\phi}^{\mathrm{pr}}(\mathcal{D}_{k}^{\rm mtr} |  \theta ) \right)+\frac{\lambda}{2}\left\|{\phi}^{\mathrm{pr}}(\mathcal{D}_{k}^{\rm mtr} |  \theta )-\theta\right\|^{2} \\
  \label{eq:pertask_para_pr}
  &\mathrm{s.t.}~~ 
  {\phi}^{\mathrm{pr}}(\mathcal{D}_{k}^{\rm mtr} |  \theta )=\mathrm{prox}_{L_{{\cal D}_k^{\rm mtr}},\lambda} (\theta),
  \end{align}
  where we have used the definition of proximal mapping in \eqref{eq:prox_func}.

    The gradient of the hyperparameter $\theta$ can be derived as
    \begin{align}\label{eq:grad_proxmaml}
    \! \nabla_{\theta} \mathcal{L}^{\rm pr}_{\mathcal{D}^{\rm mtr}}(\theta)
     \! =\! 
      &\frac{1}{K} \sum_{k=1}^{K} 
      \nabla_{\theta} {\phi}^{\mathrm{pr}}(\mathcal{D}_{k}^{\rm mtr} |  \theta ) \nabla_{{\phi}} \Big( L_{\mathcal{D}_{k}^{\rm mtr}}({\phi} )  \!+  \!\frac{\lambda}{2}\|{\phi}-\theta\|^{2}\Big)  \Big| _{{\phi}={\phi}^{\mathrm{pr}}(\mathcal{D}_{k}^{\rm mtr} |  \theta )} \nonumber \\
      &+ \frac{1}{K} \sum_{k=1}^{K} \lambda (\theta - {\phi}^{\mathrm{pr}}(\mathcal{D}_{k}^{\rm mtr} |  \theta )).
    \end{align}
    Furthermore, by \eqref{eq:pertask_para_pr},  for all $k \in [K]$, we have the equality 
    \begin{align}
      \nabla_{{\phi}} \Big( L_{\mathcal{D}_{k}}({\phi} )+\frac{\lambda}{2}\left\|{\phi}-\theta\right\|^{2}\Big)  \Big| _{{\phi}={\phi}^{\mathrm{pr}}(\mathcal{D}_{k} |  \theta )} = {0},
    \end{align}
    implying that the gradient $\nabla_{\theta} \mathcal{L}^{\rm pr}_{\mathcal{D}^{\rm mtr}}(\theta)$ in \eqref{eq:grad_proxmaml} can be simplified as
    \begin{align}
      \nabla_{\theta} \mathcal{L}^{\rm pr}_{\mathcal{D}^{\rm mtr}}(\theta)
      =& \frac{1}{K} \sum_{k=1}^{K} \lambda (\theta - {\phi}^{\mathrm{pr}}(\mathcal{D}_{k}^{\rm mtr} |  \theta )).
    \end{align}
    It follows that the update equation for Prox-MAML is given as
    \begin{align}
      \label{eq:outer_update_pr}
    \theta \leftarrow \theta-\beta{\lambda\left(\theta-\frac{1}{K} \sum_{k=1}^{K} {\phi}^{\mathrm{pr}}\left(\mathcal{D}_{k} |  \theta \right)\right)}.
    \end{align}

The Prox-MAML algorithm is summarized in Algorithm \ref{alg:proxmaml_train}. 
\begin{algorithm}[H]
  \caption{Prox-MAML}
  \label{alg:proxmaml_train}
\begin{algorithmic}[1]
  \State{\textbf{Input:}~
  Initial iterate $\theta$;
  meta-training data $\mathcal{D}^{\rm mtr}$;
  loss function $\ell(z  |  \phi)$;
  stepsizes $\alpha,\beta$}
  \While{not converged}
  \State{Sample batch of tasks $\tilde{\mathcal{K}} \subseteq \mathcal{K} = \{1,\ldots, K\}$}
      \For{all $k \in \tilde{\mathcal{K}}$}
      \State{ {Compute per-task parameter ${\phi}^{\rm pr}(\mathcal{D}_{k}^{\rm tr} |  \theta )$ using \eqref{eq:pertask_para_pr}}}
    \EndFor
  \State{ {Update hyperparameter vector  $\theta$ via gradient update \eqref{eq:outer_update_pr}}}
  \EndWhile
  \end{algorithmic}
\end{algorithm}

\section{Bayesian Meta-Learning}
\label{subs:bayes_maml}
MAML optimizes a conventional frequentist learning process that outputs an optimized model parameter ${\phi}$ for each task $k$.
Frequentist learning is well known to be ineffective at quantifying uncertainty, and at providing well-calibrated decision (see e.g., {\cite{simeoneCUP,zecchin2022robust}}).
In contrast, Bayesian learning, 
retains information about uncertainty in the model parameter space by evaluating,
ideally, the posterior distribution, $p(\phi  |  \mathcal{D}_{k}^{\mathrm{tr}}, \theta)$ of the task-specific parameter ${\phi}$,
given the training data set $\mathcal{D}_{k}^{\mathrm{tr}}$.
According to the Bayes rule, the  posterior distribution is
\begin{align}\label{eq:post_bayesrule}
  p(\phi  |  \mathcal{D}_{k}^{\mathrm{tr}}, \theta)
  =\frac{ p(\mathcal{D}_{k}^{\mathrm{tr}}  |  \phi ) 
  p(\phi  |  \theta)}
  {p(\mathcal{D}_{k}^{\mathrm{tr}}  |  \theta)},
\end{align}
where
$p(\mathcal{D}_{k}^{\mathrm{tr}}  |  \phi )$ is the likelihood  of parameter ${\phi}$; $p(\phi  |  \theta)$ is the prior of the parameter  ${\phi}$, which is allowed to depend on the hyperparameter $\theta$;
and $p(\mathcal{D}_{k}^{\mathrm{tr}}  |  \theta)$ is the evidence or the normalizing constant, with $p(\mathcal{D}_{k}^{\mathrm{tr}}  |  \theta) = \int p(\mathcal{D}_{k}^{\mathrm{tr}}  |  \phi ) 
p(\phi  |  \theta) d {\phi} $.
Importantly, by~\eqref{eq:post_bayesrule}, we assume that the prior distribution $p({\phi} | \theta)$ can be controlled via a vector $\theta$ of hyperparameters, paving the way for the use of meta-learning.
\begin{figure}[ht]
  \centering
  \includegraphics[width=.3\linewidth]{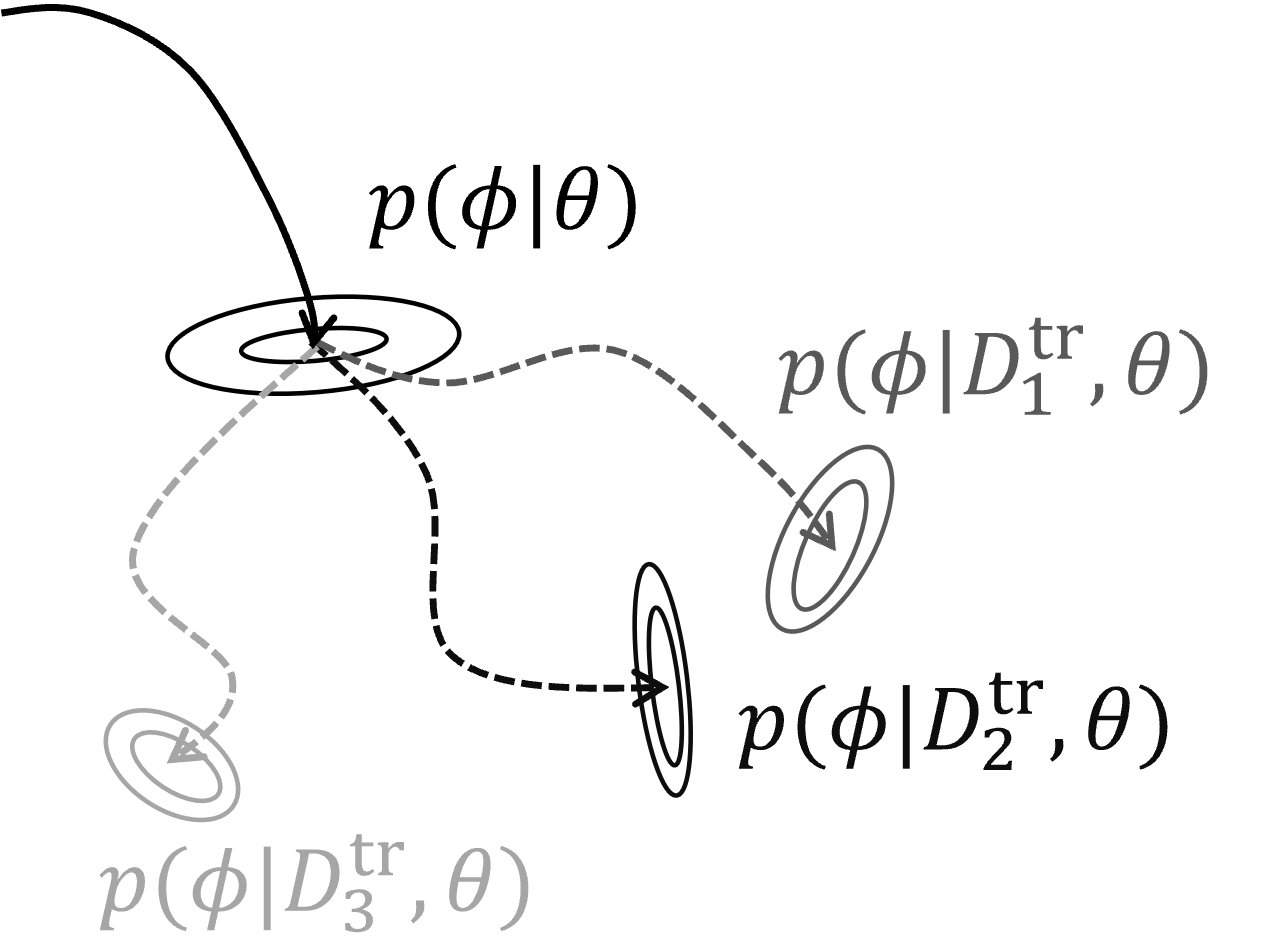}
  \caption{Illustration of Bayesian meta-learning: Bayesian meta-learning obtains a posterior distribution $p(\phi  |  \mathcal{D}_{k}^{\mathrm{tr}}, \theta)$ of the $k$-th task parameter by updating a prior distribution $p(\phi|\theta)$ shared across tasks and determined by the hyperparameter $\theta$.}
  \label{fig:bamaml}
\end{figure}

In problems of practical interest,
the normalizing constant in \eqref{eq:post_bayesrule} is typically intractable. 
Therefore, instead of the exact computation of the posterior \eqref{eq:post_bayesrule}, Bayesian learning algorithms obtain an approximation, $\hat{p}(\phi  |  \mathcal{D}_{k}^{\mathrm{tr}}, \theta)$.
Among the most common techniques, the posterior distribution can be approximated by Laplace approximation~\citep{grant2018recasting}, by parametric or non-parametric variational inference~\citep{nguyen2020_VAMPIRE,yoon2018_BMAML}, or via Monte Carlo sampling methods~\citep{Wang2020Bayesian} (see also reviews in \cite{andrieu2003introduction,simeoneCUP}).

Here we focus on the variational inference formulation, 
which minimizes the divergence between the approximate and the true posterior distributions.   
For two distributions $p(\phi)$ and $q(\phi)$ defined on a common space, the Kullback-Leibler (KL) divergence is defined as
\begin{align}
  \mathrm{D}_{\mathrm{KL}}(p(\phi) \| q(\phi))
  = \mathbb{E}_{p(\phi)} [\log p(\phi) - \log q(\phi)].
\end{align}
Bayesian meta-learning aims at optimizing the hyperparameter $\theta$ of the prior distribution $p({\phi}|\theta)$ that is shared across all tasks.
Bayesian learning via variational inference optimizes the approximate posterior
$\hat{p}({\phi}  |  \mathcal{D}_k^{\rm tr}, \theta)$ within a set $\mathcal{Q}$ of parametric distributions, e.g., the set of Gaussian distributions parameterized by the mean and covariance. 
Bayesian meta-learning aims at optimizing the prior distribution $p(\phi|\theta)$.
This is achieved by minimizing the KL divergence $\mathrm{D_{KL}}\big(\hat{p}(\phi  |  \mathcal{D}_{k}^{\mathrm{tr}}, \theta) \| {p}(\phi  |  \mathcal{D}_{k}^{\mathrm{tr}}, \theta)\big)$, equivalent to minimizing the variational free energy~\cite{bishop2006pattern,simeoneCUP}, given by
\begin{align}\label{eq:variational_free_energy}
  \hat{p}(\phi  |  \mathcal{D}_{k}^{\mathrm{tr}}, \theta)
=\underset{q\left(\phi\right) \in \mathcal{Q}}
{\arg \min }~
-\mathbb{E}_{q(\phi) }
\Big[\log p(\mathcal{D}_{k}^{\mathrm{tr}}  |  \phi ) \Big] 
+ \mathrm{D}_{\rm KL}\Big( q(\phi) \|p(\phi  |  \theta) \Big) .
\end{align}
The variational free energy in \eqref{eq:variational_free_energy} is the average training log-loss -- first term in \eqref{eq:variational_free_energy}, penalized by the deviation of the approximation $q({\phi})$ from the prior $q({\phi}|\theta)$ via the second term in \eqref{eq:variational_free_energy}. 
%
Accordingly, the meta-training loss for Bayesian meta-learning is given as
  \begin{align}
  \label{eq:bmaml}
  &\mathcal{L}^{\mathrm{ba}}_{\mathcal{D}^{\rm mtr}}\left(p(\phi  |   \theta) \right)=\frac{1}{K} \sum_{k=1}^{K} L_{\mathcal{D}_{k}^{\mathrm{va}}}\left(\hat{p}(\phi  |  \mathcal{D}_{k}^{\mathrm{tr}}, \theta) \right) \\
  &\label{eq:bmaml_lower}
  \mathrm{s.t.}~
  \hat{p}(\phi  |  \mathcal{D}_{k}^{\mathrm{tr}}, \theta)
  =\underset{q\left(\phi\right) \in \mathcal{Q}}
  {\arg \min }~
  -\mathbb{E}_{q(\phi) }
  \Big[\log p(\mathcal{D}_{k}^{\mathrm{tr}}  |  \phi ) \Big] 
  + \mathrm{D}_{\rm KL}\Big( q(\phi) \|p(\phi  |  \theta) \Big),
  \end{align}
  where the loss function $L_{\mathcal{D}_{k}^{\mathrm{va}}}(\hat{p}(\phi  |  \mathcal{D}_{k}^{\mathrm{tr}}, \theta) )$ is typically specified as the  negative log-loss computed on validation data based on the approximate posterior $\hat{p}(\phi  |  \mathcal{D}_{k}^{\mathrm{tr}}, \theta)$, i.e.,~\cite{yoon2018_BMAML} 
  \begin{align}
    L_{\mathcal{D}_{k}^{\mathrm{va}}}(\hat{p}(\phi  |  \mathcal{D}_{k}^{\mathrm{tr}}, \theta) )
    = -\log \int p(\mathcal{D}_{k}^{\mathrm{va}}  |  {\phi})
    \hat{p}(\phi  |  \mathcal{D}_{k}^{\mathrm{tr}}, \theta)
    d {\phi}.
  \end{align}
  The objective is typically estimated via the Monte Carlo sampling~\cite{yoon2018_BMAML}. 
    
  Theoretically, the performance of Bayesian meta-learning compared to MAML and iMAML has been established in~\cite{chen2022_bamaml}.
  Practically,
  there exist a variety of Bayesian meta-learning algorithms~\cite{grant2018recasting,yoon2018_BMAML,finn2018_PLATIPUS,nguyen2020_VAMPIRE,ravi2018_ABML}, which mainly differ in the definitions of the set $\mathcal{Q}$ used in~\eqref{eq:bmaml_lower}, and in the approximation methods used to approximate the solution of the variational free energy minimization problem~\eqref{eq:bmaml_lower}.
  BMAML~\cite{yoon2018_BMAML} adopts a non-parametric variational inference approximation method, which approximates  the posterior $p(\phi  |  \mathcal{D}_{k}^{\mathrm{tr}}, \theta)$
  via a set of particles $\bphi_k = \{\phi_{k,1},\dots,\phi_{k,M}\}$, 
  and also specifies the prior distribution $p(\phi  |   \theta)$ via a set of particles $\btheta=\{\theta_{1},\dots,\theta_{M}\}$.
  Specifically, BMAML adopts 
  the Stein Variational Gradient Descent (SVGD) algorithm~\cite{liu2016_svgd} to  update the particles $\bphi_k$ when addressing  problem~\eqref{eq:bmaml_lower}. 
  Accordingly, the updates for the per-task particles  $\bphi_k$ and the set of hyperparameter vectors $\btheta$ are, respectively, given as
  \begin{subequations}
  	 \begin{align}
    \label{eq:update_Theta_bmaml}
 \bphi_{k}(\mathcal{D}_k^{\rm tr}  |  \btheta) &\leftarrow 
  \operatorname{SVGD}
  (\btheta, \mathcal{D}_{k}^{\mathrm{tr}}, \alpha) \\
  \label{eq:outer_update_bm}
  \text{and }~~~\btheta &\leftarrow \btheta- {\beta} \nabla_{\btheta}  \mathcal{L}^{\mathrm{ba}}_{\mathcal{D}^{\rm mtr}} \left(\btheta\right).
  \end{align}
  \end{subequations}
In~\eqref{eq:update_Theta_bmaml},
the SVGD update is given by~\cite{liu2016_svgd}
\begin{align}
  &\mathrm{SVGD}(\btheta, \mathcal{D}_k^{\rm tr},\alpha) \nonumber\\
  =&\theta + \alpha \frac{1}{M} \sum_{m=1}^M
  \Big[ \kappa(\theta^{m}, \theta) \nabla_{\theta^{m}} 
  \log p(\theta^{m}  |  \mathcal{D}_k^{\rm tr})
  +\nabla_{\theta^{m}} \kappa(\theta^{m}, \theta)
  \Big], \forall \theta \in \btheta,
\end{align}
where $\alpha >0$ is the step size, and $\kappa(\theta, \theta')$ is a positive definite kernel, e.g., the radial basis function kernel~\cite{yoon2018_BMAML}.

The BMAML algorithm is summarized in Algorithm~\ref{alg:bmaml_train}.
  \begin{algorithm}[t]
    \caption{BMAML}
    \label{alg:bmaml_train}
  \begin{algorithmic}[1]
    \State{\textbf{Input:}~
    Initial particles $\btheta$;
    meta-training data $\mathcal{D}$;
    loss function $\ell(z  |  \phi)$;
    stepsizes $\alpha, \beta$}
    \While{not converged}
    \State{Sample batch of tasks $\tilde{\mathcal{K}} \subseteq \mathcal{K} = \{1,\ldots, K\}$}
    \For{all $k \in \tilde{\mathcal{K}}$}
        \State{{Update per-task parameter particles ${\bphi}_k(\mathcal{D}_k^{\rm tr}  |  \btheta)$ using \eqref{eq:update_Theta_bmaml}}}
      \EndFor
    \State{{Update hyperparameter vectors $\btheta$ via the SVGD update \eqref{eq:outer_update_bm}}}
    \EndWhile
    \end{algorithmic}
  \end{algorithm}

\subsection{Discussion on Empirical Performance}
In this subsection,
we evaluate the empirical performance on regression and classification tasks of some of the meta-learning algorithms introduced in this section.

We first consider the standard benchmark regression problem in which testing tasks are characterized by different ground-truth sinusoidal regression functions~\cite{Finn2017_maml}.
\begin{figure}[t]
  \centering
  \begin{subfigure}[t]{0.3\textwidth}
  \centering
  \includegraphics[width=.95\linewidth]{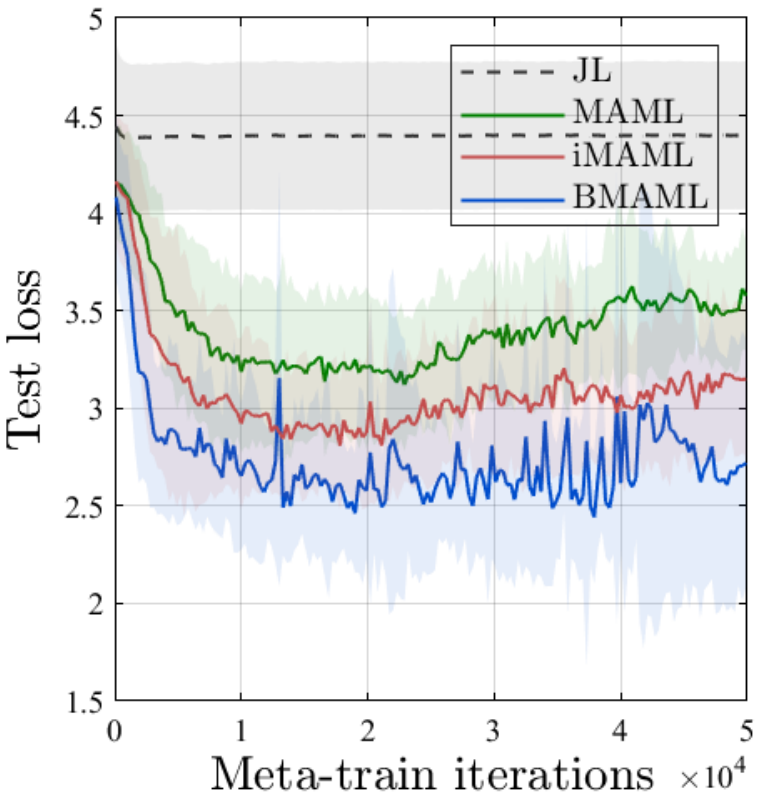}
  \caption{Training curve}
  \label{sfig:testcurve_T100_N10}
  \end{subfigure}%
  \begin{subfigure}[t]{0.3\textwidth}
  \centering
  \includegraphics[width=.91\linewidth]{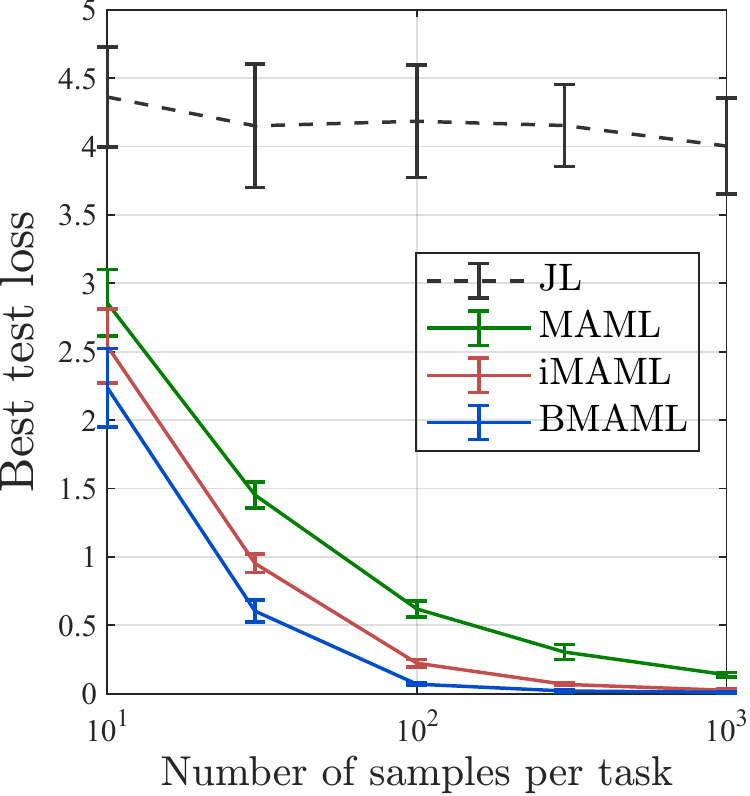}
  \caption{Best test loss vs. $N$}
  \label{sfig:best_test_loss_vs_N_T10_v6}
  \end{subfigure}
  \begin{subfigure}[t]{0.3\textwidth}
    \centering
    \includegraphics[width=.95\linewidth]{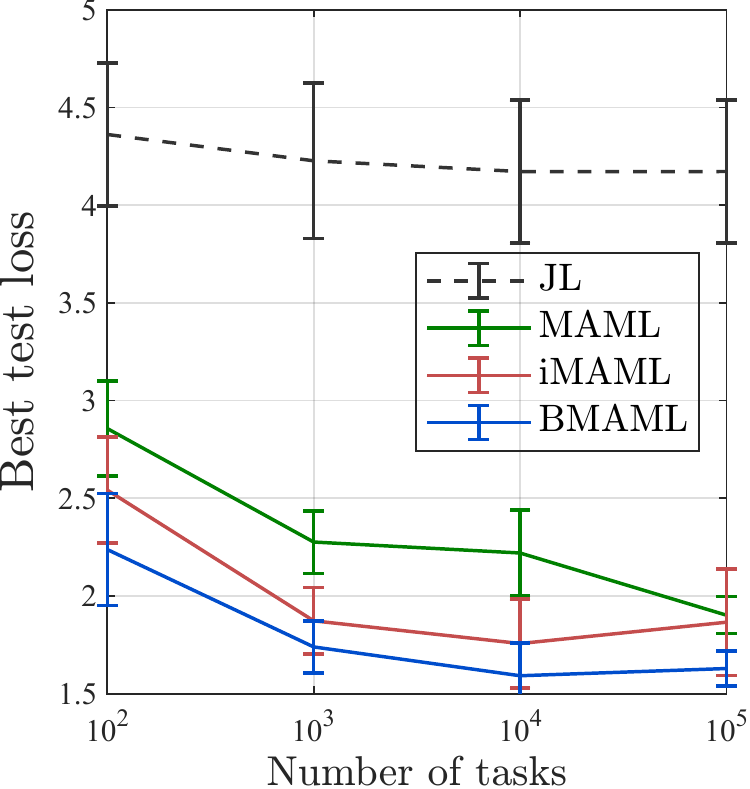}
    \caption{Best test loss vs. $K$}
    \label{sfig:best_test_loss_vs_T_N10_v6}
    \end{subfigure}
  \caption{Comparison of the performance of joint learning (JL), MAML, iMAML and BMAML in the sinusoidal regression problem introduced in~\cite{chen2022_bamaml}:
  (a) Test loss vs. meta-training iterations; 
  (b) best test loss vs. number of data per task $N$;
  (c) best test loss vs. number of tasks $K$.
  }
  \label{fig:test_loss_NT}
\end{figure}
%
%
{Compare the empirical performance of joint learning (JL), MAML, iMAML and BMAML with the same neural network architecture (see~\cite{yoon2018_BMAML,nguyen2020_VAMPIRE,chen2022_bamaml} for details) in Figure~\ref{fig:test_loss_NT}.}
For more results under different hyperparameters, refer to~\cite{yoon2018_BMAML,nguyen2020_VAMPIRE,chen2022_bamaml}. 
JL is observed to be unable to effectively  adapt to new tasks, in contrast to  meta-learning methods.
Among meta-learning algorithms, BMAML is observed to outperform iMAML and MAML when a small number of training data and tasks are given because of its ability to manage uncertainty. All three meta-learning methods have close to zero test loss when a sufficiently large number of training data per task, or when a sufficiently large number of tasks are given.

\begin{table}[ht]
  \fontsize{10}{10}\selectfont
  \caption{Accuracy (\%) of few-shot image classification on Mini-Imagenet (5-way).}
  \label{tab:acc_miniimagenet}
  \begin{center}
  \begin{tabular}{l cc}
  \toprule
  Algorithms  & 5-way 1-shot & 5-way 5-shot\\
  \midrule
  MAML \cite{Finn2017_maml}                         
  & 48.70 
  &63.11 
  \\ iMAML~\cite{rajeswaran2019_imaml}
  & 49.30 
  & - \\
  CAVIA~\cite{zintgraf2019_cavia}
  & 47.24 
  & 59.05 
  \\ FOMAML~\cite{nichol2018_fomaml} 
  & 48.07 
  &63.15 
  \\ Reptile~\cite{nichol2018_reptile} 
  & 49.97 
  &65.99 
  \\ Prox-MAML~\cite{Zhou2019_proxmaml} 
  & 50.77 
  & 67.43 
  \\ BMAML~\cite{yoon2018_BMAML}
  & 49.17 
  &64.23 
  \\ Sharp-MAML~\cite{abbas2022_sharpmaml}                         
  & {50.28} &65.04 \\
  \bottomrule
  \end{tabular}
  \end{center}
\end{table}

{We then turn to the more complex benchmark of few-shot image classification on the Mini-Imagenet dataset.}
The results reported in Table~\ref{tab:acc_miniimagenet} highlight that
Sharp-MAML outperforms other meta-learning methods in this setting,
with BMAML generally outperforming other non-Bayesian methods.
For results on other datasets and for further discussion, we refer to~\cite{yin2020_maml_pp,Zhou2019_proxmaml,abbas2022_sharpmaml,nguyen2020_VAMPIRE}.

\section{Modular Meta-Learning}
\label{sec:ch2:modular}

The methods described thus far aim at parametric generalization. In contrast, modular meta-learning aims at fast \textit{combinatorial generalization}. Rather than transferring knowledge across tasks via hyperparameter, modular meta-learning generalizes to new tasks by optimizing a set of reusable neural network modules that can be composed in different ways to solve a new task. By reusing modules across tasks, modular meta-learning makes, in a sense,  ``infinite use of finite means”, and represents a scalable approach towards generalization, particularly in settings which are heavily constrained in terms of data \cite{alet2018modular, alet2019neural, nikoloska2021black}. 

More formally, modular meta-learning assumes a shared module set $\mathcal{M} = [\theta^{(1)}, ..., \theta^{(M)}]$ of size $M$ which is optimized during meta-training. During meta-testing, the module-set is fixed, and a subset of the modules are selected, combined and applied to the new task. This enables an efficient adaptation based on limited data via the selection of modules from the set $\mathcal{M}$. 

Let $S_k (\mathcal{M})$ denote the assignment of a subset of modules from set $\mathcal{M}$ to a particular task $k$. Let also ${\phi}^{(S_k (\mathcal{M}))}$ represent the model obtained by combining the selected modules $S_k (\mathcal{M})$.  The meta-training loss for modular meta-learning problem is given by 
  \begin{subequations}
  	  \begin{align}
    \label{eqn.mod}
  &\mathcal{L}^{\mathrm{mod}}_{\mathcal{D}^{\rm mtr}}\left(\mathcal{M} \right)
  =\frac{1}{K} \sum_{k=1}^{K} L_{\mathcal{D}_{k}^{\mathrm{va}}}\left({\phi}^{\mathrm{mod}}( \mathcal{D}_{k}^{\mathrm{tr}} |  \mathcal{M}) \right) \\
  \label{eq:pertask_para_mod}
  &\mathrm{s.t.}~~ {\phi}^{\mathrm{mod}}( \mathcal{D}_{k}^{\mathrm{tr}} |  \mathcal{M}) \,\, = \,\, \underset{S_k (\mathcal{M})}{\text{arg min}} \,\, L_{\mathcal{D}_{k}^{\mathrm{va}}}\left({\phi}^{(S_k (\mathcal{M}))} \right).
  \end{align}
  \end{subequations}
The inner optimization in \eqref{eq:pertask_para_mod} selects the module set for task $k$, while the outer problem \eqref{eq:pertask_para_mod} optimizes over the module set $\mathcal{M}$. The outer problem in \eqref{eqn.mod} is typically tackled by gradient descent, while the optimization of the assignment in the inner problem \eqref{eq:pertask_para_mod} is a discrete optimization problem. Previous works have addressed this problem by adopting combinatorial optimization techniques like simulated annealing \cite{alet2018modular, alet2019neural}, or using reparametrization and gradient descent \cite{nikoloska2021black}.

Modular meta-learning is summarized in Algorithm \ref{alg:modular_train}.

  \begin{algorithm}[t]
    \caption{Modular Meta-learning}
    \label{alg:modular_train}
  \begin{algorithmic}[1]
    \State{\textbf{Input:}~
    Initial module set $\mathcal{M}$;
    meta-training data  $\mathcal{D}$;
    loss function $\ell(z  |  \phi)$;
    stepsizes $\alpha, \beta$}
    \While{not converged}
    \State{Sample batch of tasks $\tilde{\mathcal{K}} \subseteq \mathcal{K} = \{1,\ldots, K\}$}
      \For{all $k \in \tilde{\mathcal{K}}$}
        \State{{Compute assignment parameters  $S_k (\mathcal{M})$ via problem~\eqref{eq:pertask_para_mod} }}
        \State{{Compute shared module parameters $\mathcal{M}$  via problem~\eqref{eqn.mod} }}
      \EndFor
    \EndWhile
    \end{algorithmic}
  \end{algorithm}

\section{Model-Based Meta-Learning}\label{sec:ch2:cavia}
As an example of model-based meta-learning, in this section, we review
the  Context Adaptation Via Meta-Learning (CAVIA) algorithm introduced in~\cite{zintgraf2019_cavia}.
Unlike optimization-based schemes, the model is shared across all tasks, and not adapted based on training data from each task.
Therefore, the model parameter vector can be considered to be the hyperparameter $\theta$ shared across tasks.
What is adapted to each task is a \textbf{context parameter} $\phi$
that serves as an additional input vector to the model as illustrated in Figure~\ref{fig:cavia}.
The rationale for this choice is that vector $\phi$ can embed information about the task that can control the { output} of the model.
\begin{figure}[ht]
  \centering
  \includegraphics[width=.4\textwidth]{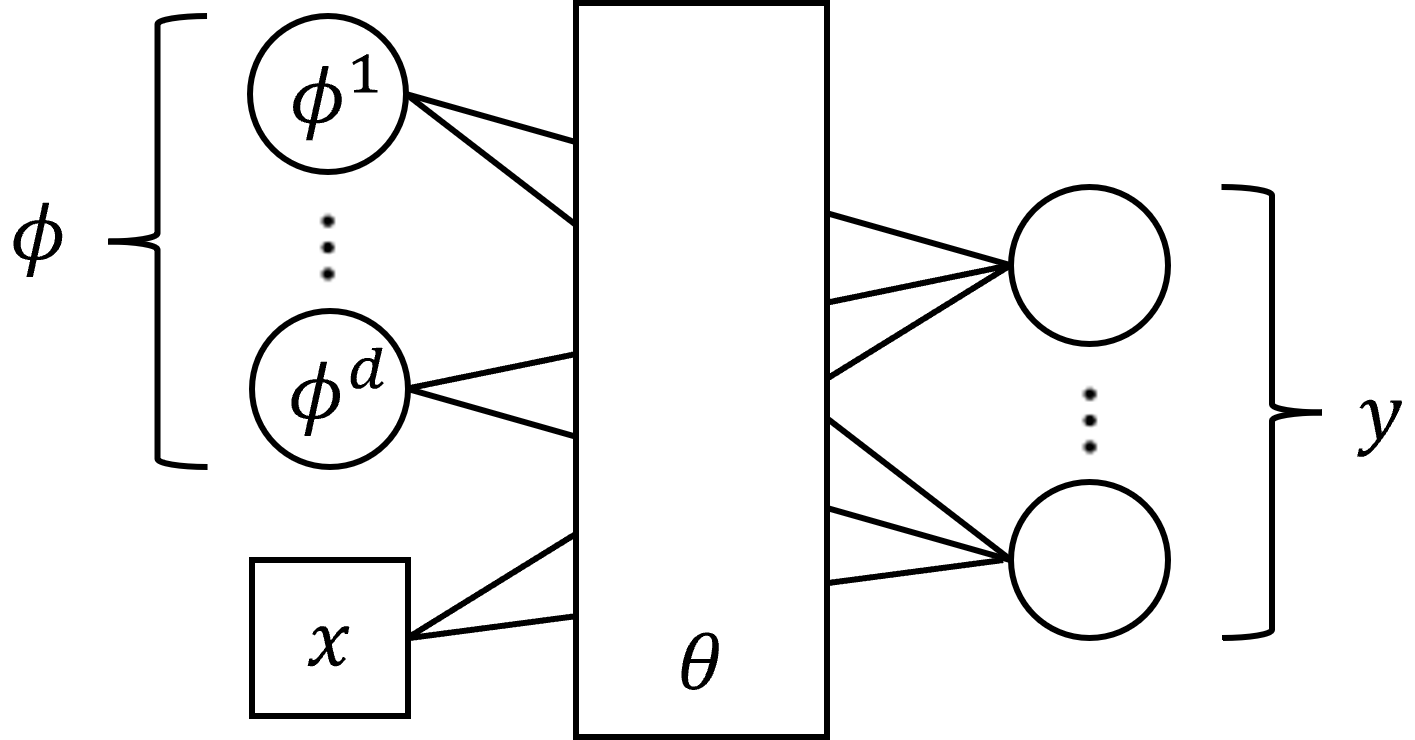}
  \caption{Illustration of CAVIA: 
  While the model parameter vector $\theta$ are shared across tasks,
  the per-task context parameter vector $\phi$, consisting of entries $\phi^1 \dots, \phi^d$, is adapted to the training data of each task, and it
  serves as an additional input vector to the model.}
  \label{fig:cavia}
\end{figure}
Let us define as $L_{\mathcal{D}_k^{\rm tr}}(\theta,\phi)$ the training loss for task $k$ given model parameter $\theta$ and context vector $\phi$.
By reducing the number of parameters to be updated, CAVIA can be more sample efficient than optimization-based scheme.
The meta-training loss function $\mathcal{L}^{\mathrm{ca}}_{\mathcal{D}^{\rm mtr}}\left(\theta\right)$ is  given by
  \begin{subequations}
  	\begin{align}
	&\mathcal{L}^{\mathrm{ca}}_{\mathcal{D}^{\rm mtr}}\left(\theta\right)=\frac{1}{K} \sum_{k=1}^{K} L_{\mathcal{D}_{k}^{\mathrm{va}}}\left(\theta, {\phi}^{\mathrm{ca}}( \mathcal{D}_{k}^{\mathrm{tr}}  |  \theta) \right) \\
	\label{eq:pertask_para_ca}
	&\mathrm{s.t.}~~ {\phi}^{\mathrm{ca}}( \mathcal{D}_{k}^{\mathrm{tr}}  |  \theta)=\phi_0 -\alpha \nabla_{\phi_0} L_{\mathcal{D}_{k}^{\mathrm{tr}}}\left(\theta, \phi_0 \right),
\end{align}
  \end{subequations}
where $\phi_0$ is some fixed initialization, e.g., the all-zero vector.
Setting $\phi_0 = 0$ and 
using the chain rule of differentiation,
the update of the hyperparameter $\theta$ during training procedure of CAVIA is given by  
\begin{align}
	\label{eq:outer_update_ca}
	\theta \leftarrow 
  &\theta - \beta \nabla_{\theta} \mathcal{L}_{\mathcal{D}^{\rm mtr}}^{\rm ca}(\theta) 
  \nonumber \\
  =&\theta - \frac{\beta}{K} \sum_{k=1}^{K} \nabla_{\theta} L_{\mathcal{D}_{k}^{\mathrm{va}}}\left(\theta, {\phi}^{\mathrm{ca}} \right)| _{{\phi}^{\mathrm{ca}} = {\phi}^{\mathrm{ca}}( \mathcal{D}_{k}^{\mathrm{tr}}  |  \theta)} 
  \nonumber \\
  &+\frac{\alpha\beta}{K} \sum_{k=1}^{K} \nabla_{\theta \phi_0 }^2 L_{\mathcal{D}_{k}^{\mathrm{va}}}(\theta, {\phi}_{0})
  \nabla_{{\phi}} L_{\mathcal{D}_{k}^{\mathrm{va}}}\left(\theta, {\phi} \right) | _{{\phi} = {\phi}^{\mathrm{ca}}( \mathcal{D}_{k}^{\mathrm{tr}}  |  \theta)}
  .
\end{align}
The CAVIA algorithm is summarized in Algorithm \ref{alg:cavia_train}.
\begin{algorithm}[H]
	\caption{CAVIA}
	\label{alg:cavia_train}
	\begin{algorithmic}[1]
		\State{\textbf{Input:}~
			Initial iterate $\theta$;
			meta-training data $\mathcal{D}$;
			loss function $\ell(z  |  \phi)$;
			stepsize $\beta$; regularization weight $\lambda$}
      \While{not converged}
      \State{Sample batch of tasks $\tilde{\mathcal{K}} \subseteq \mathcal{K} = \{1,\ldots, K\}$}
          \For{all $k \in \tilde{\mathcal{K}}$}
		\State{{Compute per-task parameter ${\phi}^{\rm ca}( \mathcal{D}_{k}^{\mathrm{tr}}  |  \theta)$ by solving~\eqref{eq:pertask_para_ca}}}
		\EndFor
		\State{{Update hyperparameter vector  $\theta$ via the gradient update \eqref{eq:outer_update_ca}}}
		\EndWhile
	\end{algorithmic}
\end{algorithm}

\section{Conclusions}
In this section, 
we have provided an overview of meta-learning algorithms by mostly focusing on 
optimization-based strategies.
We have categorized optimization-based algorithms into second-order and first-order algorithms based on whether they require  second-order derivatives during meta-training.
All algorithms were formulated as solutions to bilevel optimization problems, which follows a generic form  as
\begin{subequations}\label{eq:opt0-bi}
	\begin{align}
	&{\cal L}_{\mathcal{D}^{\rm mtr}}(\theta)
  =\frac{1}{K} \sum_{k=1}^{K} L_{\mathcal{D}_{k}^{\mathrm{va}}}\left({\phi}(\mathcal{D}_{k}^{\mathrm{tr}} |  \theta) \right) \\
	&~{\rm s. t.}~~
  {\phi}(\mathcal{D}_k^{\rm tr}|\theta)= \argmin_{\phi\in \mathbb{R}^{d}}~
  \tilde{L}_{{\cal D}_k^{\rm tr}}(\theta, \phi ), \label{eq:opt0-inner}
\end{align} 
\end{subequations}
where the lower-level function $\tilde{L}_{{\cal D}_k^{\rm tr}}(\theta, \phi )$ can be different from the upper-level function $L_{\mathcal{D}_{k}^{\mathrm{va}}}(\cdot)$ and it depends on both $\theta$ and $\phi$. 
Different meta-learning algorithms introduced in this section mainly differ in the corresponding inner-level problem~\eqref{eq:opt0-inner}.
In the next section, we will elaborate on the unifying perspective of meta-learning as a bilevel optimization problem, and review results on the convergence of gradient-based bilevel optimization algorithms for such problems.



%
%
\chapter{Bilevel Optimization for Meta-Learning}\label{sec:ch3:bilevel-opt}

In the previous sections, we have reviewed the meta-learning setup and the main meta-learning algorithms. 
In this section, we take a unified view to describe the operation of meta-learning algorithms through the lens of \emph{bilevel optimization}.

\section{A Brief Introduction to Bilevel Optimization}
Stochastic optimization methods, including stochastic gradient descent (SGD) \cite{robbins1951} are prevalent for solving large-scale machine learning problems. Plain-vanilla SGD is applicable to stochastic optimization problems such as  empirical risk minimization, which underlies conventional learning. As we have seen in Section~\ref{sec:ch2}, most meta-learning algorithms go beyond the single-level minimization structure of conventional learning by adopting nested formulations based on bilevel optimization  \cite{stackelberg}. 
In this section, we  review a unified bilevel optimization framework to describe meta-learning algorithms. We start this subsection by presenting a brief history of bilevel optimization, as well as by introducing its mathematical formulation. 

\begin{figure}[t]
  \centering
  \includegraphics[width=.9\textwidth]{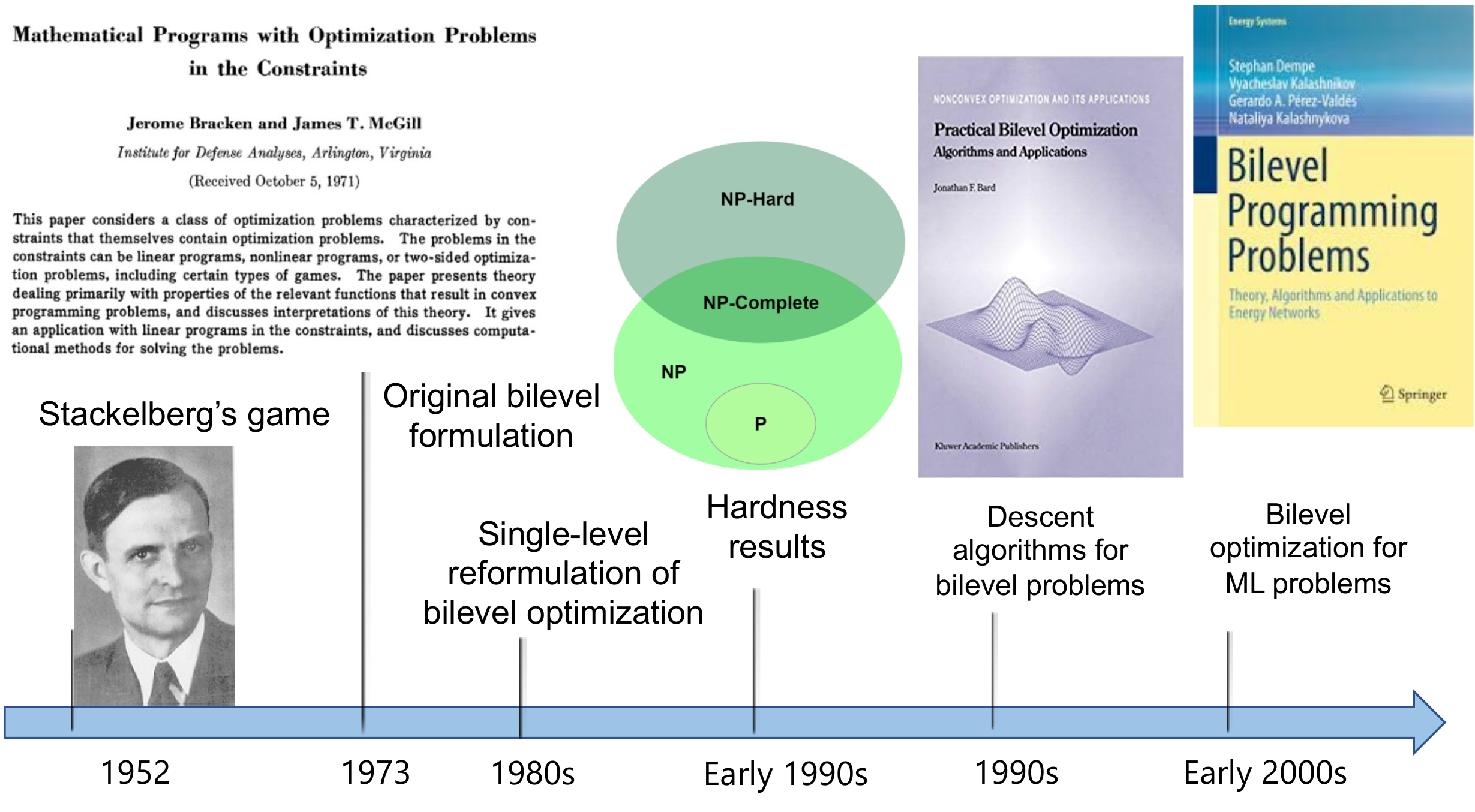}
  \caption{A brief history of bilevel optimization with covers from \cite{stackelbergimage,bracken1973mathematical,bard2013practical,dempe2015bilevel}.}
  \label{fig:BLO_history}
\end{figure}

\subsection{History of Bilevel Optimization}
\textbf{Bilevel optimization (BLO)} is a hierarchical optimization framework, whereby the set of solutions of the lower-level problem serves as a constraint for the upper-level problem \citep{ye1995optimality,colson2007overview}. 
It can be viewed as a generalization of two-stage stochastic programming \citep{shapiro2009}, in which the upper-level objective function depends on the optimal lower-level objective value rather than on the lower-level solution set. 
As illustrated in Figure \ref{fig:BLO_history}, BLO has a long history in operations research, which dates back to von Stackelberg's seminal work on leader-follower games in the 1950s \cite{stackelberg}. 
Research interest on BLO has intensified since the 1970s \cite{bracken1973mathematical}, with  researchers soon realizing that BLO is very challenging: Even an ``easy'' class of linear BLO problems is strongly NP-hard \cite{vicente1994bilevel}. 

Recently, bilevel optimization has gained growing popularity in
a number of machine learning applications such as
meta-learning \cite{rajeswaran2019_imaml}, reinforcement learning \cite{konda1999actor}, continual learning \cite{borsos2020coresets}, and image processing \cite{kunisch2013bilevel}. 
Many recent efforts have been made to address bilevel optimization problems.
One successful approach is to reformulate the bilevel problem as a single-level problem by replacing the lower-level problem by its optimality conditions~\citep{colson2007overview,kunapuli2008}, which belongs to the general class of mathematical programs with equilibrium constraints \cite{luo1996mathematical}. Recently, gradient-based methods for bilevel optimization have gained popularity, whereby the (stochastic) gradient of the upper-level problem is iteratively approximated  \citep{pedregosa2016hyperparameter,sabach2017jopt,franceschi2018bilevel,shaban2019truncated,grazzi2020iteration,ghadimi2018bilevel,hong2020ac,ji2020provably,chen2021single,khanduri2021momentum,guo2021randomized,yang2021provably}; see also two recent surveys \cite{dempe2020bilevel,liu2021investigating}.

\subsection{Generic Formulation}
Bilevel optimization problems of interest for meta-learning can be expressed in the form of the \textbf{stochastic bilevel problem} \cite{ghadimi2018bilevel,hong2020ac,chen2021solving,chen2021single,ji2020feb}
\begin{subequations}\label{opt0}
	\begin{align}
	&\min_{\theta\in \mathbb{R}^d}~~~{\cal L}(\theta):=\mathbb{E}_{\xi}\left[f\left(\theta, \phi^*(\theta);\xi\right)\right]
  ~~~~~~~~~~~~~~~{\rm\sf (upper)} \\
	&~{\rm s. t.}~~~~~\phi^*(\theta) = 
  \argmin_{\phi\in \mathbb{R}^{\hat{d}}}~\mathbb{E}_{\hat{\xi}}
  [g(\theta, \phi;\hat{\xi})]~~~~~~~~~~~{\rm\sf (lower)}, \label{opt0-low}
\end{align} 
\end{subequations}
where $f(\theta, \phi; \xi)$ and $g(\theta, \phi;\hat{\xi})$ are differentiable but possibly nonconvex functions of $\theta$ and $\phi$; and $\xi$ and $\hat{\xi}$ are random variables with given distributions $P(\xi)$ and $P(\hat{\xi})$, respectively.  
In \eqref{opt0}, the upper-level optimization problem over the upper-level variable $\theta\in  \mathbb{R}^d$ depends on the solution $\phi^*(\theta)$ of the lower-level optimization over vector $\phi\in  \mathbb{R}^{\hat{d}}$. Crucially, the solution of the lower-level problem,  $\phi^*(\theta)$, depends on the upper-level variable $\theta$ through the lower-level objective function $g(\theta, \phi;\hat{\xi})$. In the following, for convenience, we define the deterministic functions $g(\theta, \phi):=\mathbb{E}_{\hat{\xi}}[g(\theta, \phi;\hat{\xi})]$ and $f(\theta, \phi):=\mathbb{E}_{\xi}[f(\theta, \phi;\xi)]$.

Many meta-learning problems reviewed in Section 2  can be formulated as the stochastic bilevel problem \eqref{opt0}. 
For example, we can recover the iMAML formulation in \eqref{eqn.imaml} by defining the vector $\phi^*(\theta):=[\phi_1^*(\theta)^{\top}, \cdots, \phi_K^*(\theta)^{\top}]^{\top}$, $\xi:=[\xi_1,\cdots, \xi_K]^{\top}$, $\hat{\xi}:=[\hat{\xi}_1, \cdots, \hat{\xi}_K]^{\top}$, with $\mathcal{D}_{k}^{\mathrm{va}}=\xi_k$, $\mathcal{D}_{k}^{\mathrm{tr}}=\hat{\xi}_k$, and with the upper- and lower-level functions $f(\theta, \phi; \xi)$ and $g(\theta, \phi;\hat{\xi})$ as \cite{rajeswaran2019_imaml,hu2020biased,chen2021solving,ji2022theoretical}
	\begin{equation}
	 f\left(\theta, \phi^*(\theta); \xi\right):=\frac{1}{K} \sum_{k=1}^{K} 
   L_{\mathcal{D}_{k}^{\mathrm{va}}}\left(\phi^*_{k}\left(\theta\right) \right) \label{eq:outer}
\end{equation} 
and $g(\theta, \phi;\hat{\xi}):=\frac{1}{K}\sum_{k=1}^{K}g_k(\theta, \phi_k;\hat{\xi}_k)$, where we have
\begin{equation}
  g_k(\theta, \phi_k;\hat{\xi}):= L_{\mathcal{D}_{k}^{\mathrm{tr}}}\left(\phi_k  \right) + \frac{\lambda}{2}\|\phi_k-\theta\|^2. \label{eq:inner}
\end{equation}

The goals of the rest of this section are to provide
a unified bilevel optimization algorithm for meta-learning that addresses problem  \eqref{opt0}, and 
   to review the convergence properties of the unified bilevel algorithm.

\section{A Unified Bilevel Optimization Framework}
In this section, we introduce a unified algorithmic framework for solving the bilevel problem \eqref{opt0}, and we discuss its connection to some of the meta-learning algorithms reviewed in Section \ref{sec:ch2}. 

\subsection{Bilevel SGD: Definition and Challenges}
Solving bilevel stochastic problems via traditional stochastic optimization techniques faces a number of challenges. In this subsection, we highlight the technical issues that arise when applying SGD directly to the bilevel problem \eqref{opt0}. 

To address the bilevel problem \eqref{opt0}, a natural solution is to apply \emph{alternating SGD updates} on the vectors $\theta$ and $\phi$ based on their respective stochastic gradients as
\begin{equation}\label{eq.sgd2}
	 \phi^{i+1} =\phi^i -\beta^i h_g^i ~~~~~~{\rm and}~~~~~~\theta^{i+1}=\theta^i - \alpha^i h_f^i,
\end{equation}
where $h_g^i$ is an unbiased stochastic gradient for the lower-level objective $g(\theta, \phi)$ at the iterate $(\theta, \phi)=(\theta^i, \phi^i)$; $h_f^i$ is the (possibly biased) stochastic gradient for the upper-level objective ${\cal L}(\theta)$ at $\theta=\theta^i$; and, $\beta^i$ and $\alpha^i$ are stepsizes. 
More precisely, the updates in \eqref{eq.sgd2} are typically run in a way that alternate between the upper- and lower-level problems. 

A first approach is to run SGD updates on the lower-level variable $\phi^i$ in \eqref{eq.sgd2} multiple times before updating the upper-level variable $\theta^i$, which yields a double-loop algorithm. To guarantee convergence, this approach typically requires either increasing number of lower-level $\phi$-update, or growing the batch size used to estimate the gradient $h_g^i$ \cite{ghadimi2018bilevel,ji2020provably}. 
The second method is to update vector $\phi^i$ with a larger learning rate so that the iterates $\theta^i$ are relatively static with respect to $\phi^i$. This can be done by setting learning rates that satisfy the limit $\lim_{i\rightarrow \infty}\alpha^i/\beta^i=0$ \cite{hong2020ac}. 
The third method is to modify the update direction $h_g^i$  by incorporating additional momentum and acceleration terms \cite{chen2021single,khanduri2021momentum,guo2021randomized,yang2021provably,huang2021biadam}.  

The challenge of running the iteration \eqref{eq.sgd2} in one of the ways described above is that the (stochastic) gradient $h_f^i$ for the upper-level variable $\theta$ is often prohibitively expensive to compute. 
To illustrate this point, we now derive the gradient of the upper-level function ${\cal L}(\theta)$ in \eqref{opt0}. 
To this end, we first define the Hessian matrix $\nabla_{\phi\phi}^2 g\big(\theta, \phi\big)$ of function $g(\theta, \phi)$ with respect to $\phi$ as 
\begin{align*}
  \nabla_{\phi\phi}^2g\big(\theta, \phi\big) :=    \begin{bmatrix}
    \frac{\partial^2}{\partial \theta_1\partial \theta_1 }g\big(\theta, \phi\big) & \cdots & \frac{\partial^2}{\partial \theta_1\partial \theta_{\hat{d}}}g\big(\theta, \phi\big)\\
    & \cdots & \\
    \frac{\partial^2}{\partial \theta_{\hat{d}}\partial \theta_1 }g\big(\theta, \phi\big) & \cdots & \frac{\partial^2}{ \partial \theta_{\hat{d}}\partial \theta_{\hat{d}}}g\big(\theta, \phi\big)
  \end{bmatrix}
\end{align*}
as well as the matrix $  \nabla_{\theta\phi}^2g\big(\theta, \phi\big)$ as
\begin{align*}
  \nabla_{\theta\phi}^2g\big(\theta, \phi\big) :=    \begin{bmatrix}
    \frac{\partial^2}{\partial \theta_1\partial \phi_1 }g\big(\theta, \phi\big) & \cdots & \frac{\partial^2}{\partial \theta_1\partial \phi_{\hat{d}}}g\big(\theta, \phi\big)\\
    & \cdots & \\
    \frac{\partial^2}{\partial \theta_d\partial \phi_1 }g\big(\theta, \phi\big) & \cdots & \frac{\partial^2}{ \partial \theta_d\partial \phi_{\hat{d}}}g\big(\theta, \phi\big)
  \end{bmatrix}.
\end{align*}
Under certain differentiability assumptions of the upper and lower-level functions, the gradient $\nabla {\cal L}(\theta)$ is obtained as \cite{ghadimi2018bilevel}
	\begin{align}\label{grad-deter-2}
 \nabla {\cal L}(\theta)=&\nabla_{\theta}f(\theta, \phi^*(\theta))\nonumber\\
 &-\nabla_{\theta\phi}^2g(\theta, \phi^*(\theta))\!\left[\nabla_{\phi\phi}^2 g(\theta, \phi^*(\theta))\right]^{-1}\nabla_{\phi} f(\theta, \phi^*(\theta)).
\end{align}

By \eqref{grad-deter-2}, evaluating an unbiased stochastic estimate of the gradient $\nabla {\cal L}(\theta)$ faces the following main difficulties: 
\begin{enumerate}
	\item [$\bullet$]  The gradient $\nabla {\cal L}(\theta)$ depends on the solution of the lower-level problem $\phi^*(\theta)$, which is estimated via SGD in  \eqref{eq.sgd2} and hence varies across the iterations (see Figure \ref{fig:BLO_drift}); 
\item [$\bullet$]  The gradient $\nabla {\cal L}(\theta)$ requires the second derivatives $\nabla_{\theta\phi}^2g(\theta, \phi)$ and $\nabla_{\phi\phi}^2g(\theta, \phi)$ of the lower-level objective function $g$. 
\item [$\bullet$] 
An unbiased estimate of the gradient $\nabla {\cal L}(\theta)$ cannot be obtained via the empirical average over functions $g(\theta, \phi;\hat{\xi})$ with samples $\hat{\xi}\sim P(\hat{\xi})$ due to the nonlinear term $[\nabla_{\phi\phi}^2 g(\theta, \phi^*(\theta))]^{-1}$. 
\end{enumerate}

 \begin{figure}[t]
  \centering
  \includegraphics[width=.7\textwidth]{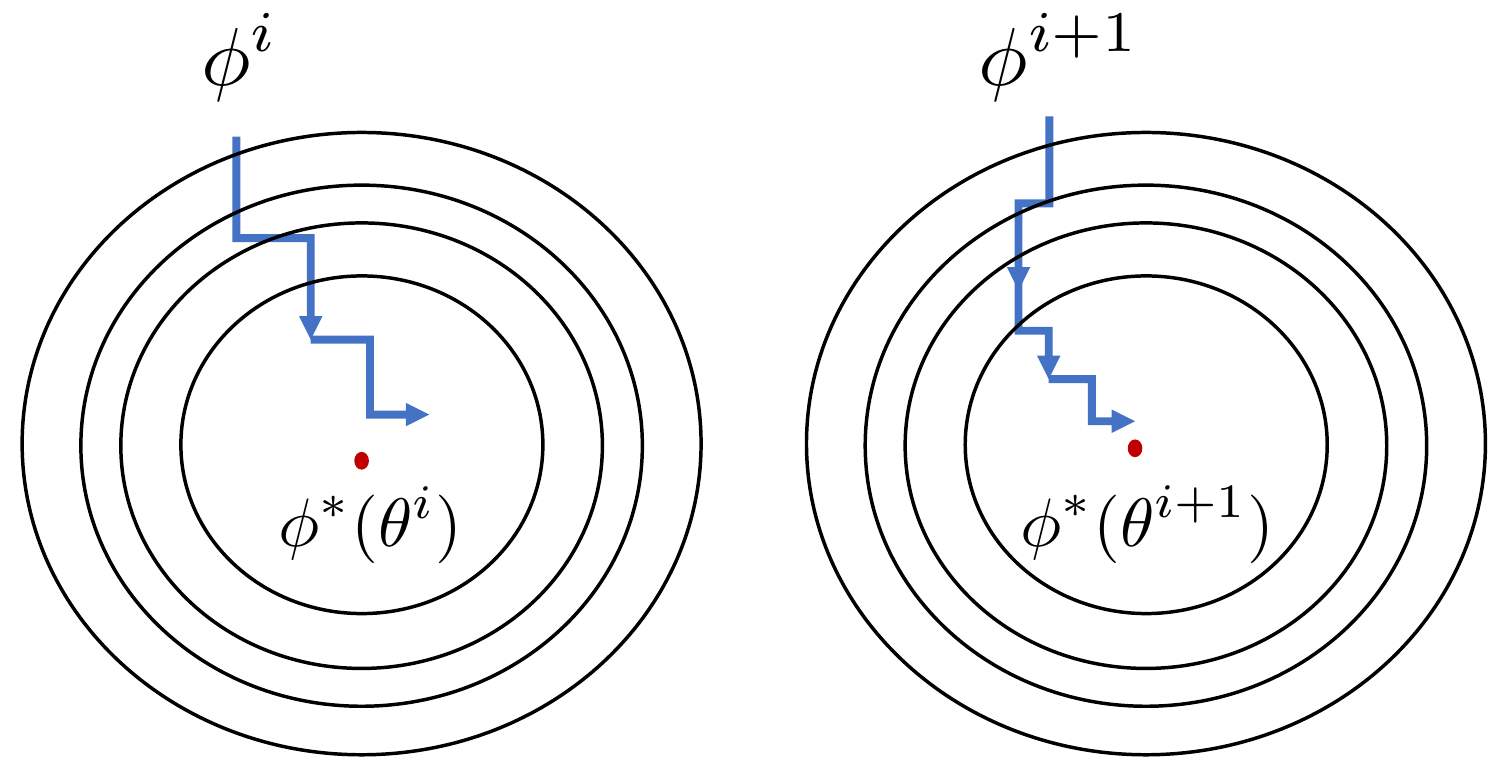}
  \caption{An illustration of minimizers' drift in bilevel SGD.}
  \label{fig:BLO_drift}
\end{figure}

These challenges can be addressed via \emph{implicit-gradient} or \emph{explicit-gradient} methods. 
\textbf{Implicit gradient methods} treat the lower-level solution $\phi^*(\theta)$ as an implicit function of $\theta$, and they directly attempt to evaluate the gradient $\nabla {\cal L}(\theta)$ via the expression \eqref{grad-deter-2}. We will discuss an example of such methods in the next subsection. 
\textbf{Explicit gradient methods} model the optimal lower-level solution $\phi^*(\theta)$ as an explicit function of vector $\theta$. This is typically done by unrolling the iterations of an optimization algorithm such as SGD in \eqref{eq.sgd2}, and by then using the final iteration as a proxy for the lower-level solution $\phi^*(\theta)$ \cite{franceschi2017forward,franceschi2018bilevel}. Explicit gradient methods suffer from the high-memory cost of storing the algorithm's trajectory in the $\phi$-space. In practice, this cost can be controlled by truncating the rolling horizon. 

While these methods deal with bilevel optimization problems with a unique solution for the lower-level problem, recent works have also studied the case in which the lower-level problem may have multiple solutions, which will be further discussed in Section \ref{sec:ch7}.

\subsection{Implicit-Gradient SGD Methods}
In this subsection, we describe a representative implicit-gradient algorithm for the bilevel problem \eqref{opt0}, and then provide a convergence result. 
The algorithm, proposed in \cite{chen2021alset}, is referred to as the ALternating Stochastic gradient dEscenT (ALSET) method.

To overcome the challenge in evaluating the gradient $\nabla {\cal L}(\theta)$ reviewed above, the ALSET algorithm estimates the gradient 
\begin{equation}\label{grad-deter-3}
 \overline{\nabla}_{\theta}f\big(\theta, \phi\big):=\nabla_{\theta}f\big(\theta, \phi\big) -\nabla_{\theta\phi}^2g\big(\theta, \phi\big)\left[\nabla_{\phi\phi}^2 g\big(\theta, \phi\big)\right]^{-1}\nabla_{\phi} f\big(\theta, \phi\big)
\end{equation}
for a fixed value $\phi$. 
Unbiased estimates of the terms $\nabla_{\theta}f\big(\theta, \phi\big)$ and $\nabla_{\phi} f\big(\theta, \phi\big)$ can be obtained by averaging the gradients $\nabla_{\theta}f\big(\theta, \phi; \xi\big)$ and $\nabla_{\phi} f\big(\theta, \phi; \xi\big)$ over one or multiple samples $\xi\sim P(\xi)$. Similarly, an unbiased estimate of the term $\nabla_{\theta\phi}^2g\big(\theta, \phi\big)$ can be obtained by averaging the matrix $\nabla_{\theta\phi}^2g\big(\theta, \phi;\hat{\xi}\big)$ over one or multiple samples $\hat{\xi}\sim P(\hat{\xi})$. 
For the term 
 $\left[\nabla_{\phi\phi}^2 g\big(\theta, \phi\big)\right]^{-1}$, an estimate is evaluated as
\begin{equation}\label{eq.inverse-estimator}
\left[\nabla_{\phi\phi}^2 g\big(\theta, \phi\big)\right]^{-1}\approx \Big[\frac{N}{L_{g}}\prod\limits_{n=1}^{N'}\Big(I-\frac{1}{L_{g}}\nabla_{\phi\phi}^2g(\theta,\phi;\hat{\xi}_{(n)})\Big)\Big],
\end{equation}
where $L_{g}$ is a constant that depends on function $\nabla g(\theta,\phi)$ \cite{chen2021alset}; integer $N'$ is drawn from $\{1, 2, \ldots, N\}$ uniformly at random; and $\{\hat{\xi}^{(1)}, \ldots,\hat{\xi}^{(N')}\}$ are i.i.d. samples from the distribution $P(\hat{\xi})$.
It was shown in \cite{ghadimi2018bilevel} that the bias of the estimate \eqref{eq.inverse-estimator} decreases exponentially with $N$.

At each iteration $k$, ALSET alternates between  stochastic gradient updates on the lower-level vector $\phi^i$ and on the upper-level vector $\theta^i$ by running $T$ steps of SGD on the lower-level variable $\phi^i$ before updating upper-level variable $\theta^i$. With $\alpha^i$ and $\beta^i$ denoting the stepsizes used for the $\theta$- and $\phi$-updates, respectively, the ALSET updates are given as
\begin{subequations}\label{eq.ALSET}
\begin{align}
\!\!\!\!    \phi^{i,t+1}&=\phi^{i,t} \!-\beta^i h_g^{i,t},\,t=0,\ldots, T~~{\rm with}~\phi^{i+1}:=\phi^{i,T} \label{eq.ALSET3}\\
\!\!\!\!    \theta^{i+1}&=\theta^i \!-\! \alpha^i h_f^i,		\label{eq.ALSET2}
\end{align}
\end{subequations}
where index $t$ runs over the inner-loop of $\phi$-updates, while index $k$ runs over the $\theta$-updates. In \eqref{eq.ALSET}, the update direction for vector $\phi$ is the stochastic gradient  
\begin{equation}\label{eq.yg-estimator}
	h_g^{i,t} := \nabla_{\phi}g(\theta^i,\phi^{i,t}; \hat{\xi}^{i,t})
\end{equation}
with $\hat{\xi}^{i,t}$ being i.i.d. samples from distribution $P(\hat{\xi})$; and, with the Hessian inverse estimator \eqref{eq.inverse-estimator}, the update direction of $\theta$ is  given by the biased gradient
\begin{align}\label{eq.biased-estimator}
&h_f^i:=\nabla_{\theta} f(\theta^i,\phi^{i+1};\xi^i)-\nabla_{\theta\phi}^2g(\theta^i,\phi;\hat{\xi}_{(0)}^i)\nonumber\\
&\times\Big[\frac{N}{L_{g,1}}\prod\limits_{n=1}^{N'}\Big(I-\frac{1}{L_{g,1}}\nabla_{\phi\phi}^2g(\theta^i,\phi^{i+1};\hat{\xi}_{(n)}^i)\Big)\Big]\nabla_{\phi}f(\theta^i,\phi^{i+1};\xi^i),
\end{align}
where $\xi^i$ and $\{\hat{\xi}_{(n)}^i\}_{n=0}^{N'}$ are i.i.d. samples from distribution $P(\xi)$. 
Algorithm \ref{alg: ALSET} provides a summary of the ALSET algorithm. 
Similar algorithms include BSA  \citep{ghadimi2018bilevel}, TTSA \citep{hong2020ac} and stocBiO \citep{ji2020provably}. 
We refer to \cite{chen2021alset} for a comparison among these algorithms.  

\begin{algorithm}[H]
\caption{ALSET for the stochastic bilevel problem \eqref{opt0}}\label{alg: ALSET}
\begin{algorithmic}[1]
    \State{\textbf{initialize:} $\theta^0, \phi^0$, stepsizes $\{\alpha^i, \beta^i\}$} 
        \For{$i=0,1,\ldots, I_{\mathrm{max}}-1$}
            \For{$t=0,1,\ldots,T-1$}
                \State{update $\phi^{i,t+1}=\phi^{i,t}-\beta^ih_g^{i,t}$ using \eqref{eq.yg-estimator}}~~~\Comment{set $\phi^{i,0}=\phi^i$}
            \EndFor
            \State{update $\theta^{i+1}=\theta^i-\alpha^i h_f^i$  using \eqref{eq.biased-estimator}}     ~~~~~~~~~       \Comment{set $\phi^{i+1}=\phi^{i,T}$}
        \EndFor
\end{algorithmic}
\end{algorithm}

\subsection{Application to Meta-Learning}
Next we will illustrate how we can recover various meta-learning algorithms introduced in Section 2 as special cases of the ALSET algorithm.

\noindent\textbf{MAML.}
The MAML algorithm in Algorithm \ref{alg:maml_train_0} is recovered by applying ALSET in Algorithm \ref{alg: ALSET} to the following problem
  \begin{align}
  &\min_{\theta}~{\cal L}^{\mathrm{ma}}_{\mathcal{D}^{\rm mtr}}\left(\theta\right)
  \coloneqq \frac{1}{K} \sum_{k=1}^{K} L_{\mathcal{D}_{k}^{\mathrm{va}}}\underbrace{\left({\phi}^{\mathrm{ma}}(\mathcal{D}_k^{\rm tr}|\theta) \right)}_{f_k(\theta, \phi; \xi_k)} \\
  &\mathrm{s.t.}~~ 
  {\phi}^{\mathrm{ma}}(\mathcal{D}_k^{\rm tr}|\theta)
  =\argmin_{\phi} ~ \underbrace{\nabla L_{\mathcal{D}_{k}^{\mathrm{tr}}}(\theta )^{\top}(\phi-\theta)+\frac{1}{2\beta}\|\phi-\theta\|^2}_{g_k(\theta, \phi;\hat{\xi}_k)},~\forall k.\nonumber
  \end{align}
  Note that in this case we have $d=\hat{d}$, and thus the stochastic gradients $\nabla_{\theta}f_k(\theta, \phi; \xi_k)$ and $\nabla_{\phi}f_k(\theta, \phi; \xi_k)$ used in the upper-level gradient \eqref{grad-deter-3} become 
    \begin{equation}
 \!\!   	\nabla_{\theta}f_k(\theta, \phi; \xi_k)=0~~{\rm and}~~\nabla_{\phi}f_k(\theta, \phi; \xi_k)=\nabla_{\phi} L_{\mathcal{D}_{k}^{\mathrm{va}}}\left({\phi}^{\mathrm{ma}}(\mathcal{D}_k^{\rm tr}|\theta) \right),
    \end{equation}
  and the stochastic Hessian and the Jacobian matrices $\nabla_{\phi\phi}^2 g_k\big(\theta, \phi;\hat{\xi}_k\big)$ and $\nabla_{\theta\phi}^2g_k\big(\theta, \phi;\hat{\xi}_k\big)$ used in \eqref{grad-deter-3} reduce to 
  \begin{equation}
  	\nabla_{\phi\phi}^2 g_k(\theta, \phi;\hat{\xi}_k)=\frac{1}{\beta}I,~~\nabla_{\theta\phi}^2 g_k(\theta, \phi;\hat{\xi}_k)=\nabla_{\theta}^{2} L_{\mathcal{D}_{k}^{\mathrm{tr}}}\left(\theta \right)-\frac{1}{\beta}I,
  \end{equation}
  where $I\in \mathbb{R}^{\hat{d}\times \hat{d}}$ is an identity matrix.
  
\noindent\textbf{iMAML.}
We can recover the iMAML algorithm in Algorithm \ref{alg:imaml_train} by applying ALSET in Algorithm \ref{alg: ALSET} to the following problem
  \begin{align}
  &\min_{\theta}~{\cal L}^{\mathrm{im}}_{\mathcal{D}^{\rm mtr}}\left(\theta\right)
  :=\frac{1}{K} \sum_{k=1}^{K} \underbrace{L_{\mathcal{D}_{k}^{\mathrm{va}}}\left({\phi}^{\mathrm{im}}(\mathcal{D}_k^{\rm tr}|\theta) \right)}_{f_k(\theta, \phi; \xi_k)} \\
  &\mathrm{s.t.}~~ 
  {\phi}^{\mathrm{im}}(\mathcal{D}_k^{\rm tr}|\theta)
  =\argmin_{\phi}~  \underbrace{L_{\mathcal{D}_{k}^{\mathrm{tr}}}\left(\phi \right)+\frac{1}{2\beta}\left\|\phi-\theta\right\|^{2}}_{g_k(\theta, \phi;\hat{\xi}_k)}, ~\forall k.\nonumber
  \end{align}

\section{Convergence Analysis for Bilevel Optimization}
In this subsection, we will present a convergence result of ALSET that was established in \cite{chen2021alset}. 
  Given the connection between ALSET and the algorithms in Section 2, performance guarantee for ALSET that we will introduce next will also apply to specific MAML algorithms by using the corresponding upper- and lower-level functions. 
The results rely on the following assumptions, which are common in the bilevel optimization literature \cite{ghadimi2018bilevel,ji2020provably,hong2020ac,guo2021randomized,chen2021alset}.

\begin{assumption}[Lipschitz continuity]\label{chap3-ass1} 
Functions $f(\theta,\phi), \nabla f(\theta,\phi), \nabla g(\theta,\phi)$ and $\nabla^2g(\theta,\phi)$ are Lipschitz continuous with respect to $\theta$ and $\phi$.
\end{assumption}

\begin{assumption}[Strong convexity of $g(\theta,\phi)$ in $\phi$]\label{chap3-ass2} 
For any fixed $\theta$, $g(\theta,\phi)$ is strongly convex in $\phi$.
\end{assumption}

\begin{assumption}[Bias and variance]\label{chap3-ass3} 
The stochastic derivatives $\nabla f(\theta,\phi;\xi)$, $\nabla g(\theta,\phi;\hat{\xi})$, $\nabla^2g(\theta, y, \hat{\xi})$ are unbiased with bounded variances.
\end{assumption}

%

 \begin{figure}[t]
  \centering
  \includegraphics[width=.98\textwidth]{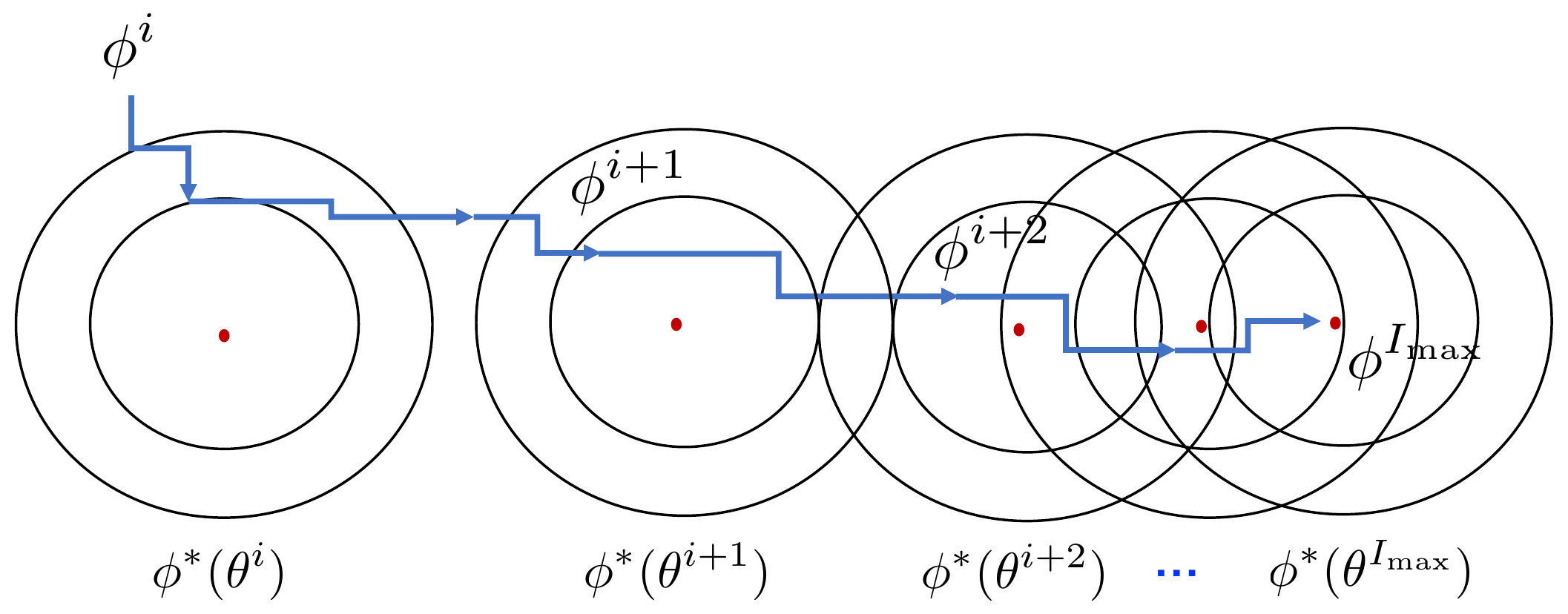}
  \caption{An illustration of vanishing minimizers' drift in ALSET.}
  \label{fig:BLO_drift2}
\end{figure}

\begin{theorem}[Bilevel problems {\cite[Theorem 1]{chen2021alset}}]\label{theorem1}
Suppose Assumptions \ref{chap3-ass1}--\ref{chap3-ass3} hold. With some proper constants $\alpha>0$ and $\beta>0$, choose the upper- and lower-level stepsizes as 
\begin{equation}\label{eq.stepsize}
 \alpha^i = \frac{\alpha}{\sqrt{I_{\mathrm{max}}}} ~~~{\rm and}~~~\beta^i=\frac{\beta}{\sqrt{I_{\mathrm{max}}}},~~~ {\rm for}~i=1,2, \cdots, I_{\mathrm{max}},
\end{equation}
where $I_{\mathrm{max}}$ is the total number of upper-level iterations. 
Set $N={\cal O}(\log I_{\mathrm{max}})$ in the Hessian inversion estimator \eqref{eq.inverse-estimator}.
For any $T\geq 1$, the iterates $\{\theta^i, \phi^i\}$ generated by Algorithm \ref{alg: ALSET} satisfy the condition
\begin{subequations}\label{eq.theorem1}
		\begin{align}
   &\frac{1}{I_{\mathrm{max}}} \sum_{i=1}^{I_{\mathrm{max}}}\EE\left[\left\|\nabla {\cal L}(\theta^i)\right\|^2\right] = {\cal O}\Big(\frac{1}{\sqrt{I_{\mathrm{max}}}}\Big)\\
   &\EE\left[\left\|\phi^{I_{\mathrm{max}}}\!-\phi^*(\theta^{I_{\mathrm{max}}})\right\|^2\right] = {\cal O}\Big(\frac{1}{\sqrt{I_{\mathrm{max}}}}\Big), 
\end{align}
\end{subequations}
where $\phi^*(\theta^{I_{\mathrm{max}}})$ is the minimizer of the lower-level problem in \eqref{opt0-low}.
\end{theorem}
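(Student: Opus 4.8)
The plan is to treat this as a coupled stochastic-approximation argument organized around two quantities: the upper-level objective $\mathcal{L}(\theta^i)$ and the lower-level tracking error $\|\phi^i-\phi^*(\theta^{i-1})\|^2$. First I would establish regularity of the upper-level landscape. By Assumption~\ref{chap3-ass2} the Hessian $\nabla^2_{\phi\phi}g(\theta,\phi)$ is uniformly invertible, so the implicit function theorem makes $\phi^*(\theta)$ well defined, and together with the Lipschitz bounds of Assumption~\ref{chap3-ass1} it is Lipschitz in $\theta$. Substituting this into the implicit-gradient formula~\eqref{grad-deter-2} shows that $\nabla\mathcal{L}$ is Lipschitz, i.e. $\mathcal{L}$ is $L$-smooth. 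This foundational lemma lets me invoke the descent inequality
\[
  \mathcal{L}(\theta^{i+1}) \leq \mathcal{L}(\theta^i) - \alpha^i \langle \nabla\mathcal{L}(\theta^i), h_f^i\rangle + \tfrac{L}{2}(\alpha^i)^2\|h_f^i\|^2 ,
\]
into which the $\theta$-update~\eqref{eq.ALSET2} has been inserted.

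Second, I would control the estimator $h_f^i$ of~\eqref{eq.biased-estimator}. Decomposing its conditional expectation into $\nabla\mathcal{L}(\theta^i)$ plus error terms isolates two bias sources: the Neumann-truncation bias of the Hessian-inverse estimator~\eqref{eq.inverse-estimator}, which (as noted before the theorem) decays exponentially in $N$ and is therefore ${\cal O}(1/\sqrt{I_{\mathrm{max}}})$ once $N={\cal O}(\log I_{\mathrm{max}})$; and the evaluation-point error from using $\phi^{i+1}$ in place of $\phi^*(\theta^i)$, which the Lipschitz bounds tie to the tracking error $\|\phi^{i+1}-\phi^*(\theta^i)\|$. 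Assumption~\ref{chap3-ass3} then bounds the conditional variance of $h_f^i$. Taking expectations in the descent inequality yields a relation of the form
\[
  \EE[\mathcal{L}(\theta^{i+1})] \leq \EE[\mathcal{L}(\theta^i)] - \tfrac{\alpha^i}{2}\EE\|\nabla\mathcal{L}(\theta^i)\|^2 + c_1\alpha^i \EE\|\phi^{i+1}-\phi^*(\theta^i)\|^2 + c_2(\alpha^i)^2 + \tfrac{c_3\,\alpha^i}{I_{\mathrm{max}}} .
\]

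Third, and this is the crux, I would bound the tracking-error recursion. Running $T\geq 1$ steps of the lower-level SGD~\eqref{eq.ALSET3} contracts the distance to $\phi^*(\theta^i)$ by a factor $(1-\mu\beta^i)^T$ via strong convexity, up to a variance term of order $(\beta^i)^2$. The main obstacle is the \emph{minimizers' drift} illustrated in Figures~\ref{fig:BLO_drift}--\ref{fig:BLO_drift2}: after the $\theta$-update the target moves, with $\|\phi^*(\theta^{i+1})-\phi^*(\theta^i)\| \leq L_{\phi^*}\,\alpha^i\|h_f^i\|$. The key observation is that this drift enters only at order $(\alpha^i)^2$, so under the schedule~\eqref{eq.stepsize} it is genuinely higher order; this is precisely what allows a single inner step ($T=1$) to suffice without a two-timescale ratio $\alpha^i/\beta^i\to 0$ or growing batch sizes. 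The result is a recursion $\EE\|\phi^{i+1}-\phi^*(\theta^i)\|^2 \leq (1-\rho)\,\EE\|\phi^i-\phi^*(\theta^{i-1})\|^2 + {\cal O}((\alpha^i)^2) + {\cal O}((\beta^i)^2)$ with contraction factor $\rho\in(0,1)$.

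Finally, I would merge the two recursions through a Lyapunov function $V^i := \mathcal{L}(\theta^i) + c\,\|\phi^i-\phi^*(\theta^{i-1})\|^2$, choosing $c$ large enough that the positive tracking term in the descent inequality is absorbed by the contraction in the tracking recursion. Telescoping $V^i$ over $i=1,\dots,I_{\mathrm{max}}$ and dividing by $\alpha^i I_{\mathrm{max}} = \alpha\sqrt{I_{\mathrm{max}}}$ produces the bound $\frac{1}{I_{\mathrm{max}}}\sum_i \EE\|\nabla\mathcal{L}(\theta^i)\|^2 = {\cal O}(1/\sqrt{I_{\mathrm{max}}})$, with the last-iterate tracking error inheriting the same rate, which establishes~\eqref{eq.theorem1}. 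The delicate points are the constant tuning in the Lyapunov step and checking that every bias, drift, and variance contribution is dominated by $1/\sqrt{I_{\mathrm{max}}}$ under the prescribed choices of $\alpha^i$, $\beta^i$, and $N$.
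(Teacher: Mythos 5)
Your proposal is correct and follows essentially the same route as the proof in the cited reference \cite{chen2021alset}: smoothness of ${\cal L}$ via the implicit function theorem, a descent lemma, decomposition of the bias of $h_f^i$ into the Neumann-truncation and tracking-error parts, a drift-aware contraction recursion for $\|\phi^{i+1}-\phi^*(\theta^i)\|^2$, and a Lyapunov/telescoping argument. In particular, you correctly identify the crux that distinguishes this analysis from earlier two-timescale ones, namely that the minimizers' drift enters only at order $(\alpha^i)^2$, which is exactly what permits the single-timescale stepsizes \eqref{eq.stepsize} and any fixed $T\geq 1$.
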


Theorem \ref{theorem1} demonstrates that   the alternating SGD-type algorithm ALSET can  achieve the same convergence rate ${\cal O}\Big(\frac{1}{\sqrt{I_{\mathrm{max}}}}\Big)$ of SGD (see e.g.,\cite{ghadimi2013sgd}). Therefore, the given class of bilevel learning problems can be efficiently solved by ALSET without sacrificing iteration efficiency as compared to the standard single-level learning problems. 
Recent advances improving the above unified result also include relaxing the assumption \cite{shen2022single}, replacing the inner-loop \eqref{eq.biased-estimator} via fully single-loop update \cite{li2022fully}, and allowing online update \cite{tarzanagh2022online}. 

Figure \ref{fig:BLO_drift2} gives some intuition as to why ALSET can preserve the same convergence rate of SGD for single-level learning problems. Specifically, given the decaying stepsizes $\alpha^i={\cal O}\Big(\frac{1}{\sqrt{I_{\mathrm{max}}}}\Big)$ for the upper-level $\theta$-update, the drifts of the lower-level minimizers $\phi^*(\theta)$ tend to vanish with $k$ at the rate of ${\cal O}\Big(\frac{1}{\sqrt{I_{\mathrm{max}}}}\Big)$. As a result, the performance in terms of the meta-loss ${\cal L}(\theta)$ are dominated by the variance of the upper-level $\theta$-gradient, as for the single-level SGD, without introducing additional noise due to the lower-level updates.

 \section{Conclusions}
In this section, we have revisited the bilevel learning framework and its connection to the meta-learning problems. 
We have described a unified ALternating Stochastic gradient dEscenT (ALSET) method for bilevel optimization problems, and connected it to many of the meta-learning algorithms reviewed in Section 2. 
For a certain class of bilevel optimization problems, ALSET requires ${\cal O}(\epsilon^{-2})$ iterations in total to achieve an $\epsilon$-stationary point of the bilevel learning problem. 
This matches the iteration complexity of SGD for  single-level problems. 

 \newcommand \Dtrk {\mathcal{D}^{\mathrm{tr}}_k}
\newcommand \DtrT {\mathcal{D}^{\mathrm{tr}}_T}
\newcommand \Dtr {\mathcal{D}^{\mathrm{tr}}}
\newcommand \KL {\mathrm{KL}}
\newcommand \base {p(\phi|\Dtrk)}
\newcommand \basetheta {p(\phi|\Dtrk,\theta)}
\newcommand \basethetaT {p(\phi|\DtrT,\theta)}
\newcommand \sampledist {p(Z|T_k)}
\newcommand \Ebb {\mathbb{E}}
\newcommand \taskdist {p(T)}
\newcommand \metalearner {p(\theta|\{\Dtrk\}_{k=1}^K)}
\newcommand \metaset {\{\Dtrk\}_{k=1}^K}
\newcommand \taskset {\{T_k\}_{k=1}^K}
\newcommand \Lscr {\mathcal{L}}
\chapter{Statistical Learning Theory for Meta-Learning}
\label{sec:ch4}

While the previous section described meta-learning as an optimization process, this section studies the generalization performance of meta-learning algorithms from a statistical learning-theoretic viewpoint. Generalization of a meta-learning algorithm, also known as \textit{meta-generalization}, refers to the capacity of the algorithm to provide solutions that perform well outside the meta-training data, {i.e.}, for new tasks. Towards this goal, we first introduce basic statistical learning-theoretic concepts for conventional learning in Section~\ref{sec:genbounds_conventional learning}, and then extend the presentation to meta-generalization in Section~\ref{sec:genbounds_metalearning}.  Adopting an information-theoretic approach, Section~\ref{sec:infobounds_metageneralization} presents generic upper bounds on the \textit{expected} generalization error of meta-learning algorithms. The meta-generalization error measures the discrepancy between the losses accrued on meta-trainining and meta-test data sets. In contrast, Section~\ref{sec:PACBayesianbounds_metalearning} is dedicated to \textit{high probability}, so-called \textit{PAC-Bayes}, upper bounds on the meta-generalization error. We end this section with a discussion on information-theoretic analysis of the optimality error, {i.e},  the discrepancy between actual and optimal meta-test losses, of Bayesian meta-learning in Section~\ref{sec:Bayesianmetalearning}.

\section{Generalization Error for Conventional Learning} \label{sec:genbounds_conventional learning}
In this subsection, we study the generalization error incurred in conventional learning that targets a single learning task. Let $T_k$ denote the $k$th task under study.
Task $T_k$ is described by an unknown data distribution $\sampledist$, which generates data samples $Z \sim \sampledist$. Note that the data sample $Z$ can denote a tuple $(X,Y)$ of feature vector $X$ and label $Y$ as in supervised learning, or it can denote unlabelled data as in unsupervised learning problems. We use upper case letters to emphasize that these quantities are treated as random variables in statistical learning theory.

A learning algorithm, also called \textit{base-learner}, observes a training data set $\Dtrk=(Z_1,\hdots,Z_N)$ of $N$ samples generated i.i.d. according to the data distribution $\sampledist$. Assuming that the model class $\mathcal{H}$ is parameterized with model parameter vector $\phi$ taking values in space $\Phi$,  the base-learner uses the observed training data set $\mathcal{D}^{\mathrm{tr}}_k$ to optimize the model parameter vector. The performance of the optimized model parameter $\phi$ on a data sample $Z$ is measured using a positive real-valued loss function $\ell(Z|\phi)$.

Ideally, the goal of the base-learner is to find the model parameter vector that minimizes the \textit{population loss},
\begin{align}
	L_{T_k}(\phi) = \Ebb_{\sampledist}[\ell(Z|\phi)], \label{eq:populationloss_conventional}
\end{align} which is the average loss incurred on a test data point $Z \sim \sampledist$ drawn randomly from the data distribution $\sampledist$. In \eqref{eq:populationloss_conventional} and throughout this section, we use $\Ebb_{\bullet}$ to denote the expectation taken over the distribution $\bullet$ in the subscript.  However, the population loss in \eqref{eq:populationloss_conventional} cannot be computed, since the underlying data distribution $\sampledist$ is unknown. Instead,  the base-learner uses the  \textit{training loss},
\begin{align}
	L_{\Dtrk}(\phi) = \frac{1}{N} \sum_{j=1}^N \ell( Z_j|\phi) \label{eq:trainingloss_conventional},
\end{align} which is the empirical average loss incurred on the training data set $\Dtrk=\{Z_j\}_{j=1}^N$.

The difference between the population loss and the training loss is the \textit{generalization error},
\begin{align}
	\Delta L_{k}(\phi) = L_{T_k}(\phi) - L_{\Dtrk}(\phi) \label{eq:genloss_conventionalearning},
\end{align} which is a measure of how well the empirical training loss approximates the population loss. If the learning algorithm producing model parameter vector $\phi$ overfits the training data, and hence the training loss is close to zero, the trained model $\phi$ may not perform well on the unseen test data, thereby resulting in large population loss, and thus in a large generalization error.
Therefore, understanding the generalization error of a learning algorithm can help diagnose and quantify problems with the test performance of a trained model.

Of central interest in statistical learning theory is the problem of understanding and quantifying the generalization capacity of learning algorithms. This is typically accomplished by studying upper bounds on the generalization error \eqref{eq:genloss_conventionalearning}. Traditional bounds hold uniformly with high probability for all models in the model class $\mathcal{H}$, and are referred to as  \textit{ probably approximately correct (PAC) bounds}. These bounds hold with high probability with respect to any random distribution $p(Z|T_k)$ of the training data, and they quantify the generalization error as a function of the ``complexity'' of the model, in a manner that is agnostic to the true data distribution $p(Z|T_k)$. The model complexity is captured via properties of the model class $\mathcal{H}$  such as the Vapnik-Chervonenkis (VC) dimension \cite{blumer1989learnability} or the Rademacher complexity \cite{bousquet2003new}. PAC bounds demonstrate that highly complex models tend to overfit, {i.e.}, to yield large generalization errors \eqref{eq:genloss_conventionalearning}, when trained on few data samples.

The above insights obtained from PAC bounds, however, fail to 
explain the exceptional generalization performance of highly complex deep neural network models.  A major reason for the failure of PAC bounds is attributed to the fact that they ignore the fit of the model class to the specific data distribution, as well as the properties of training algorithms such as SGD.

 \textit{PAC-Bayes theory} also obtains high-probability bounds on the generalization error, but PAC-Bayes bounds are functions of the training algorithm, which is modelled as a random transformation \cite{alquier2021user}. Finally, \textit{ information-theoretic bounds} have been introduced  to quantify the \textit{average } generalization error, and they account for the properties of the learning algorithm,  data distribution, as well as the specific loss function (see \cite{simeone2020maml} for an introduction).

In the rest of this subsection, we first review information-theoretic bounds and then we present PAC-Bayes bounds, which are then extended to meta-learning in the following subsections.
\subsection{Information-Theoretic Generalization Bounds}\label{sec:infobounds_conventionallearning}
In the PAC-Bayes and information-theoretic approaches to the study of the generalization error, a  base-learner is modelled via a conditional distribution $\base$, which in turn describes a stochastic mapping from  training data $\Dtrk$  to  model parameters $\phi$. Examples of stochastic learning algorithms include SGD and its variants; as well as Bayesian, sampling-based, schemes such as stochastic gradient Langevin dynamics (SGLD) \cite{rabinovich2015variational}, \cite{ simeoneCUP}. Given the randomness of training data,  as well as the learning algorithm,  the information-theoretic framework aims to obtain upper bounds on the \textbf{absolute average generalization error},
\begin{align}
	|\mathbb{E}_{p(\Dtrk,\phi)} [\Delta L_{k}(\phi)]|,\label{eq:avgabsgenerror}
\end{align} where the expectation is taken with respect to the joint distribution \begin{align} p(\Dtrk,\phi)=p(\Dtrk)\base \label{eq:jointdistribution} \end{align} of training data and model parameter, with   $p(\Dtrk)=\prod_{j=1}^N p(Z_j|T_k)$. 

Under appropriate assumption on the loss function $\ell(Z|\phi)$, the analysis in \cite{xu2017information} gives an upper bound on the absolute average generalization error in \eqref{eq:avgabsgenerror}  as a function of the mutual information (MI), $I(\phi;\Dtrk)$, between the model parameter vector $\phi$ and the training data $\Dtrk$, and of the number $N$ of training data samples. For any two jointly distributed random variables $A$ and $B$ with the distribution $p(A,B)$, and corresponding marginal distributions $p(A)$ and $p(B)$, the MI   \begin{align}I(A;B)=\Ebb_{p(A,B)}\biggl[\log \frac{p(A,B)}{p(A)p(B)}\biggr]\end{align} is a measure of statistical dependence between $A$ and $B$. We first state the main technical assumption, and then give the main result.
\begin{assumption}\label{assum:convlearning}
The loss function $\ell(Z|\phi)$ is $\sigma^2$-sub-Gaussian\footnote{A random variable $X \sim p(X)$ is said to be $\sigma^2$-sub-Gaussian if the inequality $\log \Ebb_{p(X)}[\exp(\lambda (X-\Ebb_{p(X)}[X]))]\leq \frac{\lambda^2 \sigma^2}{2}$ holds for all $\lambda \in \mathbb{R}$.} with respect to the data distribution $Z \sim \sampledist$ for all model parameters $\phi \in \Phi$.
\end{assumption}
\begin{theorem}\label{thm:infobound_convlearning}
Under Assumption~\ref{assum:convlearning}, the following upper bound on the absolute average generalization error holds
	\begin{align}
		|\mathbb{E}_{p(\Dtrk,\phi)} [\Delta L_{k}(\phi)]| \leq \sqrt{\frac{2\sigma^2}{N} I(\phi;\Dtrk)} \label{eq:infoupperbound},
	\end{align}where
	$
	I(\phi;\Dtrk) 
	$ is the mutual information under the joint distribution $p(\phi,\Dtrk)$ defined in \eqref{eq:jointdistribution}.
\end{theorem}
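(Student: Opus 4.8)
The plan is to derive the bound as an instance of the standard decoupling estimate, which controls the gap between the expectation of a function under a joint law and under the product of its marginals by the mutual information between the two arguments. First I would recast the average generalization error as exactly such a gap. Introduce the centered quantity $F(\phi,\Dtrk) := L_{\Dtrk}(\phi) - L_{T_k}(\phi)$, and write $P := p(\Dtrk,\phi)$ for the joint law in \eqref{eq:jointdistribution} and $Q := p(\phi)\,p(\Dtrk)$ for the product of its marginals. Because the population loss $L_{T_k}(\phi)$ depends on the data only through $\phi$, one has $\Ebb_P[L_{T_k}(\phi)] = \Ebb_{p(\phi)}[L_{T_k}(\phi)]$, and therefore
\[
\Ebb_{p(\Dtrk,\phi)}[\Delta L_k(\phi)] = -\,\Ebb_P[F],
\]
so that the task reduces to bounding $|\Ebb_P[F]|$.

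The key step is to control the cumulant generating function of $F$ under the product measure $Q$. For a fixed $\phi$, the training loss $L_{\Dtrk}(\phi) = \frac{1}{N}\sum_{j=1}^N \ell(Z_j|\phi)$ is an average of $N$ independent, $\sigma^2$-sub-Gaussian variables (Assumption~\ref{assum:convlearning}) with common conditional mean $L_{T_k}(\phi)$, hence is $(\sigma^2/N)$-sub-Gaussian about $L_{T_k}(\phi)$; equivalently, $F$ has conditional mean zero given $\phi$ under $Q$. Averaging the conditional moment-generating-function bound over $\phi \sim p(\phi)$ then gives, for every $\lambda \in \mathbb{R}$,
\[
\log \Ebb_{Q}\!\left[\exp(\lambda F)\right] \leq \frac{\lambda^2 \sigma^2}{2N}.
\]

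Finally I would invoke the Donsker–Varadhan variational inequality $\lambda\,\Ebb_P[F] \leq \mathrm{D}_{\KL}(P\|Q) + \log \Ebb_Q[\exp(\lambda F)]$, together with the identity $\mathrm{D}_{\KL}(P\|Q) = I(\phi;\Dtrk)$. Combining with the previous display yields $\lambda\,\Ebb_P[F] \leq I(\phi;\Dtrk) + \lambda^2\sigma^2/(2N)$ for all $\lambda$. Optimizing the right-hand side over $\lambda > 0$ gives $\Ebb_P[F] \leq \sqrt{2\sigma^2 I(\phi;\Dtrk)/N}$, and repeating the argument with $\lambda < 0$ (equivalently, applying it to $-F$) gives the matching lower bound; together these produce $|\Ebb_P[F]| \leq \sqrt{2\sigma^2 I(\phi;\Dtrk)/N}$, which is exactly \eqref{eq:infoupperbound}.

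I expect the main obstacle to be the bookkeeping in the sub-Gaussian step: Assumption~\ref{assum:convlearning} supplies sub-Gaussianity only conditionally on $\phi$ and only about the conditional mean $L_{T_k}(\phi)$, so one must verify that centering $F$ at this conditional mean is precisely what makes the tensorized moment generating function collapse — after integrating out $\phi$ — to the clean bound $\exp(\lambda^2\sigma^2/(2N))$, with no residual dependence on the $\phi$-marginal. The remaining ingredients, namely the decoupling identity of the first step, the variational formula, and the scalar optimization over $\lambda$, are routine.
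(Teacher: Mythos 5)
Your proposal is correct and follows essentially the same route as the paper's proof: both rest on the Donsker--Varadhan change-of-measure lemma, the $(\sigma^2/N)$-sub-Gaussianity of the empirical loss obtained by tensorizing Assumption~\ref{assum:convlearning} over the $N$ i.i.d.\ samples, and a quadratic-in-$\lambda$ argument yielding \eqref{eq:infoupperbound}. The only differences are cosmetic: you apply DV once on the product space with $\mathrm{D}_{\KL}(P\|Q)=I(\phi;\Dtrk)$ and optimize over $\lambda$ explicitly, whereas the paper applies DV conditionally on $\phi$, averages the resulting inequality using $I(\phi;\Dtrk)=\Ebb_{p(\phi)}[\mathrm{D}_{\mathrm{KL}}(p(\Dtrk|\phi)\|p(\Dtrk))]$, and concludes via a non-positive discriminant.
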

\begin{proof}
The proof of \eqref{eq:infoupperbound} starts by noting the equivalent representation of average generalization error in \eqref{eq:avgabsgenerror} given by
\begin{align}
\mathbb{E}_{p(\Dtrk,\phi)}[\Delta L_{k}(\phi)]= \mathbb{E}_{p(\Dtrk)p(\phi)}[ L_{\Dtrk}(\phi)] -\mathbb{E}_{p(\Dtrk,\phi)} [ L_{\Dtrk}(\phi)] \label{eq:proof1_1}.
\end{align} The equality in \eqref{eq:proof1_1} holds since the first term in the right-hand side of \eqref{eq:proof1_1} equals the average population loss $\Ebb_{p(\phi)}[L_{T_k}(\phi)]$. In fact, the population loss $L_{T_k}(\phi)$ can be written as the expectation of the training loss $L_{\Dtrk}(\phi)$ over the training data distribution $p(\Dtrk)$, {i.e.}, as $L_{T_k}(\phi)=\Ebb_{p(\Dtrk)}[L_{\Dtrk}(\phi)]$, for any fixed model parameter $\phi$. 

Let us define as $\mathrm{D}_{\KL}(p(x) \lVert q(x))=\Ebb_{p(x)}\Bigl[\log \frac{p(x)}{q(x)}\Bigr]$ the Kullback-Leibler (KL) divergence between the distributions $p(x)$ and $q(x)$.
The key ingredient required to upper bound \eqref{eq:proof1_1} is the Donsker-Varadhan (DV) change-of-measure lemma,  which gives the following inequality (see, {e.g.}, \cite{jose2021free})
\begin{align}
  \mathrm{D}_{\mathrm{KL}}(p(X)|| q(X)) \geq \mathbb{E}_{p(X)}[f(X)]-\log \mathbb{E}_{q(X)}[\exp(f(X))]  \label{eq:DVI},
\end{align}which holds for any bounded, measurable function $f(X)$.

 In \eqref{eq:DVI}, set $X=\Dtrk$, $f(X)=\lambda L_{\Dtrk}(\phi)$, where $\lambda \in \mathbb{R}$, $p(X)=p(\Dtrk|\phi)$, and $q(X)=p(\Dtrk)$  to get the inequality
\begin{align}
  \mathrm{D}_{\mathrm{KL}}(p(\Dtrk|\phi)|| p(\Dtrk))&\geq    \mathbb{E}_{p(\Dtrk|\phi)}[\lambda L_{\Dtrk}(\phi)]-\log \mathbb{E}_{p(\Dtrk)}\biggl[\exp(\lambda L_{\Dtrk}(\phi))\biggr]  \nonumber \\
  & \geq \mathbb{E}_{p(\Dtrk|\phi)}[\lambda L_{\Dtrk}(\phi)]-\mathbb{E}_{p(\Dtrk)}[\lambda L_{\Dtrk}(\phi)]-\frac{\lambda^2 \sigma^2}{2N} \label{eq:proof1_2}.
\end{align} The inequality in \eqref{eq:proof1_2} follows from Assumption~\ref{assum:convlearning} and from the fact that the training set $\Dtrk$ consists of i.i.d. data samples. Taking the average over $\phi \sim p(\phi)$ on both sides of \eqref{eq:proof1_2}  yields the inequality
\begin{align}
 I(\phi;\Dtrk) \geq -\lambda \mathbb{E}_{p(\Dtrk,\phi)}[ \Delta L_{k}(\phi)] -  \frac{\lambda^2 \sigma^2}{2N},\label{eq:proof1_3}
\end{align} where we have used the identity $I(\phi;\Dtrk)=\Ebb_{p(\phi)}[ \mathrm{D}_{\mathrm{KL}}(p(\Dtrk|\phi)|| p(\Dtrk)) ] $. Inequality \eqref{eq:proof1_3} is a non-negative parabola in $\lambda$, whose discriminant must be non-positive, which implies  the required upper bound \eqref{eq:infoupperbound}.
\end{proof}

The  MI $I(\phi;\Dtrk)$ in \eqref{eq:infoupperbound} is a measure of the \textit{sensitivity} of the  base-learner $\base$ to the input training data. A highly-sensitive base-learner may overfit the training data, resulting in a larger generalization error as reflected by the bound \eqref{eq:infoupperbound}. The upper bound of \eqref{eq:infoupperbound} also depends on the unknown data distribution $\sampledist$ through the MI term, as well as on the sub-Gaussian parameter $\sigma^2$, which is also a function of the the loss function $\ell(Z|\phi)$ via Assumption \ref{assum:convlearning}.

\subsection{Information-Risk Minimization}\label{sec:IRM}
The bound \eqref{eq:infoupperbound} provides useful quantitative insights into the generalization performance of a learning algorithm for a given data distribution. However, its dependence on the data distribution  makes it impossible to directly evaluate the bound \eqref{eq:infoupperbound}. We now present a relaxation of the bound of \eqref{eq:infoupperbound} that motivates a generalized Bayesian learning criterion known as \textit{ information risk minimization} \cite{zhang2006information}. Unlike the bound \eqref{eq:infoupperbound}, this criterion, already used in \eqref{eq:inner}, only depends on the training algorithm and on the training data set $\Dtrk$.

The relaxed bound is based on the
the following variational bound on the mutual information \cite{poole2019variational}, 
\begin{align}
	I(\phi;\Dtrk)&=\Ebb_{p(\Dtrk)}[\mathrm{D}_{\mathrm{KL}}(\base ||p(\phi))]\nonumber \\&\leq \mathbb{E}_{p(\Dtrk)}[\mathrm{D}_{\mathrm{KL}}(\base ||q(\phi))], \label{eq:variationalbound} \end{align}
which holds for any \textit{distribution} $q(\phi)$ on the space $\Phi$ of model parameters. In \eqref{eq:variationalbound}, the distribution $p(\Dtrk)$ represents the marginal of the joint distribution \eqref{eq:jointdistribution}.
Together with the inequality $\sqrt{ab} \leq \frac{ a \beta}{2} + \frac{2 b}{\beta}$ for $\beta>0$,  the inequality \eqref{eq:variationalbound} on the  bound of \eqref{eq:infoupperbound} yield the  
following upper bound on the population loss
\begin{align}
	&\mathbb{E}_{p(\Dtrk,\phi)} [L_{T_k}(\phi)]\nonumber \\& \leq \mathbb{E}_{p(\Dtrk)}\mathbb{E}_{\base}\biggl[\underbrace{  L_{\Dtrk}(\phi)+\frac{\mathrm{D}_{\mathrm{KL}}(\base\lVert q(\phi))}{\beta}}_{:=L^{\beta}_{\Dtrk}(\phi)}\biggr] +\frac{\beta \sigma^2}{2m}. \label{eq:regularizedloss_conventional}
\end{align}

 Inequality \eqref{eq:regularizedloss_conventional} upper bounds the average population loss in terms of a \textbf{regularized training loss}  $L^{\beta}_{\Dtrk}(\phi)$. The regularized training loss presents the KL divergence between the learning algorithm and the distribution $q(\phi)$ as a regularizer that measures the sensitivity of the learning algorithm $\base$ to the training data. The bound \eqref{eq:regularizedloss_conventional} motivates the use of regularized training loss as a training criterion. 

This yields the \textbf{information risk minimization} (IRM) problem  \cite{zhang2006information}
\begin{align}
	\min_{\base} \mathbb{E}_{ \base} \biggl[L_{\Dtrk}(\phi) + \frac{1}{\beta}\mathrm{D}_{\mathrm{KL}}(\base||q(\phi))\biggr], \label{eq:IRM}
\end{align}where the minimization is over the set of all probability distributions defined on the space of model parameters $\Phi$. The minimization \eqref{eq:IRM} corresponds to a generalized form of Bayesian learning \cite{knoblauch2019generalized, simeoneCUP}. In fact, the solution of the above unconstrained optimization problem is given by the \textbf{Gibbs posterior},
\begin{align}
	p^{\mathrm{Gibbs}}(\phi|\Dtrk) \propto q(\phi) \exp(-\beta L_{\Dtrk}(\phi)) \label{eq:Gibbsposterior}.
\end{align}
The Gibbs posterior \eqref{eq:Gibbsposterior} ``tilts" the ``prior" distribution $q(\phi)$ by an amount that depends on the training loss $L_{\Dtrk}(\phi)$ through the exponential function $\exp(-\beta L_{\Dtrk}(\phi))$. In particular, for $\beta=1$ and loss function $\ell(Z|\phi)=-\log p(Z|\phi)$, the Gibbs posterior reduces to the conventional Bayesian posterior \cite{knoblauch2019generalized}.

\subsection{PAC-Bayesian Bounds}\label{sec:PACBayesianbounds}
The information-theoretic bounds discussed in Section~\ref{sec:infobounds_conventionallearning} considered the absolute average of the  generalization error $\Delta L_{k}(\phi)$ in \eqref{eq:avgabsgenerror} over the randomized base-learner as well as over the training dataset. In contrast, PAC-Bayes theory seeks to bound the generalization error, $\mathbb{E}_{\base}[\Delta L_{k}(\phi)]$, on average over the models output by the base-learner, with high probability with respect to the distribution of the training dataset $\Dtrk \sim p(\Dtrk)$. The ``Bayesian" flavor of the bound comes through the definition of a prior distribution $q(\phi)$ defined on the space of model parameters $\Phi$ in a manner similar to \eqref{eq:IRM}.

Under Assumption~\ref{assum:convlearning}, the PAC-Bayesian bound can be stated as follows \cite{alquier2021user}.
\begin{theorem} For any prior distribution $q(\phi)$ defined on the space $\Phi$ of model parameters and $\beta>0$, the following inequality holds with probability at least $1-\delta$, for $\delta \in (0,1)$, with respect to the random draws of training dataset $\Dtrk \sim p(\Dtrk)$:
\begin{align}
 \mathbb{E}_{ \base} [L_{T_k}(\phi)] &\leq   \mathbb{E}_{\base} [L^{\beta}_{\Dtrk}(\phi)]+ \frac{1}{\beta}\log \frac{1}{\delta}+\frac{\beta \sigma^2}{2 N}, \label{eq:PACBayesian_bound}
\end{align}where $L^{\beta}_{\Dtrk}(\phi)$ is the regularized training loss in \eqref{eq:regularizedloss_conventional}. The bound holds simultaneously for all distributions $\base$. \end{theorem}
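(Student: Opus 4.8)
The plan is to adapt the information-theoretic argument behind Theorem~\ref{thm:infobound_convlearning}, but to replace the final averaging over the data by a high-probability (Markov-type) step, and to apply the change of measure between the data-dependent base-learner $\base$ and the \emph{data-independent} prior $q(\phi)$. First I would invoke the Donsker--Varadhan inequality \eqref{eq:DVI}, now in the variable $X=\phi$, with $p(X)=\base$, $q(X)=q(\phi)$, and test function $f(\phi)=\beta\,\Delta L_{k}(\phi)=\beta\bigl(L_{T_k}(\phi)-L_{\Dtrk}(\phi)\bigr)$. Rearranging \eqref{eq:DVI} yields, for every posterior $\base$,
\begin{align}
  \beta\,\Ebb_{\base}[\Delta L_{k}(\phi)]
  \leq \mathrm{D}_{\mathrm{KL}}(\base\|q(\phi))
  + \log \Ebb_{q(\phi)}\bigl[\exp(\beta\,\Delta L_{k}(\phi))\bigr].
\end{align}

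Next I would control the last term. The key observation is that the random variable $\Xi(\Dtrk):=\Ebb_{q(\phi)}[\exp(\beta\,\Delta L_{k}(\phi))]$ depends on the data only through $\Dtrk$ while averaging against the \emph{fixed} prior $q(\phi)$, which is independent of the training set. Taking the expectation over $\Dtrk\sim p(\Dtrk)$ and swapping the two expectations by Tonelli (the integrand is non-negative), I would invoke Assumption~\ref{assum:convlearning}: for each fixed $\phi$, the training loss $L_{\Dtrk}(\phi)$ is an average of $N$ i.i.d. $\sigma^2$-sub-Gaussian terms, hence $\Delta L_{k}(\phi)$ is zero-mean and $\sigma^2/N$-sub-Gaussian, giving $\Ebb_{p(\Dtrk)}[\exp(\beta\,\Delta L_{k}(\phi))]\leq \exp(\beta^2\sigma^2/(2N))$ and therefore $\Ebb_{p(\Dtrk)}[\Xi]\leq \exp(\beta^2\sigma^2/(2N))$.

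I would then apply Markov's inequality to the non-negative random variable $\Xi$: with probability at least $1-\delta$ over $\Dtrk$,
\begin{align}
  \Xi(\Dtrk)\leq \frac{1}{\delta}\,\Ebb_{p(\Dtrk)}[\Xi]
  \leq \frac{1}{\delta}\exp\!\Bigl(\frac{\beta^2\sigma^2}{2N}\Bigr),
  \qquad\text{hence}\qquad
  \log \Xi(\Dtrk)\leq \log\frac{1}{\delta}+\frac{\beta^2\sigma^2}{2N}.
\end{align}
Substituting this into the Donsker--Varadhan bound, dividing by $\beta$, and recalling both $\Ebb_{\base}[\Delta L_{k}(\phi)]=\Ebb_{\base}[L_{T_k}(\phi)]-\Ebb_{\base}[L_{\Dtrk}(\phi)]$ and the identity $\Ebb_{\base}[L^{\beta}_{\Dtrk}(\phi)]=\Ebb_{\base}[L_{\Dtrk}(\phi)]+\mathrm{D}_{\mathrm{KL}}(\base\|q(\phi))/\beta$ implied by \eqref{eq:regularizedloss_conventional}, gives exactly \eqref{eq:PACBayesian_bound}.

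The step I expect to require the most care is justifying that the bound holds \emph{simultaneously for all} posteriors $\base$. The resolution is that the only randomness entering the high-probability event is through $\Xi(\Dtrk)$, which is defined purely in terms of the fixed prior $q(\phi)$ and the data and does not reference $\base$ at all. Consequently, on the single good event $\{\Xi(\Dtrk)\leq \delta^{-1}\exp(\beta^2\sigma^2/(2N))\}$, which has probability at least $1-\delta$, the Donsker--Varadhan inequality yields the claim for every choice of $\base$ at once. This is precisely what the change of measure to a data-independent prior buys us, and it is the feature that distinguishes the PAC-Bayes bound from the in-expectation bound of Theorem~\ref{thm:infobound_convlearning}.
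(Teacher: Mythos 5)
Your proof is correct and is essentially the paper's own argument: both rest on applying Markov's inequality to the prior-averaged exponentiated generalization error $\Ebb_{q(\phi)}[\exp(\beta\,\Delta L_k(\phi))]$, bounding its mean via the sub-Gaussianity of Assumption~\ref{assum:convlearning}, and transferring from the data-independent prior $q(\phi)$ to an arbitrary posterior $\base$ by a change of measure. The only cosmetic difference is ordering: you invoke Donsker--Varadhan \eqref{eq:DVI} (which is exactly the paper's change-of-measure plus Jensen step) before the Markov step, whereas the paper applies Markov first and then unpacks the change of measure; your closing remark on why the bound holds uniformly over all $\base$ is also the correct justification.
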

\begin{proof}
The PAC-Bayesian bound in \eqref{eq:PACBayesian_bound} can be derived by using Markov's inequality, followed by the application of change of measure as outlined next. Let $U(\Dtrk)=$ $\mathbb{E}_{q(\phi)}[\exp(\beta \Delta L_k(\phi))]$ denote the average $\beta$-exponentiated generalization error of the $k$th task. From Markov's inequality, we get that with probability at least $1-\delta$ over the random training dataset $\Dtrk$, the following inequalities hold
\begin{align}U(\Dtrk) \leq \frac{\mathbb{E}_{p(\Dtrk)}\mathbb{E}_{q(\phi)}[\exp(\beta \Delta L_k(\phi))]}{\delta} \leq \frac{\exp(\beta^2 \sigma^2/2N)}{\delta}, \label{eq:proof2_1}\end{align} where the last inequality follows from Assumption~\ref{assum:convlearning}. The left-hand side of \eqref{eq:proof2_1} can be equivalently rewritten, via a change-of-measure step, as   $$U(\Dtrk)= \mathbb{E}_{\base}\biggl[\exp\biggl(\beta \Delta L_k(\phi)-\log \frac{\base}{q(\phi)}\biggr)\biggr].$$ By \eqref{eq:proof2_1}, this implies that with probability at least $1-\delta$, we have the inequality
\begin{align}
	 \mathbb{E}_{\base}\biggl[\exp\biggl(\beta \Delta L_k(\phi)-\log \frac{\base}{q(\phi)}\biggr)\biggr] \leq \frac{\exp(\beta^2 \sigma^2/2N)}{\delta} \label{eq:proof2_2},
\end{align}for all $\base$. Applying Jensen's inequality on the left hand side of \eqref{eq:proof2_2} to take the expectation inside the exponential function, and subsequently taking logarithm on both sides,  yield the PAC-Bayesian bound in \eqref{eq:PACBayesian_bound}.
\end{proof}
\subsection{Information Risk Minimization Revisited}
The PAC-Bayesian bound \eqref{eq:PACBayesian_bound} has two important distinguishing features as compared to the information-theoretic bound \eqref{eq:infoupperbound}: $(a)$ it is data-distribution independent, while only depending on the available training data; and $(b)$ it holds uniformly over all learning algorithms. This formally motivates the use of regularized training loss \eqref{eq:regularizedloss_conventional} as a training criterion, providing a more principled derivation of IRM as a learning approach \cite{zhang2006information}.
 
\section{Generalization Error in Meta-Learning}\label{sec:genbounds_metalearning}
We now turn to the analysis of generalization for meta-learning. As discussed in Section~1, meta-learning aims to automatically optimize aspects of the 
inductive bias, encompassing the specifications of the model class  and base-learner (or learning algorithm), that are shared across the learning tasks. In this section, we  fix the model class and consider the inductive bias to be the vector of hyperparameters $\theta$ of the stochastic base-learner. Accordingly, the base-learner is described by the conditional distribution $\basetheta$ that maps the training data $\Dtrk$ and the hyperparameter vector $\theta$ to a vector of model parameters $\phi$. 

The goal of meta-learning is to automatically optimize the hyperparameter vector $\theta$ by observing data from a number of \textit{related} tasks. A key question in the learning-theoretic formulation of meta-learning is how to model the relatedness between the tasks. Following the standard formulation in \cite{baxter1998theoretical},  the tasks are modelled here as belonging to a \textit{task environment}, which describes a probability distribution $\taskdist{}$ over the space $\mathcal{T}$ of tasks as well as per-task data distributions $\{p(Z|T)\}$ for all tasks $T \in \mathcal{T}$. 

During meta-training, a meta-learner observes data from a finite number $K$ of \textit{meta-training tasks} $(T_1, \hdots, T_K)$, which are sampled i.i.d. according to the task distribution $\taskdist$. For each task $T_k \sim \taskdist$,  the meta-learner observes the corresponding training data set $\Dtrk$ of $N$ samples, which are sampled i.i.d. according to the per-task data distribution $\sampledist$. The resulting collection $\{\Dtrk\}_{k=1}^K$ of data sets from $K$ tasks constitute the \textit{meta-training data set}.  The meta-learner uses the meta-training data set $\{\Dtrk\}_{k=1}^K$ to optimize the hyperparameter vector $\theta$.

During meta-testing, the meta-learner encounters a new, previously unobserved, \textit{meta-test task} $T \sim \taskdist$, sampled from the same task environment, and observes the corresponding training dataset $\DtrT$. The base-learner $\basethetaT$  uses the  meta-learned hyperparameter vector $\theta$ and the meta-test task training data $\DtrT$ to optimize a task-specific model parameter $\phi$. 

The ideal goal of the meta-learner is to ensure that the population loss, $L_T(\phi)$, of the meta-test task accrued for the trained model parameter $\phi$, is minimized. As in \eqref{eq:avgabsgenerror}, the loss is averaged over the model parameter vectors $\phi$ output by the base-learner $\basethetaT$. Furthermore, an expectation is also evaluated across the meta-test task and training data set. The resulting problem amounts to finding a hyperparameter vector $\theta$ that  minimizes the \textit{meta-population loss},
\begin{align}
	\mathcal{L}(\theta)= \mathbb{E}_{\taskdist p(\DtrT)} \mathbb{E}_{\basethetaT}[L_{T}(\phi)]=\Ebb_{p(T)}[\Lscr_T(\theta)] \label{eq:metapopulationloss},
\end{align}where
\begin{align}
    \Lscr_T(\theta)= \mathbb{E}_{p(\DtrT)} \mathbb{E}_{\basethetaT}[L_{T}(\phi)],\label{eq:pertaskmetapopulationloss}
\end{align} and the meta-test task population loss $L_T(\phi)$ is as defined in \eqref{eq:populationloss_conventional}. 

The meta-population loss  \eqref{eq:metapopulationloss} cannot be evaluated since the task distribution $\taskdist$ as well as the per-task distribution $p(Z|T)$ are unknown. The meta-learner instead uses the \textit{meta-training loss} (see also \eqref{eq:outer} from previous section),
\begin{align}
	\mathcal{L}_{\{\Dtrk\}_{k=1}^K}(\theta) = \frac{1}{K} \sum_{k=1}^K L_{\Dtrk}(\theta) \label{eq:metatrainingloss},
\end{align} where \begin{align}L_{\Dtrk}(\theta) =\mathbb{E}_{\basetheta}[L_{\Dtrk}(\phi)]\label{eq:auxiliaryloss}\end{align} is the average per-task training loss, defined  in \eqref{eq:trainingloss_conventional},  over all model parameter vectors output by the base-learner.  

In a manner similar to the discussion on conventional learning in the previous subsection, the difference between the meta-population loss and meta-training loss is introduced as the \textit{meta-generalization error}
\begin{align}
	\Delta \mathcal{L}(\theta) = \mathcal{L}(\theta) - \mathcal{L}_{\{\Dtrk\}_{k=1}^K}(\theta) \label{eq:metageneralizationerror}.
\end{align}
A large meta-generalization error is an indication that the meta-learner's choice of the hyperparameter vector $\theta$ overfits to the meta-training data, failing to adapt to new previously, unobserved meta-test tasks. The following example illustrates the concept of meta-generalization error and meta-overfitting.
 As an example, consider the 3D-object pose prediction problem described in \cite{yin2020_maml_pp}, in which the input $X$ consists of a grey-scale image of a rotated object in a 3D space, and the output $Y$ reports the angle of rotation with respect to a canonical pose. A task corresponds to a specific object with a given canonical pose. When meta-training on a limited number of similar objects, the meta-learner may be able to find a single model that assigns the correct rotation angle to all inputs for all meta-training tasks. Such model can be also found via joint learning, whereby the model parameters $\phi_k$ for all meta-training tasks coincide with the hyperparameter vector $\theta$ (see Section 1).  In such cases, when meta-testing on a new, sufficiently different, object, the training algorithm fails to adapt, and the inductive bias optimized via meta-learning impairs training for new tasks. As a result, the meta-generalization error is large, and we say that we have \textit{meta-overfitting}.

In the next subsections, we seek to address the following two main questions: What factors contribute to the meta-generalization error? How do we quantify them? Recall that in conventional learning, the generalization error is the result of the availability of an insufficient number of training samples to train the base-learner. Since meta-learning is a bilevel optimization problem, as detailed in Section \ref{sec:ch3:bilevel-opt}, intuitively, the following factors contribute to the meta-generalization error:
\begin{enumerate}
	\item [$\bullet$] the \textit{within-task generalization error} due to a finite number of observed per-task data samples, as in conventional learning;
	\item [$\bullet$] the \textit{environment-level generalization error} due to the availability  of a finite number of meta-training tasks;
	\item [$\bullet$] and the\textit{ similarity}, or \textit{relatedness}, \textit{between the  tasks} encompassed by the task environment.
\end{enumerate}
 In the next subsection, we discuss information-theoretic bounds on meta-generalization error that address and quantify these three separate contributions to the meta-generalization error.
\section{Information-Theoretic Bounds on Meta-Generalization Error} \label{sec:infobounds_metageneralization}
In this subsection, we  provide an introduction to  information-theoretic  upper bounds on the meta-generalization error. We first extend the analysis in Section~\ref{sec:infobounds_conventionallearning} by accounting for the first two contributions to the meta-generalization error mentioned above. Then, we discuss a novel bound that explicitly quantifies   the third contribution.

The first step to obtain information-theoretic bounds on the meta-generalization error is to define a \textit{stochastic meta-learner}, in a manner analogous to the randomized base-learner studied in Section~\ref{sec:genbounds_conventional learning}. A stochastic meta-learner is described by a conditional distribution $\metalearner$ that maps the meta-training data $\metaset$ to the hyperparameter vector $\theta$.  Using the mapping $\metalearner$, the meta-learner samples a hyperparameter vector $\theta  $ from the conditional distribution $\metalearner$, which is then used by the randomized base-learner $p(\phi|\DtrT,\theta)$ during meta-testing.
\subsection{Information-Theoretic Bounds}

The performance metric of interest in this section is a natural extension from conventional learning to meta-learning \eqref{eq:avgabsgenerror}. Accordingly, we define the \textbf{absolute average meta-generalization error} as the absolute value of the meta-generalization error \eqref{eq:metageneralizationerror} averaged over the outputs of the randomized meta-learner as well as the meta-training set, {i.e.},
\begin{align}
	|\overline{\Delta \mathcal{L}}^{\mathrm{avg}}|=  |\mathbb{E}_{p(\metaset,\theta)}  [ \Delta \mathcal{L}(\theta)]| \label{eq:absavgmetagenerror}.
\end{align} In \eqref{eq:absavgmetagenerror}, the expectation is with respect to the joint distribution \begin{align}
p(\metaset,\theta)=p(\metaset)\metalearner , \end{align} where $p(\metaset)=\prod_{k=1}^K P_{\Dtrk}$ is the distribution of the meta-training set, with $p(\Dtr)$ being the marginal of the joint distribution $p(T,\DtrT)=p(T)p(\DtrT)$.

To obtain an upper bound on \eqref{eq:absavgmetagenerror}, the key step is to decompose the meta-generalization error \eqref{eq:metageneralizationerror} into terms that account for the \textit{within-task generalization error} and for the \textit{environment-level generalization error}. This can be done by defining an auxiliary loss function
\begin{align}
	\bar{\mathcal{L}}(\theta) = \mathbb{E}_{ p(T,\DtrT)}[L_{\mathcal{D}^{\mathrm{tr}}_T}(\theta)]=\Ebb_{p(T)}[\bar{\Lscr}_T(\theta)], \label{eq:Lbar}
\end{align}where
\begin{align}
    \bar{\Lscr}_T(\theta)= \mathbb{E}_{ p(\DtrT)}[L_{\mathcal{D}^{\mathrm{tr}}_T}(\theta)].
\end{align} The function \eqref{eq:Lbar} is the average of the training loss $L_{\mathcal{D}^{\mathrm{tr}}_T}(\theta)$ in \eqref{eq:auxiliaryloss} over randomly sampled meta-test data sets from the task environment. Using this function, the meta-generalization error $\Delta \mathcal{L}(\theta)$ in \eqref{eq:absavgmetagenerror} can be decomposed as the sum
\begin{align}
	\Delta \mathcal{L}(\theta) =\underbrace{\mathcal{L}(\theta) -  \bar{\mathcal{L}}(\theta)}_{\mbox{within-task gen. error}} +\underbrace{\bar{\mathcal{L}}(\theta) - \mathcal{L}_{\metaset}(\theta)}_{\mbox{environment-level gen. error}} \label{eq:decomposition_meta}.
\end{align} 

The first difference in \eqref{eq:decomposition_meta} captures the generalization error of a meta-test task randomly sampled from the task environment. A non-zero difference, $\mathcal{L}(\theta) -  \bar{\mathcal{L}}(\theta)$, is due to the availability of a finite number $N$ of training data samples for the meta-test task. In contrast, the second difference in \eqref{eq:decomposition_meta} accounts for the environment-level generalization error,  which is a consequence of the finite number $K$ of meta-training tasks. 
Together with the triangle inequality, the decomposition \eqref{eq:decomposition_meta} can be used to upper bound the absolute average meta-generalization error as
\begin{align}
	|\overline{\Delta \mathcal{L}}^{\mathrm{avg}}| &\leq |\mathbb{E}_{p(\metaset,\theta)} [  \mathcal{L}(\theta)-\bar{\mathcal{L}}(\theta]| \nonumber \\&+|\mathbb{E}_{p(\metaset,\theta)} [ \bar{\mathcal{L}}(\theta)-\mathcal{L}_{\metaset}(\theta)]|    \label{eq:decomposition_2}.
\end{align}
Each of the terms in \eqref{eq:decomposition_2} can be bounded separately to obtain an upper bound on the absolute average meta-generalization error. To this end, we make the following assumptions on the loss function.
\begin{assumption}\label{assum:1}
	The following assumptions hold:
	\begin{itemize}
		\item[$(a)$] The loss function $\ell(Z|\phi)$ is $\sigma^2_T$-sub-Gaussian with respect to the distribution $p(Z|T)$ of task $T \in \mathcal{T}$  for all $\phi \in \Phi$;
		\item[$(b)$] The average training loss $L_{\mathcal{D}^{\mathrm{tr}}}(\theta)$, defined in \eqref{eq:auxiliaryloss}, is $\delta^2$-sub-Gaussian with respect to the distribution $p(\mathcal{D}^{\mathrm{tr}})$ (which is the marginal of the joint distribution $p(T,\DtrT)$) for all $\theta \in \Theta$.
	\end{itemize}
\end{assumption}
\begin{theorem}\label{thm:infobound_metageneralizationerror}
	Under Assumption~\ref{assum:1} the following upper bound on the absolute average meta-generalization error holds
	\begin{align}
		|\overline{\Delta \mathcal{L}}^{\mathrm{avg}}| &\leq \sqrt{\frac{2 \delta^2}{K}I\Bigl(\theta;\metaset\Bigr)} + \mathbb{E}_{p(T)} \biggl[\sqrt{\frac{2\sigma_T^2}{N}I(\phi;\mathcal{D}^{\mathrm{tr}}_T)} \biggr]. \label{eq:infobound_metageneralizationerror}
	\end{align}
\end{theorem}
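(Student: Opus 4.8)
The plan is to bound the two summands in the triangle-inequality decomposition \eqref{eq:decomposition_2} separately, invoking the single-task information-theoretic bound of Theorem~\ref{thm:infobound_convlearning} at the appropriate level of the hierarchy in each case. The decomposition \eqref{eq:decomposition_meta} is engineered precisely so that each summand has the structure of a (population minus empirical) generalization gap, so the whole argument reduces to recognizing, for each term, what plays the role of the i.i.d.\ samples, the loss, and the randomized learning algorithm in Theorem~\ref{thm:infobound_convlearning}.

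For the environment-level term $|\Ebb_{p(\metaset,\theta)}[\bar{\Lscr}(\theta)-\Lscr_{\metaset}(\theta)]|$, I would read the meta-level problem as an instance of conventional learning: the $K$ tasks together with their per-task data are the i.i.d.\ ``samples,'' the per-task average training loss $L_{\Dtr}(\theta)$ of \eqref{eq:auxiliaryloss} plays the role of the per-sample loss, the meta-learner $\metalearner$ plays the role of the stochastic base-learner, and $\theta$ plays the role of the trained parameter. Since $\Lscr_{\metaset}(\theta)=\frac1K\sum_k L_{\Dtrk}(\theta)$ is the empirical average of these i.i.d.\ losses and $\bar{\Lscr}(\theta)=\Ebb_{p(T,\DtrT)}[L_{\DtrT}(\theta)]$ is their common expectation, Assumption~\ref{assum:1}$(b)$ supplies exactly the $\delta^2$-sub-Gaussianity that, by i.i.d.\ task sampling, makes $\Lscr_{\metaset}(\theta)$ a $(\delta^2/K)$-sub-Gaussian average, mirroring the role of $\sigma^2/N$ in the proof of Theorem~\ref{thm:infobound_convlearning}. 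Applying that theorem with the substitutions $N\mapsto K$, $\sigma^2\mapsto\delta^2$, $\phi\mapsto\theta$, $\Dtrk\mapsto\metaset$ then yields the first term $\sqrt{2\delta^2 I(\theta;\metaset)/K}$.

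For the within-task term $|\Ebb_{p(\metaset,\theta)}[\Lscr(\theta)-\bar{\Lscr}(\theta)]|$, I would first note that $\Lscr(\theta)-\bar{\Lscr}(\theta)$ depends on the meta-training data only through $\theta$, so the expectation collapses to one over the marginal $p(\theta)$. Writing $\Lscr(\theta)-\bar{\Lscr}(\theta)=\Ebb_{\taskdist}[\Lscr_T(\theta)-\bar{\Lscr}_T(\theta)]$ exposes the per-task gap $\Ebb_{p(\DtrT)}\Ebb_{\basethetaT}[L_T(\phi)-L_{\DtrT}(\phi)]$, which is exactly a conventional within-task generalization error for the base-learner $\basethetaT$. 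The key structural observation is that the meta-test data $\DtrT$ is independent of the meta-training data, and hence of $\theta$; this permits folding $\theta\sim p(\theta)$ into an effective stochastic base-learner $\tilde p(\phi|\DtrT)=\Ebb_{p(\theta)}[\basethetaT]$ acting on $\DtrT$ alone, since the losses $L_T(\phi)$ and $L_{\DtrT}(\phi)$ do not depend on $\theta$ directly. Applying Theorem~\ref{thm:infobound_convlearning} to this effective learner, using the $\sigma_T^2$-sub-Gaussianity of Assumption~\ref{assum:1}$(a)$, bounds each per-task gap by $\sqrt{2\sigma_T^2 I(\phi;\DtrT)/N}$; pulling $\Ebb_{\taskdist}$ outside the absolute value by the triangle inequality then produces the second term $\Ebb_{\taskdist}[\sqrt{2\sigma_T^2 I(\phi;\DtrT)/N}]$, and adding the two bounds completes the proof.

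I expect the delicate step to be the within-task term. One must carefully justify marginalizing the random hyperparameter $\theta$ into the base-learner, which hinges on the independence of meta-training and meta-test data, so that the resulting mutual information is the unconditional $I(\phi;\DtrT)$ appearing in the statement rather than a conditional quantity; and one must retain $\Ebb_{\taskdist}[\sqrt{\cdot}]$ with the task expectation kept outside the concave square root, rather than loosening it to $\sqrt{\Ebb_{\taskdist}[\cdot]}$ via Jensen. The environment-level term, by contrast, is an essentially mechanical re-indexing of Theorem~\ref{thm:infobound_convlearning}.
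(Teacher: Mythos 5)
Your proposal is correct and takes essentially the same approach as the paper, which likewise bounds the two terms of the decomposition \eqref{eq:decomposition_2} by repeating the Donsker--Varadhan/sub-Gaussian argument of Theorem~\ref{thm:infobound_convlearning} at the environment level (with $K$, $\delta^2$, $\theta$, $\{\mathcal{D}^{\mathrm{tr}}_k\}_{k=1}^K$ substituted for $N$, $\sigma^2$, $\phi$, $\mathcal{D}^{\mathrm{tr}}_k$) and at the within-task level, deferring the details to \cite{jose2021information_entropy}. Your marginalization of $\theta$ into an effective base-learner $\tilde{p}(\phi|\mathcal{D}^{\mathrm{tr}}_T)$, justified by the independence of meta-test and meta-training data and yielding the unconditional mutual information $I(\phi;\mathcal{D}^{\mathrm{tr}}_T)$, correctly fills in the step the paper leaves to the cited reference.
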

\begin{proof}
	To obtain the required upper bound, use Assumption~\ref{assum:1} to bound each of the two terms in \eqref{eq:decomposition_2} in a manner similar to the proof of Theorem~\ref{thm:infobound_convlearning}. We refer the readers to \cite{jose2021information_entropy} for details.
\end{proof}
Theorem~\ref{thm:infobound_metageneralizationerror}  provides an information-theoretic bound on the absolute average meta-generalization error that captures: $(a)$ the within-task generalization error via the ratio of the MI $I(\phi; \DtrT)$ to the number of per-task data samples;  and $(b)$ the environment-level generalization error via the ratio of the MI $I(\theta;\metaset)$ between the hyperparameter vector and meta-training tasks to the number $K$ of meta-training tasks. As discussed in Section~\ref{sec:genbounds_conventional learning}, the MI $I(\phi; \DtrT)$  measures the sensitivity of the base-learner to the input training dataset, while the MI $I(\theta;\metaset)$ captures the sensitivity of the hyperparameter vector to the meta-training dataset. Theorem~\ref{thm:infobound_metageneralizationerror} indicates that, in order to ensure a low meta-generalization error, the  two mutual information terms in \eqref{eq:infobound_metageneralizationerror} must be kept small as compared to $K$ and $N$, respectively.

While the bound in \eqref{eq:infobound_metageneralizationerror} captures the within-task and environment-level generalization errors, it does not provide insights into how the similarity between the tasks affects the meta-generalization error. In fact, the similarity between tasks is determined by the statistical properties of the task-environment $(p(T),\{p(Z|T)\})$ comprising of the task distribution $p(T)$ and the per-task distributions $\{p(Z|T)\}$. Therefore, the marginal  $p(\Dtr)$ of the joint distribution $p(T,\Dtr)$  inherently capture the statistical properties of the task environment. The MI term $I(\theta;\metaset )$ evaluated over meta-training dataset sampled i.i.d. according to the marginal distribution $p(\Dtr)$ hence implicitly accounts for the relatedness between tasks.

In the next section, we discuss an information-theoretic bound that explicitly captures the impact of task relatedness.

\subsection{Impact of Task Similarity on Meta-Generalization Error}
As discussed, the similarity between the tasks is determined by the statistical properties of the task environment. In this subsection, we seek answers to two questions: How to quantify the similarity between the tasks? How does task similarity impact   meta-generalization error?

To address the first question, following \cite{jose2021information}, we consider the following definition of relatedness between tasks in a task environment.
\begin{definition}
	A task environment $(p(T),\{p(Z|T)\})$ is said to be \textbf{$\epsilon$-related} with respect to a divergence measure $\mathrm{D}(\cdot || \cdot)$ if, on average over the independent selection of two tasks $T$ and $T' \sim p(T)$, the divergence $\mathrm{D}(p(\DtrT)\lVert p(\Dtr_{T'}) )$ is smaller than $\epsilon$, {i.e.},  the following inequality is satisfied
	\begin{align}
		\mathbb{E}_{T,T'\sim p(T)}\biggl[D\Bigl(p(\DtrT)\lVert p(\Dtr_{T'}) \Bigr) \biggr] \leq  \epsilon. \label{eq:relatednessmeasure}
	\end{align}
\end{definition}
Of particular interest are the KL divergence and Jensen-Shannon (JS) divergence. In the former case, we say that the task environment is
$\epsilon$-KL related, whereas in the latter case, the task environment is $\epsilon$-JS related. For two distributions $P$ and $Q$, the JS divergence between the distributions is defined as
\begin{align}
	\mathrm{D}_{\mathrm{JS}}(P \lVert Q)=0.5 \mathrm{D}_{\mathrm{KL}}(P \lVert 0.5(P+Q))+0.5 \mathrm{D}_{\mathrm{KL}}(Q \lVert0.5(P+Q)). 
\end{align} 

To get an intuitive understanding of the $\epsilon$-relatedness measure introduced in \eqref{eq:relatednessmeasure}, consider the following example.
\begin{example} \label{example:gaussian}
Assume that the data distribution for task $\tau$ is normally distributed as $p(Z|T=\tau)=\mathcal{N}(\tau,\nu^2)$ with mean $\tau$ and variance $\nu^2$. The task distribution $p(T)=\mathcal{N}(\bar{\mu};\bar{\nu}^2)$ defines a distribution over the mean parameter $\tau$ with mean $\bar{\mu}$ and variance $\bar{\nu}^2$. We then have 
	\begin{align}
		\mathbb{E}_{T,T'\sim p(T)}\biggl[D\Bigl(p(\DtrT)\lVert p(\Dtr_{T'}) \Bigr) \biggr]
		= \frac{N\bar{\nu}^2}{\nu^2}, \label{eq:example}
	\end{align}
	and hence the task environment is $\epsilon$-KL related if the inequality $N\bar{\nu}^2/\nu^2 \leq \epsilon$ holds. Note that, as the per-task data variance $\nu^2$ decreases for a given task variance $\bar{\nu}^2$, the task dissimilarity parameter $\epsilon$ grows   large.
\end{example}

The example also illustrates a potential drawback of using the KL divergence-based measure of task relatedness. Since the KL divergence in \eqref{eq:relatednessmeasure} is taken with respect to the i.i.d. distributions $p(\DtrT)=\prod_{j=1}^N p(Z_j|T)$, the tensorization property \cite{cover1999elements} of the KL divergence results in a KL divergence that scales with $N$, leading to an increasing measure of task dissimilarity with $N$. In contrast, the JS divergence is always bounded, i.e., $\mathrm{D}_{\mathrm{JS}}(P\lVert Q) \leq \log(2)$,  yielding without loss of generality a bounded task relatedness parameter $\epsilon \leq \log(2)$.

Having defined the measures of task-relatedness, the next question is how to explicitly characterize its impact on meta-generalization error. Towards understanding this aspect, note that in the absolute average meta-generalization error \eqref{eq:absavgmetagenerror}, the generalization error corresponding to each selection of meta-training and meta-test tasks from the task environment are ``mixed'' in the sense that their contributions are averaged.  This can be easily seen from the following equivalent characterization of the absolute average meta-generalization error \eqref{eq:absavgmetagenerror}:
\begin{align}
	|\overline{\Delta \mathcal{L}}^{\mathrm{avg}}|=\Bigl|\mathbb{E}_{p(T),p(\{T_k\}_{k=1}^K)}\mathbb{E}_{p(\metaset,\theta)}[\mathcal{L}_T(\theta)-\mathcal{L}_{\metaset}(\theta)]\Bigr|, \label{eq:absavgmetagenerror_2}
\end{align} where $\Lscr_T(\theta)$ in \eqref{eq:pertaskmetapopulationloss} is the per-task meta-population loss. The relatedness between the tasks becomes explicit when one analyze the generalization error incurred when a meta-learner trained on a given set of meta-training tasks is tested on a given meta-test task. Since the generalization error incurred on each selection of meta-training tasks and meta-test task is not separately considered in \eqref{eq:absavgmetagenerror}, the performance criterion $|\overline{\Delta \mathcal{L}}^{\mathrm{avg}}|$ fails to explicitly capture the impact of task relatedness on the meta-generalization error.

To mitigate the above drawback of the performance criterion in \eqref{eq:absavgmetagenerror}, following \cite{jose2021information},  this section adopts as the performance criterion the \textit{average absolute meta-generalization error}, which is defined as
\begin{align}
\!\!	|\overline{\Delta \mathcal{L}}|^{\mathrm{avg}}=\mathbb{E}_{p(T),p(\{T_k\}_{k=1}^K)}\biggl[\Bigl|\mathbb{E}_{p(\metaset,\theta)}[\Lscr_T(\theta)-\mathcal{L}_{\metaset}(\theta)]\Bigr|\biggr]. \label{eq:avgabsmetagenerror}
\end{align} The average absolute meta-generalization error in \eqref{eq:avgabsmetagenerror} evaluates the absolute value of the generalization error corresponding to each selection of meta-test task and meta-training tasks; and the resulting absolute values are averaged over the tasks.

The following result gives upper bound on the average absolute meta-generalization error in \eqref{eq:avgabsmetagenerror}.
\begin{theorem}
Let Assumption~\ref{assum:1} holds with Assumption \ref{assum:1}$(b)$ satisfied for the distribution $p(\DtrT)$ for every choice of task $T \in \mathcal{T}$. If the task environment is $\epsilon$-KL related, then the following upper bound on the average absolute meta-generalization error holds:
	\begin{align}
		|\overline{\Delta \mathcal{L}}|^{\mathrm{avg}} &\leq   \sqrt{2 \delta^2 \biggl(\frac{1}{K}I\Bigl(\theta;\metaset|\taskset\Bigr)+\epsilon\biggr)} \nonumber \\&+ \mathbb{E}_{T' \sim p(T)}\sqrt{2 \sigma_{T'}^2 \frac{I(\phi;\Dtr_{T'}|\taskset)}{N}}. \label{eq:bound_avgabsmetagenerror}
	\end{align} 
\end{theorem}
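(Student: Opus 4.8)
The plan is to follow the route of Theorem~\ref{thm:infobound_convlearning} and Theorem~\ref{thm:infobound_metageneralizationerror}, while exploiting the fact that the new criterion \eqref{eq:avgabsmetagenerror} places the absolute value \emph{inside} the expectation over tasks. Accordingly, I would first fix a realization of the meta-test task $T$ and of the meta-training tasks $\taskset$, bound the conditional gap $\Ebb_{p(\metaset,\theta)}[\Lscr_T(\theta)-\mathcal{L}_{\metaset}(\theta)]$, and only at the end average over the tasks. The workhorse throughout is the Donsker--Varadhan inequality \eqref{eq:DVI} together with the sub-Gaussian Assumption~\ref{assum:1} and the ``non-negative parabola in $\lambda$'' argument used in the proof of Theorem~\ref{thm:infobound_convlearning}.

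First I would introduce the averaged meta-test training loss $\bar{\Lscr}_T(\theta)=\Ebb_{p(\DtrT)}[L_{\DtrT}(\theta)]$ and split the conditional gap, via the triangle inequality, into a \textbf{within-task} term $\Lscr_T(\theta)-\bar{\Lscr}_T(\theta)$ and an \textbf{environment-plus-discrepancy} term $\bar{\Lscr}_T(\theta)-\mathcal{L}_{\metaset}(\theta)$. The within-task term is handled exactly as in Theorem~\ref{thm:infobound_convlearning}: I apply \eqref{eq:DVI} to the coupled and decoupled laws of $(\phi,\DtrT)$, use Assumption~\ref{assum:1}$(a)$ to control the log-moment generating function, and optimize over $\lambda$. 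After averaging the resulting bound over $\taskset$ \emph{inside} the square root (Jensen) and over $T\sim\taskdist$ \emph{outside} it, this yields the second summand $\Ebb_{T'\sim\taskdist}\sqrt{2\sigma_{T'}^2\,I(\phi;\Dtr_{T'}|\taskset)/N}$ of \eqref{eq:bound_avgabsmetagenerror}; keeping the $T'$-average outside is legitimate and in fact tighter by concavity.

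The crux is the environment-plus-discrepancy term, where the two error sources must be combined under a \emph{single} square root rather than bounded separately --- indeed, adding two separate square-root bounds would be \emph{larger} than the claimed bound, since $\sqrt{a}+\sqrt{b}\ge\sqrt{a+b}$. The idea is to run one change of measure that simultaneously (i) decouples $\theta$ from the meta-training data $\metaset$, and (ii) replaces each meta-training data law $p(\Dtrk)$ by the meta-test data law $p(\DtrT)$. Inserting $f=\lambda\,\mathcal{L}_{\metaset}(\theta)$ into \eqref{eq:DVI} with this composite reference measure, the divergence on the right-hand side splits additively into a decoupling contribution --- whose average over task realizations is precisely the conditional mutual information $I(\theta;\metaset|\taskset)$ --- and a task-relatedness contribution $\tfrac{1}{K}\sum_{k}\mathrm{D}_{\mathrm{KL}}(p(\DtrT)\Vert p(\Dtrk))$; meanwhile Assumption~\ref{assum:1}$(b)$ bounds the log-moment term by $\lambda^2\delta^2/(2K)$. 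Optimizing the resulting parabola in $\lambda$ gives, for fixed tasks, a bound whose square-root argument is $2\delta^2$ times the sum of these two (per-realization) quantities.

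Finally I would take the outer expectation $\Ebb_{\taskdist,\,p(\taskset)}[\cdot]$ and, for this term, pull it \emph{inside} the square root by Jensen's inequality: the averaged decoupling term becomes $\tfrac1K I(\theta;\metaset|\taskset)$, and the averaged task-relatedness term is bounded by $\epsilon$ directly from the definition of $\epsilon$-KL relatedness \eqref{eq:relatednessmeasure}, producing the first summand $\sqrt{2\delta^2(\tfrac1K I(\theta;\metaset|\taskset)+\epsilon)}$. I expect the main obstacle to be the bookkeeping in this composite change of measure: one must choose a single reference law under which the decoupling KL and the relatedness KL add cleanly, while respecting that $\theta$ depends on \emph{all} $K$ meta-training sets jointly (not one per task) and that $\epsilon$-relatedness is only defined after averaging over independent task pairs. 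Making these two change-of-measure steps compose into one parabola in $\lambda$ --- so that the final bound tightens to a single square root rather than a sum of two --- is the delicate point; the within-task term and the closing Jensen/averaging steps are then routine adaptations of Theorems~\ref{thm:infobound_convlearning}--\ref{thm:infobound_metageneralizationerror}.
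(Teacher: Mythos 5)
Your proposal follows essentially the same route as the paper's proof: the same decomposition into within-task and environment-level errors, the same Donsker--Varadhan change of measure combined with the sub-Gaussianity assumptions and the parabola-in-$\lambda$ argument, with the composite reference measure producing the conditional mutual information plus a task-relatedness KL term inside a single square root, bounded by $\epsilon$ after the closing Jensen/averaging step. The only minor slip is the direction of the relatedness divergence --- taking the coupled meta-training law as the base measure in Donsker--Varadhan yields $\mathrm{D}_{\mathrm{KL}}(p(\Dtrk)\lVert p(\DtrT))$ rather than your $\mathrm{D}_{\mathrm{KL}}(p(\DtrT)\lVert p(\Dtrk))$ --- but this is immaterial for the final bound, since the $\epsilon$-relatedness condition averages over i.i.d. task pairs and is therefore symmetric in the two arguments.
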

\begin{proof}
To obtain the required bound, we follow similar steps as in the proof of Theorem~\ref{thm:infobound_metageneralizationerror} by decomposing the meta-generalization error into within-task and environment-level generalization errors as in \eqref{eq:decomposition_meta}. The key difference comes in the evaluation of the environment-level generalization error, which we outline here. Conditioned on the meta-test task and meta-training tasks, the environment-level generalization error evaluates as
\begin{align}
    \Ebb_{p(\theta,\metaset)}[\bar{\Lscr}_T(\theta)-\Lscr_{\metaset}(\theta)],
\end{align}where $\bar{\Lscr}_T(\theta)$ is defined as in \eqref{eq:Lbar}. Note that the loss $\bar{\Lscr}_T(\theta)$ has an inner expectation over training dataset $\DtrT$ of the meta-test task; while the meta-training loss computes the average loss over the meta-training set $\metaset$. This difference can be captured using a change of measure argument, together with the sub-Gaussianity assumption on $L_{\DtrT}(\theta)$ under the distribution $p(\DtrT)$ as in the proof of Theorem~\ref{thm:infobound_convlearning}. This results in an additional KL divergence term $\mathrm{D}_{\KL}(p(\Dtrk)\lVert p(\DtrT))$ for $k=1,\hdots,K$ as compared to \eqref{eq:infobound_metageneralizationerror}. Under the assumption of $\epsilon$-KL relatedness, the above divergence measure can be upper bounded by $\epsilon$. We refer the readers to \cite{jose2021information} for more details.
\end{proof}

The bound \eqref{eq:bound_avgabsmetagenerror} captures explicitly the impact of task-relatedness via the parameter $\epsilon$, while also accounting for the meta-learner and base-learner sensitivities via the conditional mutual information terms as in the bound \eqref{eq:infobound_metageneralizationerror}. Due to this term, unlike \eqref{eq:infobound_metageneralizationerror}, in the asymptotic regime of $N,K \rightarrow \infty$, the bound in \eqref{eq:bound_avgabsmetagenerror} is non-vanishing.

\section{PAC-Bayes Analysis of Meta-Generalization Error}\label{sec:PACBayesianbounds_metalearning}
In Section~\ref{sec:infobounds_metageneralization}, we considered the average meta-generalization error as the performance criterion of interest, where the average was taken over the meta-learner outputs as well as over the meta-training set. In contrast, PAC-Bayesian bounds on meta-generalization error are high-probability bounds on the meta-generalization error, $\Ebb_{\metalearner}[\Delta \Lscr(\theta)]$, averaged over meta-learner outputs, over the random draws of the meta-training tasks $\taskset$, and over the corresponding training sets $\metaset$. 

To proceed, in a manner similar to the PAC-Bayes analysis of conventional learning in Section~\ref{sec:PACBayesianbounds}, we define a \textit{hyper-prior} distribution $q(\theta)$ on the space $\Theta$ of hyperparameter vectors. The hyperparameter vector $\theta$ is assumed to control  the \textit{prior} distribution $q(\phi|\theta)$ on the space of model parameters $\Phi$. The rationale for this choice is that the hyperparameter vector $\theta$ defines a common prior distribution on the model parameter that is meant to serve as useful shared knowledge across all tasks.

Under suitable assumptions on the loss function (see \cite{jose2021transfer}), the  PAC-Bayesian bound can be stated as follows.
\begin{theorem} Under the assumptions stated in \cite[Sec IV]{jose2021transfer}, for any hyperprior distribution $q(\theta)$ and prior $q(\phi|\theta)$, and for any $\beta>0$, the following inequality holds uniformly over all stochastic meta-learning algorithms $\metalearner$, with probability at least $1-\delta$, for $\delta \in (0,1)$, with respect to the random draws of the meta-training tasks $\taskset$ and meta-training data $\metaset$:
\begin{align}
    &\Ebb_{\metalearner}[\Lscr(\theta)] \nonumber \\&\leq \Ebb_{\metalearner}\biggl[\underbrace{\Lscr_{\metaset}(\theta)+\frac{1}{K}\sum_{k=1}^K \mathrm{D}_{\KL}(\basetheta \lVert q(\phi|\theta))}_{\Lscr^{\mathrm{IMRM}}(\theta)} \biggr] \nonumber \\ &+ \frac{1}{\beta} \mathrm{D}_{\KL}(\metalearner \lVert q(\theta)) + \Psi(N,K,\delta), \label{eq:PACBayesian_metagenerror}
\end{align} where  $\Psi(N,K,\delta) $ is a non-negative function of $N$, $K$ and $\delta$. \end{theorem}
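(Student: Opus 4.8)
The plan is to lift the two ingredients of the conventional PAC-Bayes argument --- the Markov/change-of-measure step in \eqref{eq:proof2_1}--\eqref{eq:proof2_2} together with the sub-Gaussian control of the exponential moment --- to the two nested levels of meta-learning, organized around the within-task/environment-level decomposition \eqref{eq:decomposition_meta}. First I would write $\Ebb_{\metalearner}[\Lscr(\theta)]=\Ebb_{\metalearner}[\Lscr(\theta)-\bar{\Lscr}(\theta)]+\Ebb_{\metalearner}[\bar{\Lscr}(\theta)-\Lscr_{\metaset}(\theta)]+\Ebb_{\metalearner}[\Lscr_{\metaset}(\theta)]$, with the auxiliary loss $\bar{\Lscr}(\theta)$ from \eqref{eq:Lbar}. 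The last term is the empirical meta-training loss appearing on the right-hand side of \eqref{eq:PACBayesian_metagenerror}; the first two terms are the within-task and environment-level generalization gaps that must be bounded.

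For the environment-level gap $\bar{\Lscr}(\theta)-\Lscr_{\metaset}(\theta)$ I would treat each task $T_k$ together with its data $\Dtrk$ as a single i.i.d. sample whose ``loss'' is $L_{\Dtrk}(\theta)$, the hyper-prior $q(\theta)$ as the reference distribution, and the meta-learner $\metalearner$ as the data-dependent posterior. Mirroring \eqref{eq:proof2_1}, I would introduce $V=\Ebb_{q(\theta)}[\exp(\beta(\bar{\Lscr}(\theta)-\Lscr_{\metaset}(\theta)))]$, apply Markov's inequality over the random draws of $\taskset$ and $\metaset$, and use the $\delta^2$-sub-Gaussianity of $L_{\Dtr}(\theta)$ from Assumption~\ref{assum:1}$(b)$ together with independence across the $K$ tasks to bound $\Ebb[V]\le\exp(\beta^2\delta^2/(2K))$. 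A change of measure to $\metalearner$ followed by Jensen's inequality, exactly as in \eqref{eq:proof2_2}, then yields that with high probability the environment-level gap is at most $\tfrac{1}{\beta}\mathrm{D}_{\KL}(\metalearner\lVert q(\theta))+\tfrac{1}{\beta}\log\tfrac{1}{\delta}+\tfrac{\beta\delta^2}{2K}$, uniformly over all meta-learners.

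For the within-task gap $\Lscr(\theta)-\bar{\Lscr}(\theta)=\Ebb_{p(T)}[\Ebb_{p(\DtrT)}\Ebb_{\basethetaT}[\Delta L_T(\phi)]]$ I would apply the already-established conventional relaxation \eqref{eq:regularizedloss_conventional} at the per-task level with prior $q(\phi|\theta)$. Because this term is an expectation over the task environment rather than over the observed tasks, it is cleanest to bound it in expectation --- avoiding a second high-probability union bound --- which contributes the within-task divergence $\mathrm{D}_{\KL}(\basethetaT\lVert q(\phi|\theta))$ and a correction $\Ebb_{p(T)}[\beta\sigma_T^2/(2N)]$. The remaining step is to replace this meta-test within-task divergence by its empirical counterpart $\tfrac{1}{K}\sum_{k=1}^K\mathrm{D}_{\KL}(\basetheta\lVert q(\phi|\theta))$ on the meta-training tasks; this is itself an environment-level generalization statement, handled by the same change-of-measure machinery, with the resulting slack absorbed into the correction function.

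Finally I would collect all correction terms --- the two sub-Gaussian contributions $\tfrac{\beta\delta^2}{2K}$ and $\Ebb_{p(T)}[\beta\sigma_T^2/(2N)]$, the $\tfrac{1}{\beta}\log\tfrac{1}{\delta}$ factor, and the slack from transferring the within-task divergence to the meta-training tasks --- into the single non-negative function $\Psi(N,K,\delta)$, giving \eqref{eq:PACBayesian_metagenerror}. I expect the main obstacle to be the bookkeeping of the nested high-probability statement: the failure probability $\delta$ must be split across the two levels so that the combined event holds with probability at least $1-\delta$, and the task-level exponential-moment bound must be established \emph{before} the posterior $\metalearner$ is fixed so that the change of measure remains valid simultaneously for all meta-learners. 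The most delicate technical point is bridging the within-task complexity from the unobserved meta-test task to the observed meta-training tasks, since this is precisely where task relatedness re-enters and where most of the content of $\Psi$ originates.
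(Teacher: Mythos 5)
The paper itself contains no proof of this theorem: it is stated as an import from \cite[Sec.~IV]{jose2021transfer}, so your proposal can only be judged against the known structure of that two-level PAC-Bayes argument rather than against an in-text derivation. Your environment-level step is sound: treating each pair $(T_k,\Dtrk)$ as one i.i.d.\ sample with ``loss'' $L_{\Dtrk}(\theta)$, bounding the exponential moment via Assumption~\ref{assum:1}$(b)$ and independence across the $K$ tasks, and then applying Markov, change of measure to $\metalearner$, and Jensen is exactly the conventional-learning proof of \eqref{eq:PACBayesian_bound} lifted one level up, and it correctly yields the term $\frac{1}{\beta}\mathrm{D}_{\KL}(\metalearner \lVert q(\theta))$ uniformly over all meta-learners.

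The genuine gap is in the within-task part, and you have located it yourself without resolving it. Because you pivot on $\bar{\Lscr}(\theta)$ from \eqref{eq:Lbar} (the decomposition \eqref{eq:decomposition_meta} used for the information-theoretic bounds), your within-task step produces the population-level complexity $\Ebb_{p(T)p(\DtrT)}[\mathrm{D}_{\KL}(\basethetaT \lVert q(\phi|\theta))]$, attached to the \emph{unobserved} meta-test task, whereas the theorem requires the empirical average $\frac{1}{K}\sum_{k=1}^K \mathrm{D}_{\KL}(\basetheta \lVert q(\phi|\theta))$ over the \emph{observed} tasks --- this is precisely what makes $\Lscr^{\mathrm{IMRM}}(\theta)$ a computable training criterion. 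The bridge you propose (``the same change-of-measure machinery, slack absorbed into $\Psi$'') does not go through: relating the expectation of the KL functional to its empirical average with deterministic slack is itself a concentration statement about the random variable $\mathrm{D}_{\KL}(\basetheta \lVert q(\phi|\theta))$, a quantity that is neither bounded nor sub-Gaussian under the stated assumptions (which control only the loss $\ell(Z|\phi)$ and the averaged training loss $L_{\Dtr}(\theta)$), and the resulting slack would be data- and algorithm-dependent, which $\Psi(N,K,\delta)$ is not allowed to be. The standard route --- used in the lifelong-learning bound of Pentina--Lampert and in \cite{jose2021transfer} --- avoids this by choosing the other pivot: decompose via the average of the per-task \emph{population} losses of the $K$ observed tasks, apply the within-task PAC-Bayes/change-of-measure step separately to each observed task with prior $q(\phi|\theta)$ (so the empirical KL terms of $\Lscr^{\mathrm{IMRM}}$ appear immediately), and let the environment-level step with hyper-prior $q(\theta)$ absorb the gap between new-task risk and average observed-task risk, splitting the failure probability $\delta$ across the $K+1$ high-probability events. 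With that reorganization, the rest of your machinery (exponential-moment bounds, Markov, change of measure, Jensen) applies essentially verbatim.
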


The PAC-Bayesian bound on the meta-generalization error in \eqref{eq:PACBayesian_metagenerror} accounts for the \textit{sensitivity} of meta-learner to meta-training set through the KL divergence between the randomized meta-learner and the hyper-prior distribution. The base-learner sensitivity is also similarly accounted for by the KL divergence between the randomized base-learner and the prior distribution. 

The bound \eqref{eq:PACBayesian_metagenerror} holds uniformly overall meta-learners, and hence it provides a valid meta-training criterion. This observation motivates the  \textbf{information meta-risk minimization (IMRM)} approach introduced in \cite{jose2021transfer}, which extends to meta-training the IRM approach described in Section~\ref{sec:IRM}.
For any fixed base-learner $\basetheta$, IMRM minimizes the\textit{ regularized meta-training loss}, given by
\begin{align}
    \min_{\metalearner} \Lscr^{\mathrm{IMRM}}(\theta) + \frac{1}{\beta} \mathrm{D}_{\KL}(\metalearner \lVert q(\theta)) \label{eq:IMRM},
\end{align} where the optimization is over the set of all probability distributions $\metalearner$ on the space $\Theta$ of hyperparameter vectors. In a manner similar to the discussion in Section~\ref{sec:IRM}, for any fixed base-learner $\basetheta$, the optimal solution to problem \eqref{eq:IMRM} is given by the \textit{Gibbs meta-learner}
\begin{align}
    p^{\mathrm{Gibbs}}(\theta|\metaset) \propto q(\theta) \exp\Bigl(-\beta \Lscr^{\mathrm{IMRM}(\theta)}\Bigr). \label{eq:Gibbsmetalearner}
\end{align} The Gibbs meta-learner \eqref{eq:Gibbsmetalearner} ``tilts'' the hyperprior $q(\theta)$ by an amount that depends on the meta-loss $\Lscr^{\mathrm{IMRM}}(\theta)$ through the exponential function $\exp(-\beta \Lscr^{\mathrm{IMRM}}(\theta))$.
The meta-loss $\Lscr^{\mathrm{IMRM}}(\theta)$ in \eqref{eq:PACBayesian_metagenerror} is the average of the regularized per-task training loss over all the observed $K$ tasks, given by
\begin{align}
  \Lscr^{\mathrm{IMRM}}(\theta) =\frac{1}{K}\sum_{k=1}^K \biggl( L_{\Dtrk}(\theta)+ \frac{1}{\beta} \mathrm{D}_{\KL}(\basetheta \lVert q(\phi|\theta))\biggr).
\end{align} As seen in Section~\ref{sec:PACBayesianbounds}, the meta-loss $\Lscr^{\mathrm{IMRM}}(\theta)$ can be minimized by the choice of Gibbs base-learner \eqref{eq:Gibbsposterior} i.e., $\basetheta = p^{\mathrm{Gibbs}}(\phi|\Dtrk,\theta)$.
\section{Minimum Excess Meta-Risk   for Bayesian Meta-Learning}\label{sec:Bayesianmetalearning}
In this subsection, we turn to \textit{Bayesian meta-learning}. Bayesian meta-learning amounts to the application of the IMRM principle \eqref{eq:IMRM} via the meta-posterior distribution \eqref{eq:Gibbsmetalearner} with $\beta=1$ and with log-loss, i.e., $\ell(Z|\phi)=-\log p(Z|\phi)$, at the level of hyperparameter $\theta$; and of the IRM principle \eqref{eq:IRM} with $\beta=1$ via the posterior distribution \eqref{eq:Gibbsposterior} at the level of model parameter. As we will see, 
 under the assumption of well-specified model class, it is possible to provide an exact analysis of the optimality error of Bayesian meta-learning.
 
 A model class $\mathcal{M}=\{p(Z|\phi)| \phi \in \Phi\}$, comprising of conditional distributions $p(Z|\phi)$ parameterized by model parameter $\phi \in \Phi$, is said to be \textit{well-specified} if the true data distribution $p(Z|T)$ belongs to the model class.  Specifically, there exists a model parameter vector $\phi_T \in \Phi$ such that the true distribution equals $p(Z|T)=p(Z|\phi_T)$. In the Bayesian setting, the model parameter $\phi$ is treated as a latent random variable and is endowed with a prior distribution $p(\phi)$. Consequently, the joint distribution of the model parameter $\phi$, training data set $\Dtr$, and test data $Z=(X,Y)$ is assumed to equal \begin{align}
    p(\phi,\Dtr,Z)=p(\phi) p(\Dtr|\phi) p(Z|\phi), \label{eq:Bayesian_jointdistribution}
\end{align} where $p(\Dtr|\phi)=\prod_{j=1}^N p(Z_j|\phi)$. 

Building on \eqref{eq:Bayesian_jointdistribution}, Bayesian meta-learning describes a \textit{hierarchical Bayesian model}: The hyperparameter vector $\theta$ and model parameter vector $\phi$ are assumed to be latent random variables with the joint distribution $p(\theta,\phi)=p(\theta)p(\phi|\theta)$;  the meta-training tasks, described by model parameter vectors $\{\phi_k\}_{k=1}^K$, and the meta-test task, described by the model parameter vector  $\phi_T$,  share a common  hyperparameter vector $\theta$ in the sense that $\{\phi_k\}_{k=1}^K$ and $\phi_T$  are generated i.i.d. according to the distribution $p(\phi|\theta)$. Consequently, the joint distribution of hyperparameter $\theta$, the model parameters $\{\phi_k\}_{k=1}^K$, $\phi_T$, the meta-training set $\{\Dtr_k\}_{k=1}^K$, the meta-test training data $\Dtr_T$ and test input $Z$ equals
\begin{align}
&p(\theta,\{\phi_k\}_{k=1}^K,\phi_T,\{\Dtr_k\}_{k=1}^K,\Dtr_T,Z)\nonumber\\ &= \underbrace{p(\theta)\biggl(\prod_{k=1}^K p(\phi_k|\theta) p(\Dtr_k|\phi_k) \biggr)}_{\textrm{meta-training}} \underbrace{p(\phi_T|\theta) p(\Dtr_T|\phi_T)p(Z|\phi_T) }_{\textrm{meta-testing}}. \label{eq:jointdistribution_metalearning}
\end{align}

The Bayesian meta-learner uses the meta-training data set $\{\Dtr_k\}_{k=1}^K$, the meta-test task training data $\Dtr_T$, and the test input feature $X$, to predict the output label $Y$. The error in predicting the output label $Y$ from observation of the above data is measured via the loss function $\ell(Y|X,\mathcal{D})$ with $\mathcal{D}=(\metaset,\DtrT)$. For simplicity, throughout this subsection,  we consider the log-loss as $\ell(Y|X,\mathcal{D})=-\log p(Y|X,\mathcal{D})$. In particular, we have
\begin{align}
   \ell(Y|X,\mathcal{D}) =-\log \Ebb_{p(\theta,\phi|\mathcal{D},X)}[p(Y|X,\phi)], \label{eq:logloss_predictive}
\end{align}where $p(\theta,\phi|\mathcal{D},X)$ is the meta-posterior distribution   from \eqref{eq:jointdistribution_metalearning}. 

The \textbf{Bayesian predictive meta-risk} is the average predictive loss incurred over the observed meta-training dataset $\metaset$, the test task training data $\DtrT$ and the test feature $X$, given by
\begin{align}
    R_{\log}(Y|X,\mathcal{D})&= \Ebb_{p(X,Y,\mathcal{D})}[-\log p(Y|X,\mathcal{D})] \nonumber \\&=H(Y|X,\mathcal{D}) \label{eq:Bayesianrisk},
\end{align} where the expectation  is with respect to the joint distribution \eqref{eq:jointdistribution_metalearning}. Equation \eqref{eq:Bayesianrisk} shows that under log-loss, the Bayesian meta-predictive risk is quantified exactly by the conditional entropy \begin{align}H(Y|X,\mathcal{D})=\Ebb_{p(X,Y,\mathcal{D})}[-\log p(Y|X,\mathcal{D})], \end{align} which captures the \textbf{total predictive uncertainty} of the Bayesian meta-learner. 

We  note that by taking the expectation over joint posterior inside the log in the loss function \eqref{eq:logloss_predictive}, the Bayesian predictive risk of \eqref{eq:Bayesianrisk} is different from the average meta-population loss \eqref{eq:metapopulationloss} under the log-loss. The latter considers expectation outside the log and thus constitute the inferential risk in determining the true model parameters. We refer the readers to \cite{masegosa2020learning} for more details on this point.

If the Bayesian meta-learner, aided by a genie, had access to the true hyperparameter vector as well as the model parameters, it would incur the predictive loss  $\ell(Y|X,\theta, \phi)=-\log p(Y|X,\theta,\phi)=-\log p(Y|X,\phi)$. The resulting \textbf{genie-aided predictive meta-risk} then evaluates as
\begin{align}
    R_{\log}(Y|X,\phi)&= \Ebb_{p(X,Y,\phi)}[-\log p(Y|X,\phi)] \label{eq:generalrisk_genie} \\&=H(Y|X,\phi) \label{eq:Bayesianrisk_genie}.
\end{align} The genie-aided predictive meta-risk, quantified by the conditional entropy $H(Y|X,\phi)$, captures the \textbf{aleatoric uncertainty}, which accounts for the uncertainty inherent in the data generation process. Note that aleatoric uncertainty is inherent in the model and it cannot be alleviated by gaining access to larger number of data samples.

The difference between the Bayesian predictive meta-risk and the genie-aided predictive meta-risk is the  \textbf{minimum excess meta-risk (MEMR)}, given by
\begin{align}
    \mathrm{MEMR}_{\log} &= R_{\log}(Y|X,\mathcal{D})-R_{\log}(Y|X,\phi). \label{eq:MEMR}
\end{align} The MEMR \eqref{eq:MEMR} can be exactly evaluated as the conditional MI $I(Y;\phi|X,\mathcal{D})$, given by
\begin{align}
    \mathrm{MEMR}_{\log} &=H(Y|X,\mathcal{D})- H(Y|X,\phi)\nonumber\\
    &=I(Y;\phi|X,\mathcal{D})\label{eq:MI}.
\end{align} The conditional MI, and thus the MEMR, capture the \textbf{epistemic uncertainty} of the Bayesian meta-learner resulting from  using finite number $K$ of meta-training tasks and number $N$ of per-task data samples for inference. The relation in \eqref{eq:MI} thus decomposes the total predictive uncertainty  $H(Y|X,\mathcal{D})$ as
\begin{align}
    H(Y|X,\mathcal{D})
&=\mathrm{MEMR}_{\log}+  H(Y|X,\phi,\theta),
\end{align} i.e., as the sum of epistemic uncertainty and aleatoric uncertainty. Importantly, in contrast to the aleatoric uncertainty, the epistemic uncertainty depends on the observed data, and is non-increasing with increasing number of observed tasks $K$ and per-task samples $N$ \cite{jose2022aistats}. 

Leveraging standard information-theoretic tools, the MEMR of \eqref{eq:MI} can be further refined to distil two contributions to the epistemic uncertainty. Specifically, the MI $I(Y;\phi|X,\mathcal{D})$ can be upper bounded as
\begin{align}
    I(Y;\phi|X,\mathcal{D}) \leq \frac{I(\theta;\metaset)}{KN}
+ \frac{I(\phi;\DtrT|\theta)}{N}.\end{align} The first
term captures the sensitivity of the hyperparameter $\theta$ on the meta-training set $\metaset$. The second
term corresponds to the average sensitivity of the
model parameter $\phi$ on the meta-test task training
data $\DtrT$ assuming that the hyperparameter $\theta$ is known.
Thus, the epistemic uncertainty which applies to the domain of
the target variable $Y$, is upper bounded by the sum of
two contributions that pertain the uncertainty levels
in the spaces of hyperparameter and model parameter,
respectively. We refer the readers to \cite{jose2022aistats} for the proof, and  for a treatment of general loss functions.

\section{Sharper Meta-Risk Analysis in Meta Linear Regression}
\label{sec:ch4:meta_linear_risk}
The meta-risk analysis in the previous subsections mostly focuses on the \emph{upper bound} or the \emph{worst case} of generalization performance under general learning problems and models. In a separate line of research, \emph{the precise generalization performance} of meta-learning has been studied in the context of mixed linear regression; see e.g.,  \cite{kong2020meta,gao2020_model_opt_tradeoff_ml,collins2020does,chua2021fine,du2020few,bai2021_trntrn_trnval}.   
In \cite{kong2020meta}, the focus is on finding scenarios when abundant tasks with small data can compensate for lack of tasks with big data. 
In \cite{chua2021fine,du2020few}, the focus is on studying the generalization performance of the representation based meta-learning. 
The meta-risk of MAML and joint learning has been analytically compared in \cite{gao2020_model_opt_tradeoff_ml,collins2020does}, and the regime where MAML has provable performance gain over joint learning has been identified.  
Recently, the impact of splitting training and validation datasets on the performance of iMAML has been studied in \cite{bai2021_trntrn_trnval}.

Complementary to \cite{jose2022aistats}, a unified meta-risk analysis has been recently established in \cite{chen2022_bamaml} under the meta linear regression setting, which provides a solid ground to compare the exact meta-risks of joint learning, MAML, iMAML and Bayesian MAML. Under some regularity assumptions, Bayesian MAML indeed has provably lower meta-risk than iMAML, MAML and joint learning \cite{chen2022_bamaml}.

\section{Conclusions}
This section presented a learning-theoretic study of the meta-learning problem by adopting an information-theoretic framework. In the frequentist meta-learning setting, the information-theoretic approach is used to quantify the meta-generalization error as a function of the cross-task and within-task generalization errors, as well as the relatedness between tasks. The information-theoretic framework is also connected to PAC-Bayesian bounds through the principle of information risk minimization. Finally, we discussed how the information-theoretic framework captures the excess predictive risk in Bayesian meta-learning.

\chapter{Applications of Meta-Learning to Communications}
\label{sec:ch5}

\section{Overview}
For decades, communication systems have been engineered through carefully designed \textbf{model-based} algorithms that build on an analytical model of the underlying system. More recently, the increased complexity of communication scenarios, encompassing heterogeneous services and flexible software-defined multi-technology radio access networks (RANs), is raising renewed interest in \textbf{data-driven methods}. These techniques are based on machine learning, and are viewed as a complementary, and often synergistic, design approach \cite{simeone2018very}. As an example, in the O-RAN architecture, a leading proposal for 6G ``open-RAN'' systems, many network functionalities, at different temporal and spatial scales, are envisaged to be implemented via AI tools \cite{bonati2021intelligence}.

The main drawback of machine learning methods is given by the often prohibitive requirements in terms of dedicated training data and of computational effort. This issue is especially pronounced for physical-layer and medium-access (MAC) layer functions, which are subject to temporal variations in connectivity conditions. For instance, a coherent receiver at the physical layer, if trained for particular channel setting, generally suffers from degraded performance when the channel conditions change \cite{xia2020note, bourtsoulatze2019deep}. Meta-learning provides an ideal framework to design data-driven methods that can transfer knowledge across different communication settings, enabling adaptation to new connectivity conditions.

This section provides a review of some applications of meta-learning to communication systems by focusing on demodulation; encoding and decoding; channel prediction at the physical layer; and power control at the MAC layer.


\section{Demodulation}
\label{sec:ch5_demodul}
\begin{figure}
	\begin{center}
		\includegraphics[width=0.5\textwidth]{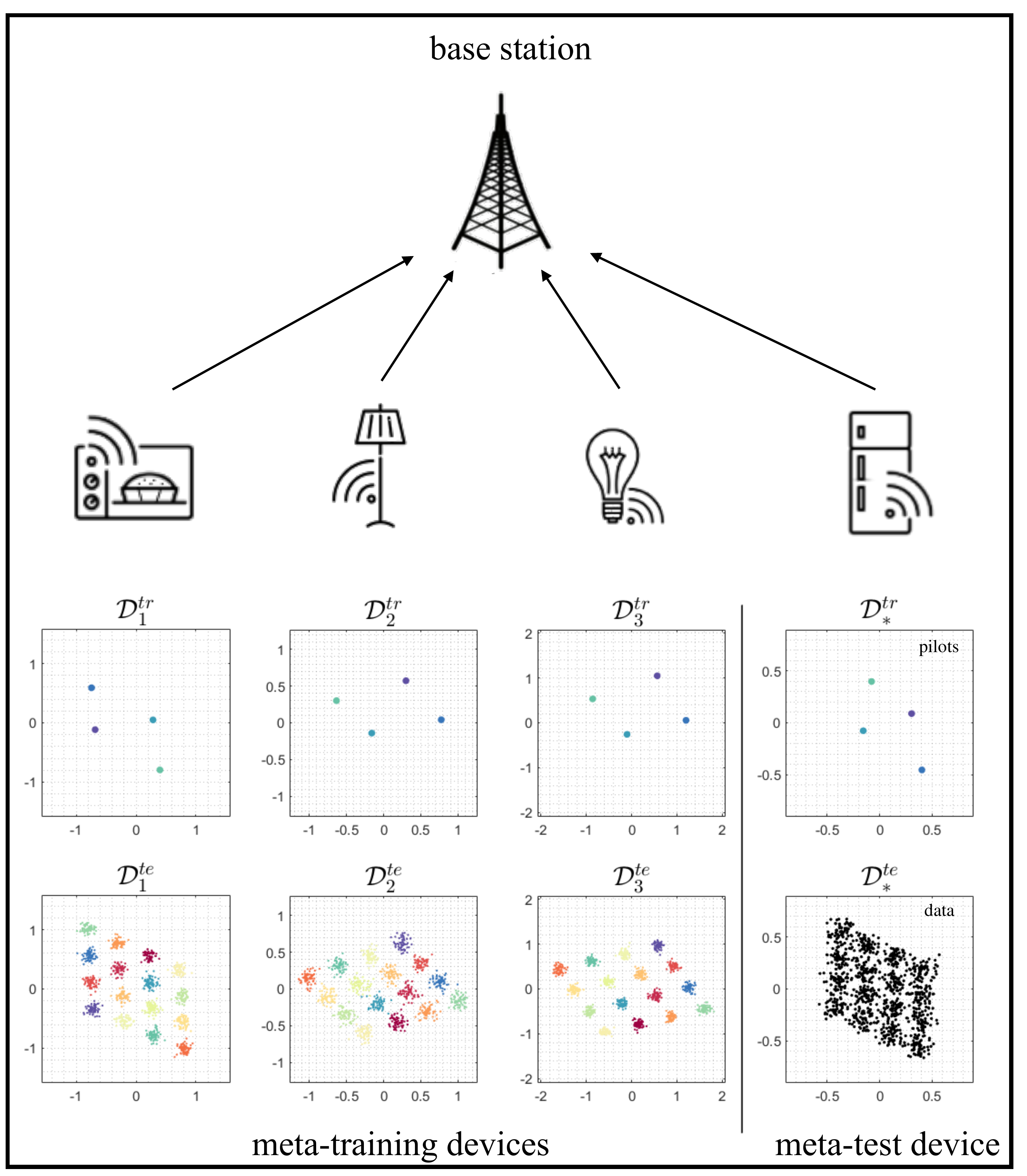}
	\end{center}
	\caption{Meta-learning for demodulation: By utilizing received pilots from multiple previous transmissions by different devices, a meta-learned demodulator can significantly reduce the number of pilots required for demodulation of data sent by a new device.\label{fig0ch5}}
\end{figure} 

Demodulation is a fundamental physical-layer function consisting of the task of estimating the transmitted symbols from the received baseband signals. Demodulators must compensate for the fading effect on the received signal of the transmission channel. This is done by leveraging the transmission of known symbols, referred to as pilots. 

\textbf{Model-based} methods typically assume a \emph{linear} fading channel model with additive white Gaussian noise (AWGN). Under this model, the standard approach first estimates the channel response using the pilots via a minimum mean squared error (MMSE) estimator. Then, the estimated channel is used to obtain a maximum likelihood estimate of the transmitted symbols, which minimizes the symbol error rate (SER) under the assumption that the channel is well estimated. 

In some communication scenarios, especially Internet-of-Things (IoT) systems involving low-complexity devices, linear models may fail to fully describe the relationship between the transmitted symbols and the received signal. In particular, they do not account for non-linear effects such as transmitter's imperfections \cite{tandur2007joint}. By addressing this ``model deficit'' \cite{simeone2018very}, \textbf{data-driven} demodulation can outperform the outlined conventional model-based strategy. This is the subject of this subsection, which follows reference \cite{park2020learning}.

\subsubsection{5.2.1 Problem Definition}
Consider an IoT scenario in which devices transmit short packets sporadically to a base station (BS). As mentioned, IoT devices may be affected by non-linear hardware distortions. An example of distorted constellation points for 16-ary quadrature amplitude modulation (16-QAM) under I/Q imbalance is shown in Fig.~\ref{fig0ch5}. As a result, the conventional model-based demodulator described above is generally suboptimal, as it ignores hardware nonlinearities. Conventional machine learning methods may address this model deficit, but the only available training data is given by the pilots within each short packet. Meta-learning can mitigate this problem. We note that a complementary approach is to integrate data-driven and model-based approaches \cite{raviv2022online, shlezinger2022model}, which will be briefly discussed in Section~\ref{sec:ch7}. 

For an IoT device indexed by an integer $k$, given an input symbol $s_k \in \mathcal{S}$ that lies in the set of all constellation points $\mathcal{S}$. The transmitted signal $x_k$ is a function of the information symbol $s_k \in \mathcal{S}$ that accounts for the hardware distortion caused by imperfections at device $k$. This function is described by a stochastic mapping 
\begin{align}
	\label{eq:ch5:hardware_distortion}
    x_k \sim p_k(\cdot|s_k)
\end{align}
for some conditional distribution $p_k(\cdot|s_k)$. We assume that the received signal $y_k$ can be expressed as the output of a flat fading channel as in
\begin{align}
	\label{eq:ch5:channel_mapping}
    y_k = h_k x_k + z_k,
\end{align} 
where $h_k$ is the complex channel gain between the device $k$ and the BS; and $z_k \sim \mathcal{CN}(0,N_0)$ is additive complex Gaussian noise. The channel is assumed to be constant within a coherence time that is longer than the short packet time duration of the IoT devices.  
Neither the channel $h_k$ nor the mapping $p_k(\cdot|s_k)$ are known to device $k$ or to the BS. 

We assume the transmission of $N$ pilots in each transmitted frame. Accordingly, the training data set for device $k$, referred to as $\mathcal{D}_k$, is given as 
\begin{align}
    \mathcal{D}_k = \{ (s_k^{(i)}, y_k^{(i)}): i=1,...,N \},
\end{align}
where $s_k^{(i)} \in \mathcal{S}$ is the $i$-th pilot symbol sent by device $k$, and $y_k^{(i)}$ is the resulting signal \eqref{eq:ch5:channel_mapping}--\eqref{eq:ch5:hardware_distortion} received by the BS.

\subsubsection{5.2.2 Conventional Learning} 
Let us fix a model class $p(s|y,\phi)$ that defines the probability function of the symbol $s$ given the received signal $y$ based on the model parameter vector $\phi$. The model class $p(s|y,\phi)$ is typically chosen as a neural network with weight vector $\phi$. Given training data set $\mathcal{D}_k$, a conventional machine learning solution trains the demodulator within the given class by minimizing the cross-entropy loss
\begin{align}
    L_{\mathcal{D}_k}(\phi) = - \frac{1}{N} \sum_{(s_k,y_k) \in \mathcal{D}_k} \log p(s_k|y_k,\phi),
\end{align}
over the parameter vector $\phi$, hence addressing the problem 
\begin{align}
	\label{eq:ch5:conven_learning_demod}
	\min_{\phi} L_{\mathcal{D}_k}(\phi).
\end{align}

\subsubsection{5.2.3 Meta-Learning} 
We consider pilot data from $K$ devices as meta-training data. Meta-learning can transfer knowledge from pilots of other devices, each with their own hardware distortions and channel realizations, via an optimized inductive bias. 

\noindent \textbf{Frequentist meta-learning.}
Splitting the data set $\mathcal{D}_k$ with $N$ samples for device $k$ into a training part $\mathcal{D}_k^\text{tr}$ with $N^\text{tr}$ samples and a validation part $\mathcal{D}_k^\text{va}$ with $N^\text{va}$ samples as explained in Section~\ref{sec:ch1}. the meta-learning objective for frequentist meta-learning is given by the problem 
\begin{align}
	\min_\theta \left\{ \mathcal{L}_{\mathcal{D}^\text{mtr}}(\theta) = \frac{1}{K} \sum_{k=1}^K L_{\mathcal{D}_k^\text{va}}( \phi^\text{tr}(\mathcal{D}_k^\text{tr}|\theta) ) \right\},
\end{align}
where the per-device model parameter vector $\phi_k = \phi^\text{tr}(\mathcal{D}_k^\text{tr}|\theta)$ for device $k$ is adapted using the pilots $\mathcal{D}_k^\text{tr}$ for a fixed hyperparameter vector $\theta$ as in \eqref{eq:ch14:inner}, which we denote as, $\phi^{\textrm{tr}}(\mathcal{D}_{k}^{\text{tr}}|\theta)\underset{\theta}{\leftarrow}\min_{\phi}L_{\mathcal{D}_{k}^{\text{tr}}}(\phi)$.

The performance of the data-driven demodulator $\phi$ is measured by symbol error rate 
\begin{align}
	\text{SER} = \mathbb{E}_{ s,y \sim p(s,y) } \mathbf{1}(s \neq \hat{s}(y|\phi)),
\end{align}
where $\hat{s}(y|\phi) = \argmax_{s \in \mathcal{S}} p(s|y,\phi)$ is the output of the demodulator given received signal $y$ in \eqref{eq:ch5:channel_mapping}--\eqref{eq:ch5:hardware_distortion}; while $p(s,y)=p(s)p(y|s)$ is the joint distribution of the symbol $s\in\mathcal{S}$ and of the received signal $y$, with $p(y|s)$ given by \eqref{eq:ch5:channel_mapping}--\eqref{eq:ch5:hardware_distortion}. The symbol distribution $p(s)$ is typically chosen to be uniform over the constellation set $\mathcal{S}$. We next provide numerical results obtained under model \eqref{eq:ch5:channel_mapping}--\eqref{eq:ch5:hardware_distortion} with $p(x|s)$ modelling I/Q imbalance at the transmitter. We refer to \cite{park2020learning} for details. 

Fig.~\ref{fig1ch5} shows the SER of the new, meta-test task, as a function of number of pilots $\tilde{N}^\text{tr}$ available during meta-testing using MAML, REPTILE, and CAVIA, which were introduced in Section~\ref{sec:ch2}. The number of pilots available for the meta-training tasks is set to $N^\text{tr}=4$ and $N^\text{va}=3196$. Note that we deviate here from the assumption that the same number of pilots is used during both meta-training and meta-testing. This allows us to consider the practical case in which the number of pilots for new device may not be known a priori, i.e., during the meta-learning phase. 

As seen in Fig.~\ref{fig1ch5}, meta-learning-aided demodulators outperform the conventional model-based communication scheme based on maximum likelihood (ML) demodulation with MMSE channel estimation; as well as the conventional machine learning scheme that trains from scratch a demodulator for each device. This benefit stems from the capacity of meta-learning to successfully transfer knowledge from pilots of previously active devices.

Next, Fig.~\ref{fig1_2ch5} demonstrates the SER with respect to number of meta-training devices $K$. As discussed in Section~\ref{sec:genbounds_metalearning}, using data from few meta-training devices may yield meta-overfitting, which leads to a high SER for new devices owing to the poor adaptation capability of the training algorithm. In contrast, when $K$ is large enough, the demodulator based on meta-learning can successfully achieve a low SER, while joint learning, which optimizes a single demodulator across all   meta-training devices, fails to transfer useful knowledge to   new devices.

\begin{figure}
	\begin{center}
		\includegraphics[width=0.5\textwidth]{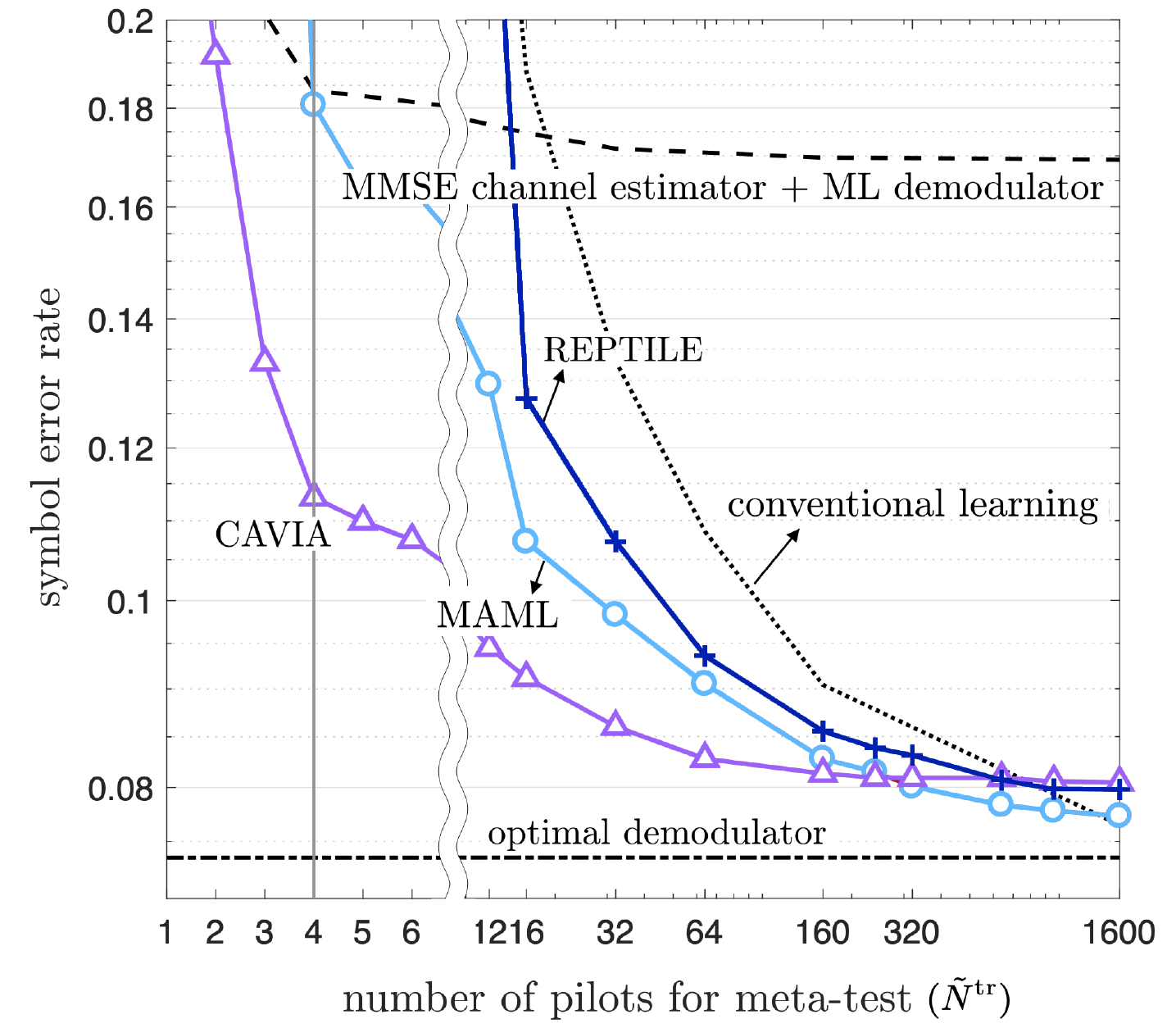}
	\end{center}
	\caption{Meta-learning for demodulation: SER as a function of number of pilots $(\tilde{N}^\text{tr})$ used during meta-testing with $16$-QAM, Rayleigh fading, and I/Q imbalance under a $20\text{ dB}$ signal-to-noise ratio (SNR). $K=1000$ meta-training devices with $N^\text{tr}=4$ and $N^\text{va}=3196$ are assumed during meta-training (adapted from \cite{park2020learning}). \label{fig1ch5} }
\end{figure}

\noindent \textbf{Bayesian meta-learning.}
\begin{figure}%
	\begin{center}
		\includegraphics[width=0.5\textwidth]{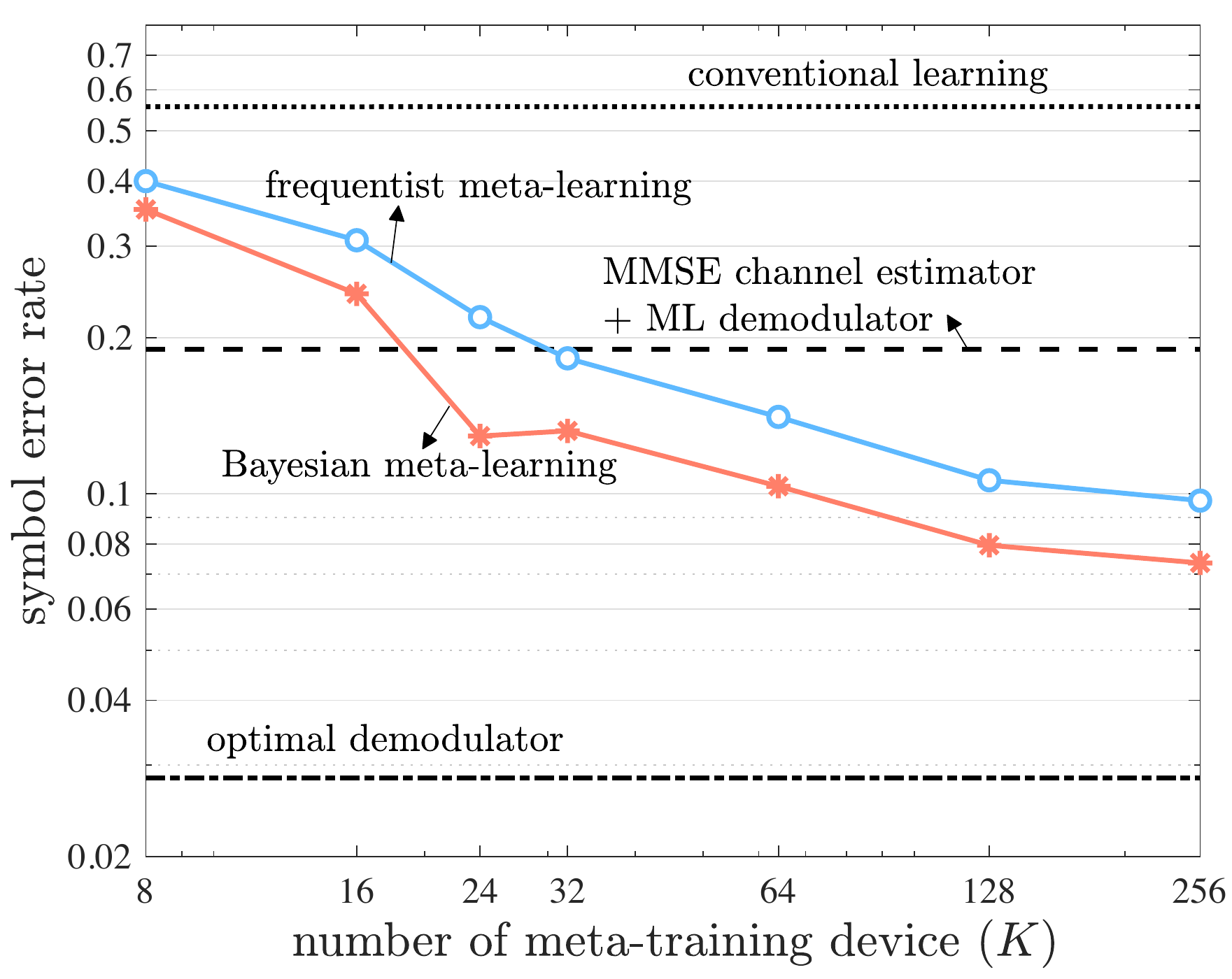}
	\end{center}
	\caption{Meta-learning for demodulation: SER as a function of number $K$ of meta-training device with $16$-QAM, Rayleigh fading, and I/Q imbalance under a $18\text{ dB}$ SNR. $\tilde{N}^\text{tr}=8$ pilots are used for meta-testing (adapted from \cite{kfir2021towards}). \label{fig2_1ch5} }
\end{figure}
While frequentist meta-learning effectively reduces the pilot overhead required for demodulation, the resulting trained demodulator may not be \textbf{well calibrated}, providing overconfident decisions. This is a well-known problem of frequentist learning \cite{guo2017calibration}. Bayesian meta-learning can address this problem by properly accounting for epistemic uncertainty caused by limited training data (see Section~\ref{subs:bayes_maml}) \cite{kfir2021towards}. 

\begin{figure}
	\begin{center}
		\includegraphics[width=0.5\textwidth]{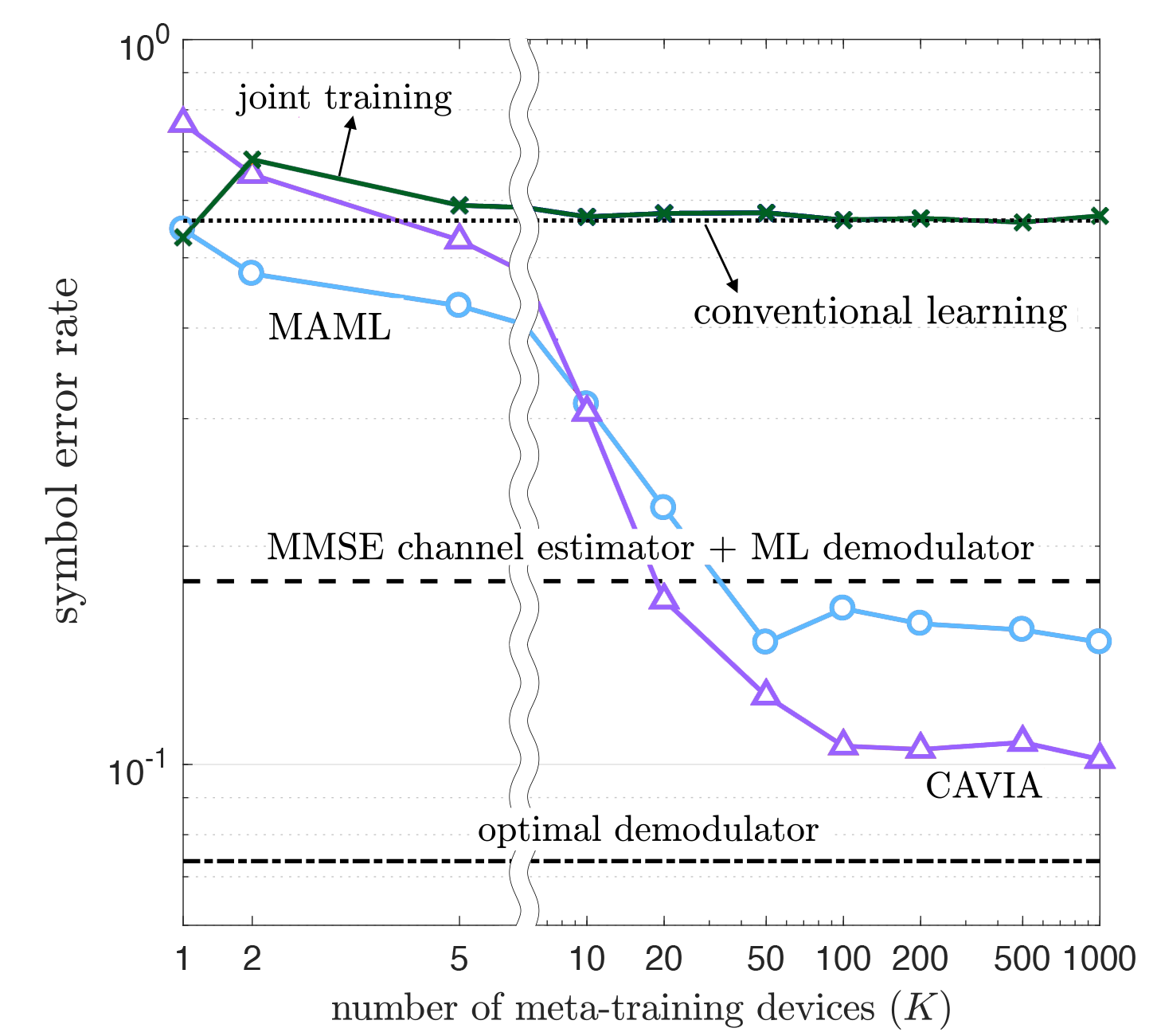}
	\end{center}
	\caption{Meta-learning for demodulation: SER as a function of number $K$ of meta-training device with $16$-QAM, Rayleigh fading, and I/Q imbalance under a $20\text{ dB}$ SNR. $\tilde{N}^\text{tr}=8$ pilots are used for meta-testing (adapted from \cite{park2020learning}). \label{fig1_2ch5} }
\end{figure}

To elaborate on this point, we first describe how to quantify the calibration of a discriminative probabilistic model. Given a demodulator $p(s|y,\phi)$ that yields a point decision $\hat{s}(y|\phi) = \argmax_{s \in \mathcal{S}} p(s|y,\phi)$, the corresponding \textbf{confidence} for the input $y$ is defined as 
\begin{align}
	\label{eq:ch5:conf}
	\text{conf}(y|\phi) = p( \hat{s}(y|\phi) | y,\phi ).
\end{align}
Ideally, the confidence level \eqref{eq:ch5:conf} should be a reliable measure of the true accuracy of the decision $\hat{s}(y|\phi)$. To quantify this aspect, we define the average \textbf{accuracy} for all inputs having a confidence level $p$ as \cite{guo2017calibration}
\begin{align}
	\text{acc}(p) = \mathbb{P} [  \hat{s}(y|\phi) = s | \text{conf}(y|\phi) = p ],
\end{align}
where the probability is taken over the underlying ground-truth distribution $p(y,s)$ for the input $y$ and target $s$. A \textbf{well calibrated} demodulator is a predictor that satisfies the following equality
\begin{align}
	\text{acc}(p) = p,
\end{align}
so that accuracy and confidence level are equal for all $ p \in [0,1]$. \textbf{Reliability diagrams} plot the accuracy $\text{acc}(p)$ versus the confidence level $p$ to gauge the extent to which the confidence level estimated by the model matches the ground-truth accuracy \cite{guo2017calibration}. By replacing the single demodulator $ p(s|y,\phi) $ with the ensemble demodulator $\mathbb{E}_{\phi \sim p(\phi|\mathcal{D})} p(s|y,\phi) $ that accounts for the ``opinions'' of multiple models weighted by the (approximate) posterior distribution $p(\phi|\mathcal{D})$, Bayesian learning can yield better calibrated decisions as compared to frequentist learning. This was investigated in \cite{cohen2021learning, kfir2021towards}.

Fig.~\ref{fig2_1ch5} shows the SER as a function of number of meta-training devices $K$. Similar to Fig.~\ref{fig1_2ch5}, both frequentist and Bayesian meta-learning outperform conventional schemes, validating again the conclusion that meta-learning can transfer useful knowledge from multiple devices. Apart from some improvement in accuracy, the key benefit of Bayesian meta-learning is in terms of calibration, as illustrated by the reliability diagram in Fig.~\ref{fig2ch5}. By capturing epistemic uncertainty caused by the availability of few pilots, here $N^\text{tr}=8$, Bayesian meta-learning produces well-calibrated decisions. In fact, the diagram shows that the confidence of the demodulator matches well the actual accuracy. More details can be found in \cite{kfir2021towards}.

\noindent \textbf{Online meta-learning.}
In the communication setting under study in this subsection, it may be practically useful to accumulate meta-training data set in an online fashion as transmissions from more devices are received by the BS. This setting has been also studied in \cite{park2020learning}, and will be briefly outlined in Section~\ref{sec:ch7}.

\begin{figure}%
	\begin{center}
		\includegraphics[width=0.5\textwidth]{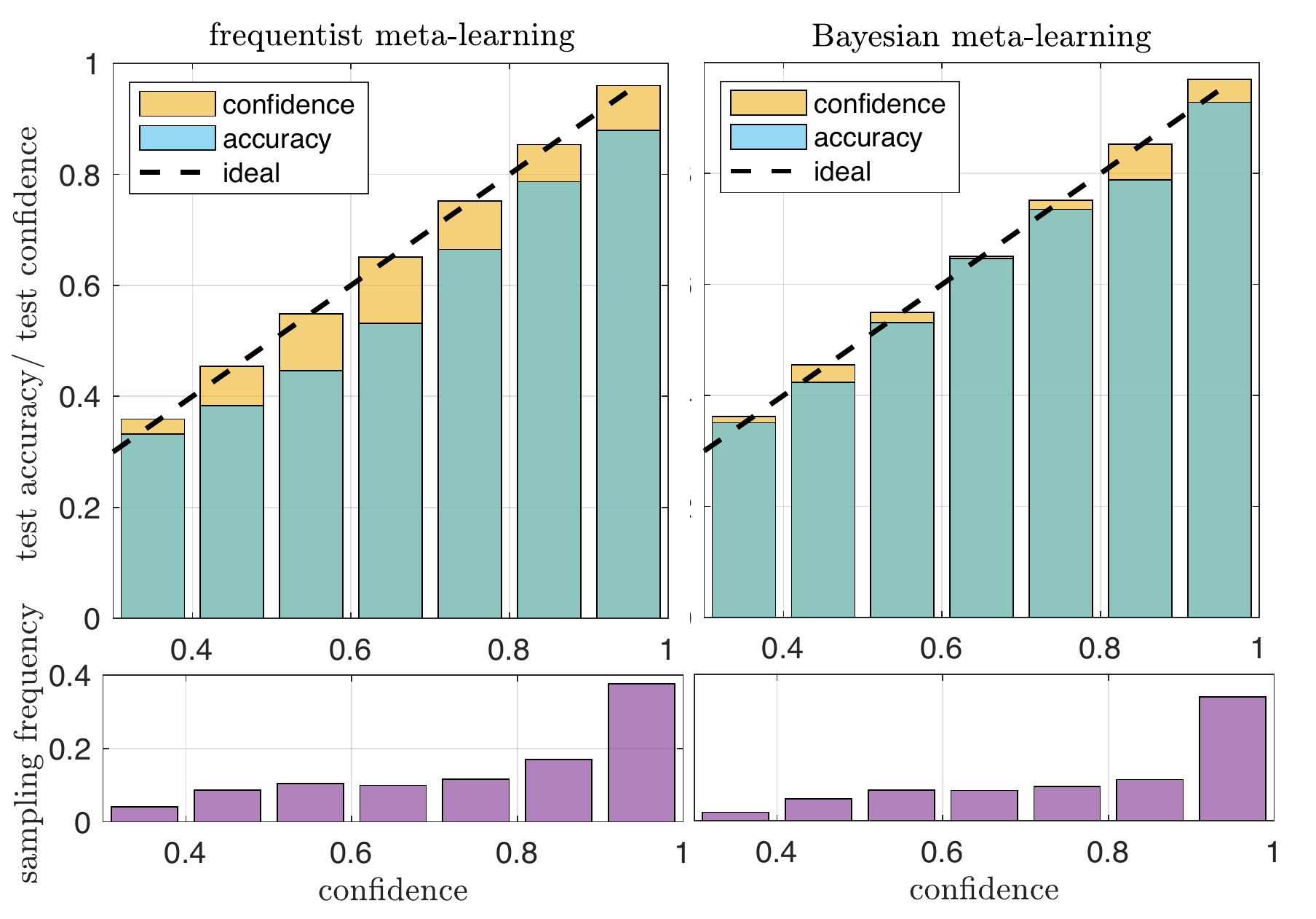}
	\end{center}
	\caption{Meta-learning for demodulation: Reliability diagrams for both frequentist and Bayesian meta-learning. Well calibrated demodulators should follow the dashed line in the figure, i.e., the confidence of the demodulator should match the actual accuracy (adapted from \cite{kfir2021towards}). \label{fig2ch5} }
\end{figure}

\section{Encoding and Decoding} 
\label{sec:ch5:encoding_and_decoding}
While the previous subsection addressed the model deficit problem caused by hardware imperfections, this subsection deals with an instance of \textbf{\emph{algorithm deficit}}, in which the optimal algorithm for the problem of interest is unknown. We specifically focus on the problem of jointly designing encoder and decoder for a communication link over a channel that is only accessible via a simulator as in \cite{o2017introduction, cammerer2020trainable, aoudia2021end}. 

In this setting, the issue is not that of reducing the amount of data, which can be generated at will using the simulator, but rather that of ensuring that a new encoder-decoder pair can be optimized quickly, using limited computational resources, for each new channel coefficients. We show in this subsection that meta-learning can reduce the iteration complexity of training encoder-decoder pairs for new communication conditions. The presentation follows reference \cite{park2020meta}.



\subsubsection{5.3.1 Problem Definition}
Consider a communication link with a known channel model. As illustrated in Fig.~\ref{fig3_1ch5}, the encoder and decoder are implemented via neural networks. Using the approach introduced in \cite{o2017introduction}, training can be done in an unsupervised manner by interpreting the architecture in Fig.~\ref{fig3_1ch5} as an \textbf{\emph{autoencoder}} whose goal is to reproduce the input message $m$ of $k$ bits at the output of the decoder as the estimate $\hat{m}$. This approach generally requires many iterations to optimize encoder and decoder for each new channel realization of interest, and meta-learning can alleviate this problem.  
 
\label{sec:5.3.1}
\begin{figure}
	\begin{center}
		\includegraphics[width=0.7\textwidth]{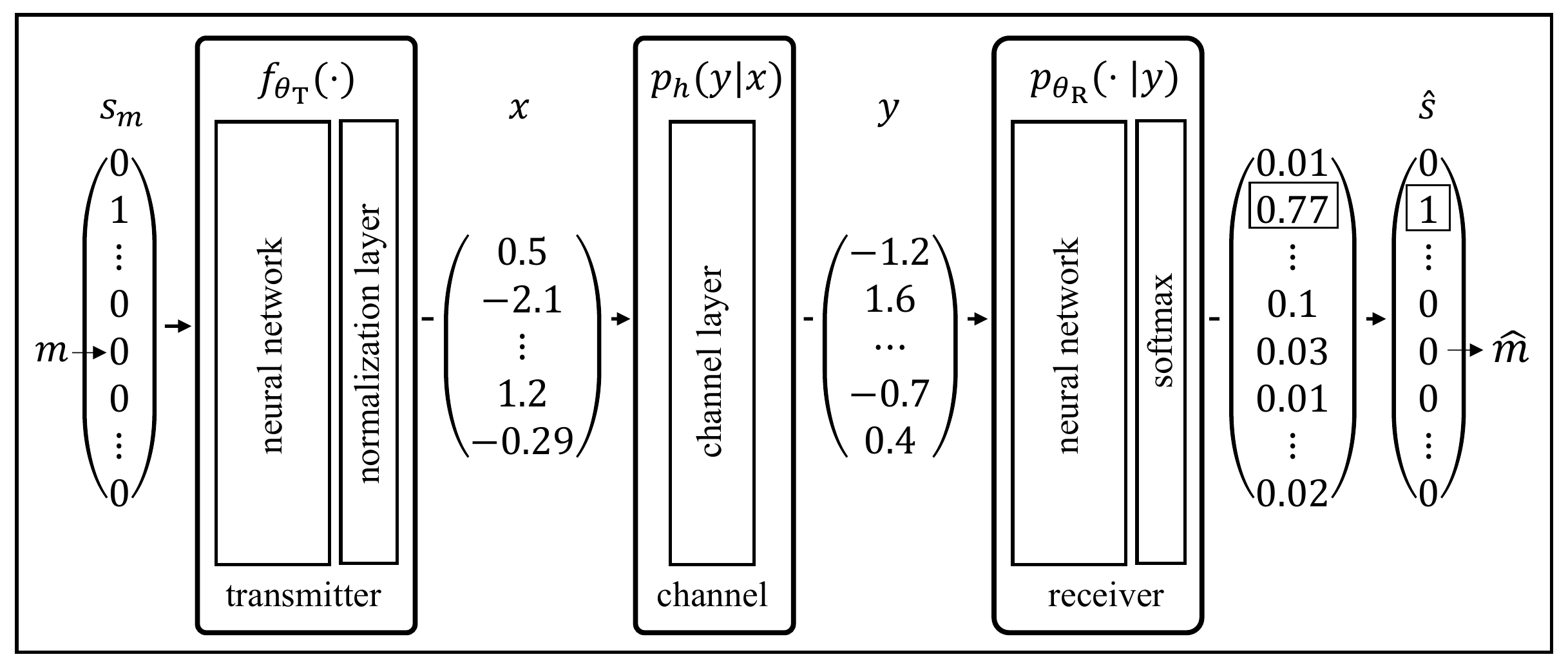}
	\end{center}
	\caption{Meta-learning for encoding and decoding with a known channel model $p_h(y|x)$: A message $m$ is mapped into a codeword $x$ via a trainable encoder $f_{\theta_\text{T}}(\cdot)$, while the received signal $y$, determined by the channel $p_{h}(y|x)$, is mapped into an estimated message $\hat{m}$ through a trainable decoder $p_{\theta_\text{R}}(\cdot|y)$. This setting can be interpreted as modelling a single link as an autoencoder \cite{o2017introduction,cammerer2020trainable, aoudia2021end}.  \label{fig3_1ch5} }
\end{figure} 

The transmitter encodes the message $m$ into the transmitted signal $x$ using a mapping $x=f_{\phi_\text{T}}(s_m)$ where $s_m$ is the $2^k \times 1$ one-hot vector corresponding to message $m$. Signal $x$ is transmitted through a channel described by a known conditional distribution $p_h(y|x)$. Accordingly, the received signal is given as $y\sim p_h(y|x)$, from which the receiver decodes via the stochastic mapping $\hat{m} \sim p_{\phi_\text{R}}(m|y)$. The encoding function $f_{\phi_\text{T}}(\cdot)$ and the decoding operation $ p_{\phi_\text{R}}(\cdot|y)$ depend on model parameter vector $\phi_\text{T}$ and $\phi_\text{R}$, respectively. 

For concreteness, the channel mapping $p_h(y|x)$ is modelled here as
\begin{align}
	\label{eq:ch5:channel_model}
	y = h*x + w, 
\end{align}
where $w \sim \mathcal{CN}(0,N_0)$ represents complex Gaussian i.i.d. noise and ``*'' indicates a linear operation on input $x$ parametrized by a channel vector $h$. The model \eqref{eq:ch5:channel_model} captures frequency selective channels, in which case the operation ``*'' is a convolution; as well as multi-antenna channels, in which case the operation ``*'' is a matrix multiplication. 

\subsubsection{5.3.2 Conventional Learning}
The loss function for particular channel realization $h$ is written as the cross-entropy loss
\begin{align} \label{eq:ch5:encdec}
    L_{h}(\phi) = -\mathbb{E}_{m\sim p(m),y\sim p_h(y|f_{\phi_\text{T}}(s_m))}[\log p_{\phi_\text{R}}(m|y)],
\end{align}
which is averaged over message probability distribution $p(m)$; channel distribution $p_h(y|x)$; and stochastic decoding $p_{\phi_\text{R}}(m|y)$. Here, we have defined the overall model parameter vector $\phi=(\phi_\text{T}, \phi_\text{R})$. Note that the loss $L_h(\phi)$ in \eqref{eq:ch5:encdec} is the population loss, in which the data distribution is determined by the channel $h$. The loss \eqref{eq:ch5:encdec} is approximated by the empirical loss 
\begin{align}
	\label{eq:ch5:empirical_e2e_loss}
	L_{\mathcal{D}_h}(\phi) = -\frac{1}{N}\sum_{j=1}^N \log p_{\phi_\text{R}}(m_j|h*f_{\phi_\text{T}}(s_{m_j}) + w_j),
\end{align}
where the training data set $\mathcal{D}_h$ under channel realization $h$ is generated by drawing i.i.d. random messages $m_1,..., m_N$ from the distribution $p(m)$, along with i.i.d. noise realizations $w_1,...,w_N$. 

Conventional learning addresses the following minimization for each new channel realization $h$:
\begin{align}
	\label{eq:ch5:min_empirical_e2e_loss}
	\min_\phi L_{\mathcal{D}_h}(\phi).
\end{align}
Note that access to a differentiable simulator of the channel model is required for computing the gradient of the loss $L_{\mathcal{D}_h}(\phi)$ with respect to the encoder parameter vector $\phi_\text{T}$. This is trivially true for the simple model \eqref{eq:ch5:channel_model}. 

\subsubsection{5.3.3 Meta-Learning}
A large number of training iterations, consisting of tens of thousands of steps, are generally required for training data-driven encoding and decoding from scratch by solving problem \eqref{eq:ch5:encdec} for each channel realization $h$ of interest \cite{o2017introduction, park2020meta}. Meta-learning can reduce the training time. Using $K$ different channel realizations $h_1,...,h_K$, the frequentist meta-learning problem can be formulated as the minimization 
\begin{align}
	\label{eq:ch5:meta_obj_e2e_with_channel_empirical}
	\min_{\theta} \left\{ \mathcal{L}_{\mathcal{D}^\text{mtr}}(\theta) =  \frac{1}{K}\sum_{k=1}^K L_{\mathcal{D}_{h_k}} (\phi^\text{ma}(\mathcal{D}_{h_k}|\theta) ) \right\},
\end{align}
where the trained model $\phi^\text{ma}(\mathcal{D}_{h}|\theta)$ for each channel realization $h$, given the hyperparameter vector $\theta$, is taken here to be the MAML one-step-gradient update \eqref{eq:pertask_para_ma}, i.e.,
\begin{align}
	\label{eq:ch5:local_e2e_with_channel}
	\phi^\text{ma}(\mathcal{D}_{h_k}|\theta) = \theta - \alpha \nabla_\theta L_{\mathcal{D}_{h_k}}(\theta).
\end{align}
The empirical losses $L_{\mathcal{D}_{h_k}}(\cdot)$ in \eqref{eq:ch5:meta_obj_e2e_with_channel_empirical}--\eqref{eq:ch5:local_e2e_with_channel} are defined as in \eqref{eq:ch5:empirical_e2e_loss}, with $N^\text{tr}$ and $N^\text{va}$ used in lieu of $N$, respectively.

We next provide some numerical results for a frequency selective Rayleigh block fading channel model. More details can be found in \cite{park2020meta}. We assume transmission of $k=4$ bits through $n=4$ complex channel uses. The channel $h$ has three taps, each independently generated as a $\mathcal{CN}(0, 1/3)$ variable. The performance of the trained encoder-decoder pair is measured in terms of block error rate (BLER), i.e.,e\begin{align}
	\label{eq:ch5:bler_def}
	\text{BLER} = \mathbb{E} [  \hat{m} = m],
\end{align}
where the average is taken with respect to channel distribution $p(h)$, message probability distribution $p(m)$, channel distribution $p_h(y|x)$, and stochastic decoder $p_{\phi_\text{R}}(m|y)$. 

Fig.~\ref{fig4ch5} shows the BLER as a function of number of iterations used to train the encoder-decoder pair. Encoder and decoder are multi-layer neural networks \cite{simeone2022machine}. The figure also shows the performance obtained by adopting a more advanced decoder architecture that utilizes a radio transformer networks (RTN) \cite{o2017introduction}. The RTN applies a filter $w$ to the received signal $y$ to obtain the input $\bar{y} = y * w$ to the decoder as $p_{\theta_\text{R}}(\cdot|\bar{y})$. Aiming at explicitly designing a channel equalizer $w$ through additional neural network, RTN has been reported to generally accelerate the optimization procedure \cite{o2017introduction}. 

Similar to Section~\ref{sec:ch5_demodul}, meta-learning is compared with \emph{(i)} \emph{conventional learning}, which adopts a random initialization; and \emph{(ii)} \emph{joint learning}, which optimizes a single encoder-decoder pair from all the meta-training channels. After a sufficient number of adaptation steps for new channel realizations (around $10,000$), all the schemes achieve a BLER lower than $10^{-3}$, validating the power of data-driven encoding and decoding. However, among all the considered schemes, only meta-learning can reach a BLER near $10^{-3}$ with even a single iteration. This demonstrates that a successful transfer of knowledge from multiple channels via meta-learning can indeed reduce the iteration complexity of designing data-driven encoder-decoder pair. 


\begin{figure}
	\begin{center}
		\includegraphics[width=0.5\textwidth]{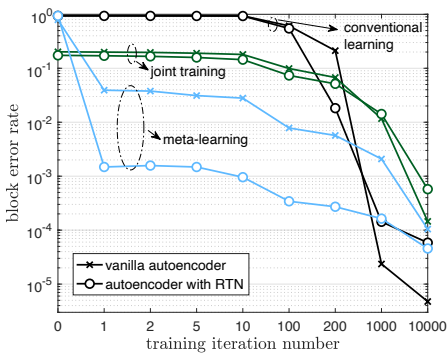}
	\end{center}
	\caption{Meta-learning for encoding and decoding with a (differentiable) channel model: BLER over iteration number for training on the new channel ($4$ bits, $4$ complex channel uses, Rayleigh block fading channel model with $3$ taps, and $16$ messages per iteration, under a $15\text{ dB}$ SNR, adapted from \cite{park2020meta}).\label{fig4ch5} }
\end{figure}


\section{Channel Prediction}
Channel prediction has many applications in modern communication systems, including proactive resource allocation \cite{nikoloska2022modular, agrawal2005iterative}. Deep learning based nonlinear channel predictors have been proposed through training of recurrent neural networks \cite{liu2006recurrent}, convolutional neural networks \cite{yuan2020machine}, and multi-layer perceptrons \cite{kim2020massive}. However, several studies, including \cite{jiang2019comparison, jiang2020long, kim2020massive}, have reported that deep learning based predictors tend to require large training data sets, while failing to outperform well-designed linear filters in the low-data regime. Following \cite{park2022predicting}, this subsection introduces linear \textbf{data-driven channel predictors} that effectively use the available training data via meta-learning. The key idea is to use the linear version of iMAML introduced in Section~\ref{subs:imaml_RR}, along with suitable dimensionality reduction methods via long-short term channel decomposition as proposed in \cite{simeone2004lower, cicerone2006channel, abdi2002space}.

\begin{figure}
	\begin{center}
		\includegraphics[width=0.7\textwidth]{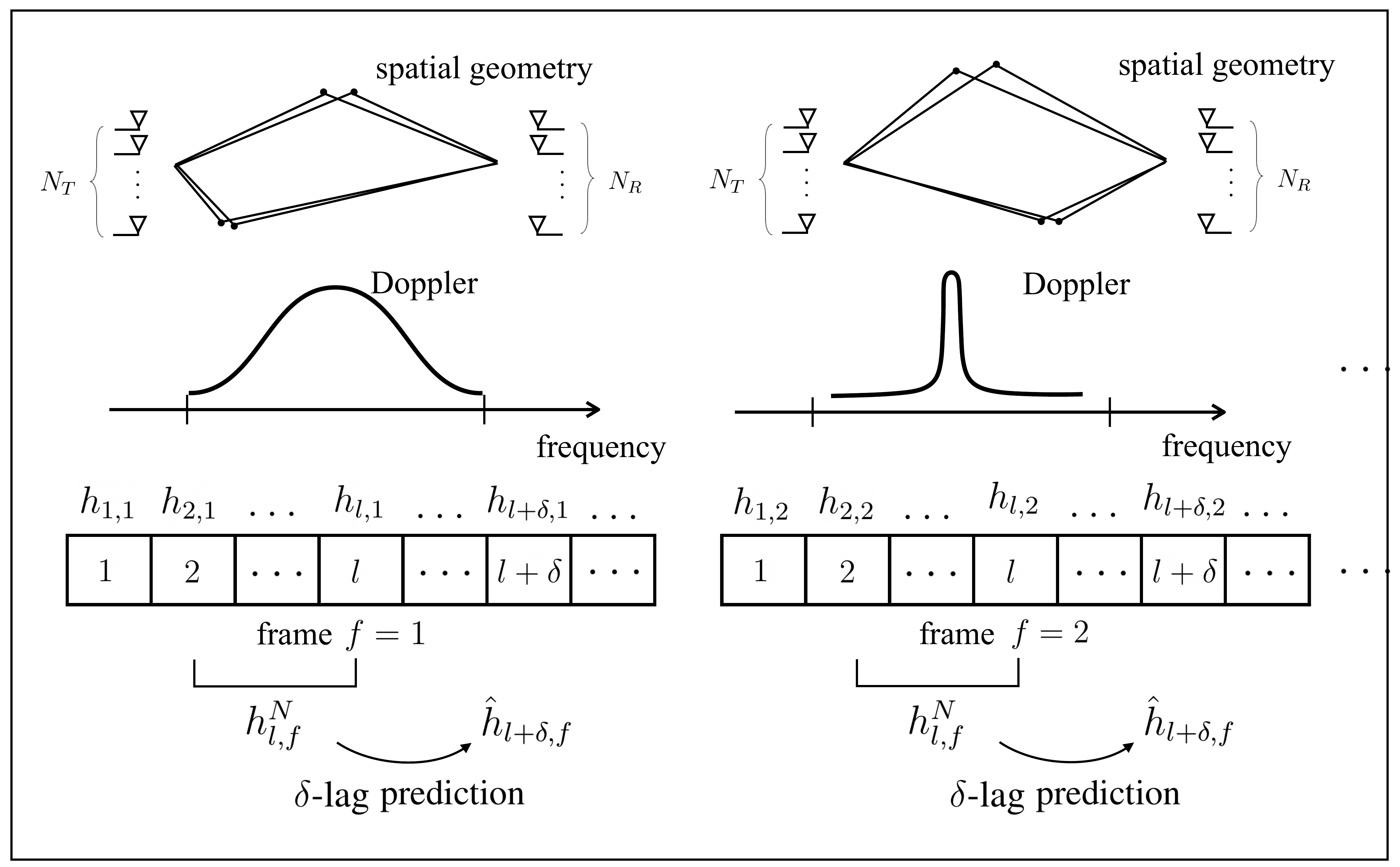}
	\end{center}
	\caption{Meta-learning for channel prediction: At any frame, characterized by generally different channel statistics, the problem of interest is to predict channel using previous consecutive channels.\label{fig6ch5} }
\end{figure}

\subsubsection{5.4.1 Problem Definition}
As shown in Fig.~\ref{fig6ch5}, we consider a wireless communication system in which both the spatial geometry and Doppler spectrum of the wireless channel may change at each frame. Each frame consists of multiple slots. Assuming $N_T$ transmit antennas, $N_R$ receive antennas, and $W$ taps, describing the delay spread of the channel, the complex channel vector at slot $i$ in frame $k$ can be written as $h_{i,k} \in \mathbb{C}^S$ with $S=N_R N_T W$. During any frame, the channel statistics are assumed to be static, while the channels vary across different slots within the same frame with the given frame statistics. 

Within each frame $k$, the channel predictor takes as input the $L$ previous channels 
\begin{align}
	H^L_{i,k} =  [h_{i,k}, ..., h_{i-L+1,k}] \in \mathbb{C}^{S \times L}
\end{align}
to predict the channel $h_{i+\delta,k}$ at a time lag of $\delta$ time steps via the linear predictor as
\begin{align}
	\label{eq:ch5:predicted_ch}
	\hat{h}_{i+\delta,k}(\phi_k) = \phi_k^\dagger \text{vec}(H^L_{i,k}),
\end{align}
where $\phi_k \in \mathbb{C}^{SL \times S}$ is the model parameter vector. In \eqref{eq:ch5:predicted_ch}, $\text{vec}(\cdot)$ is the vectorization operator that stacks the columns of the input matrix into a column vector.

\subsubsection{5.4.2 Conventional Learning}
Defining training data set $\mathcal{D}_k$ for the $k$-th frame with $N+L+\delta-1$ consecutive channel vectors, i.e., $\mathcal{D}_k = \{h_{1,k},..., h_{N+L+\delta-1,k}\}$, the corresponding loss function given the linear regressor $\phi$ is defined as the mean squared error (MSE)
\begin{align}
	\label{eq:ch5:LS_channel_prediction}
	L_{\mathcal{D}_k}(\phi) = \frac{1}{N}\sum_{i=1}^{N} \left\| \hat{h}_{i+\delta,k}(\phi) - h_{i+\delta,k} \right\|^2.
\end{align}
The linear channel predictor $\phi_k$ for the frame $k$ is optimized by addressing the minimization of the training loss \eqref{eq:ch5:LS_channel_prediction}.

\subsubsection{5.4.3 Meta-Learning}
To enable meta-learning, we introduce a bias vector $\theta$ that modifies the training objective in \eqref{eq:ch5:LS_channel_prediction} by adding an $l_2$ regularization term as discussed in Section~\ref{subs:imaml_RR}, i.e.,
\begin{align}
	\label{eq:ch5:ridge_channel_prediction}
	\min_{\phi} \left\{L_{\mathcal{D}_k}(\phi) + \frac{\lambda}{2}\left\|\phi - \theta\right\|^2\right\}.
\end{align}
Furthermore, we assume the availability of a meta-training data set obtained from $K$ previous frames. For each frame $k$, we have channels from $N^\text{tr}+ N^\text{va}+L+\delta-1$ slots, forming the training data set $\mathcal{D}_k^\text{tr} = \{h_{1,k},..., h_{N^\text{tr}+L+\delta-1,k}\}$ and the validation data set $\mathcal{D}_k^\text{va} = \{h_{N^\text{tr}+1,k},..., h_{N^\text{tr}+N^\text{va}+L+\delta-1,k}\}$. The bias vector is meta-learned using iMAML as described in Section~\ref{subs:imaml_RR}. This leads to
\begin{align}
	\label{eq:ch5:meta_obj_channel_pred}
	\min_\theta \left\{ \mathcal{L}_{\mathcal{D}^\text{mtr}}(\theta) = \frac{1}{K}\sum_{k=1}^K L_{\mathcal{D}_k^\text{va}}({\phi}^\text{im}(\mathcal{D}_k^\text{tr}|\theta)) \right\}, 
\end{align}
where the linear channel predictor ${\phi}^\text{im}(\mathcal{D}_k^\text{tr}|\theta)$ for frame $k$ is the solution of problem \eqref{eq:ch5:ridge_channel_prediction} using training set $\mathcal{D}_k^\text{tr}$, i.e., 
\begin{align}
	\label{eq:ch5:inner_channel_pred}
	{\phi}^\text{im}(\mathcal{D}_k^\text{tr}|\theta) = \argmin_{\phi} \left\{L_{\mathcal{D}_k^\text{tr}}(\phi) + \frac{\lambda}{2}\left\|\phi - \theta\right\|^2\right\}.
\end{align}
Both the linear channel predictor ${\phi}^\text{im}(\mathcal{D}_k^\text{tr}|\theta)$ and the solution of problem \eqref{eq:ch5:meta_obj_channel_pred} can be obtained in a closed form as described in Section~\ref{subs:imaml_RR} (by taking $\text{vec}(H_{n,k}^L)^\dagger$ in lieu of $x_{k,n}^\top$ and $h_{n+\delta,k}^\dagger$ instead of $y_{k,n}$).

When the dimension of the channel vector $S$ is large, the meta-learned bias vector $\theta$ obtained from \eqref{eq:ch5:meta_obj_channel_pred} is prone to meta-overfitting. Instead of using the channel vector directly, reference \cite{park2022predicting} proposes to decompose the channel vector into long-term space-time features $B_k \in \mathbb{C}^{S\times R}$ and short-term fading amplitude vector $d_{l,k} \in \mathbb{C}^{R \times 1}$ \cite{simeone2004lower, cicerone2006channel, abdi2002space}. This yields the decomposition
\begin{align}
	\label{eq:ch5:LSTD}
	h_{l,k} = B_k d_{l,k} = \sum_{r=1}^R b_k^r d_{l,k}^r ,
\end{align}
in which $R$ stands for the effective number of resolvable paths for the channel vector; $d_{l,k}^r \in \mathbb{C}$ for the $r$-th element of the vector $d_{l,k}$; and $b_k^r \in \mathbb{C}^{S \times 1}$ is the $r$-th column of the matrix $B_k$. 
The integer $R$ can be estimated by utilizing the previous channel vectors by using a standard method such as Akaike's information theoretic criterion (AIC) \cite{wax1985detection}, or by examining the meta-validation loss \cite{park2022predicting}. The long-term matrix $B_k$ is assumed to have negligible variations within a frame, while only the fading amplitudes change from slot to slot. The channel predictor $\phi_k$ is similarly decomposed in order to reduce the number of parameters to be trained \cite{park2022predicting}.

\begin{figure}
	\begin{center}
		\includegraphics[width=0.5\textwidth]{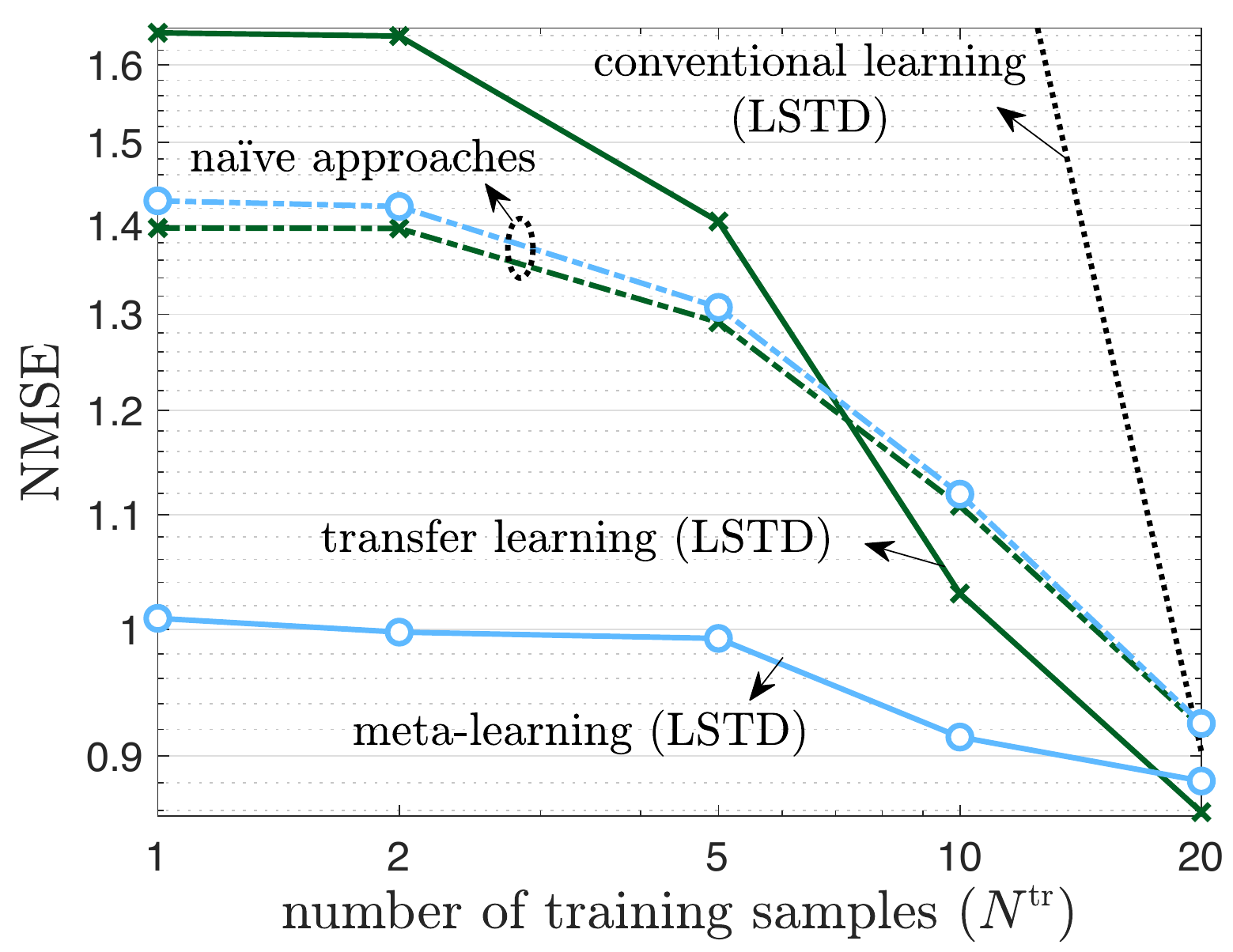}
	\end{center}
	\caption{Meta-learning for channel prediction: Multi-antenna frequency-selective channel prediction performance as a function of the number of training samples, under $19$-clustered, two-tap, and multi-antenna ($N_T=4$ transmit, $N_R=2$ receive antennas) 3GPP SCM channel model (adapted from \cite{park2022predicting}).\label{fig8ch5} }
\end{figure}

We now provide numerical results using the 3GPP spatial channel model (SCM) \cite{3gpp_tr_901} with $N_R=2$, $N_T=4$, and $W=2$. Fig.~\ref{fig8ch5} shows the normalized test MSE (NMSE) as a function of number of training samples $N^\text{tr}$. The NMSE is defined as the normalization with respect to the target channel vector $||\hat{h}_{l+\delta,k}(\phi) - {h}_{l+\delta,k}||^2 / ||h_{l+\delta,k}||^2 $. The performance of the meta-learned channel predictor using the decomposition \eqref{eq:ch5:LSTD} is compared with: \emph{(i)} meta-learning via \eqref{eq:ch5:meta_obj_channel_pred}; and \emph{(ii)} a joint learning solution that finds a bias vector $\theta$ by solving
\begin{align}
	\label{eq:ch5:transfer_obj_channel_pred}
	\min_\theta \frac{1}{K}\sum_{k=1}^K L_{\mathcal{D}_k}(\theta), 
\end{align}
with or without decomposition \eqref{eq:ch5:meta_obj_channel_pred}. In \eqref{eq:ch5:transfer_obj_channel_pred}, the data set $\mathcal{D}_k$ is union of the training data part $\mathcal{D}_k^\text{tr}$ and the validation part $\mathcal{D}_k^\text{va}$.  We refer in the figure to the schemes based on decomposition \eqref{eq:ch5:LSTD} as \emph{long-short-term decomposition (LSTD)}; while schemes without the decomposition are labelled as \emph{na\"ive} schemes. 

In Fig.~\ref{fig8ch5}, meta-learning based on the considered decomposition \eqref{eq:ch5:LSTD} outperforms all the other schemes by transferring useful knowledge for both long-term and short-term features based on channels obtained from multiple frames with different channel statistics.





\section{Power Control}
\label{sec:ch5.5:power_control}
Finally, in this subsection, we consider a fundamental radio-resource management problem in wireless networks -- power control. Power control refers to the optimization of the transmission power levels at distributed links that share the same spectral resources. Ideally, the communication engineer would derive an optimal power control solution that minimizes the level of interference in the network in the presence of time-varying channel conditions. Due to the complexity of modern wireless networks, that provide connectivity to devices ranging from sensors and cell phones to vehicles and robots, deriving an explicit optimal power control policy is infeasible. For such settings, data-driven power control method is promising candidates, which is the subject of this subsection. 

\subsubsection{5.5.1 Problem Definition}
\begin{figure*}[tbp]
\centering
\includegraphics[width=0.86\linewidth]{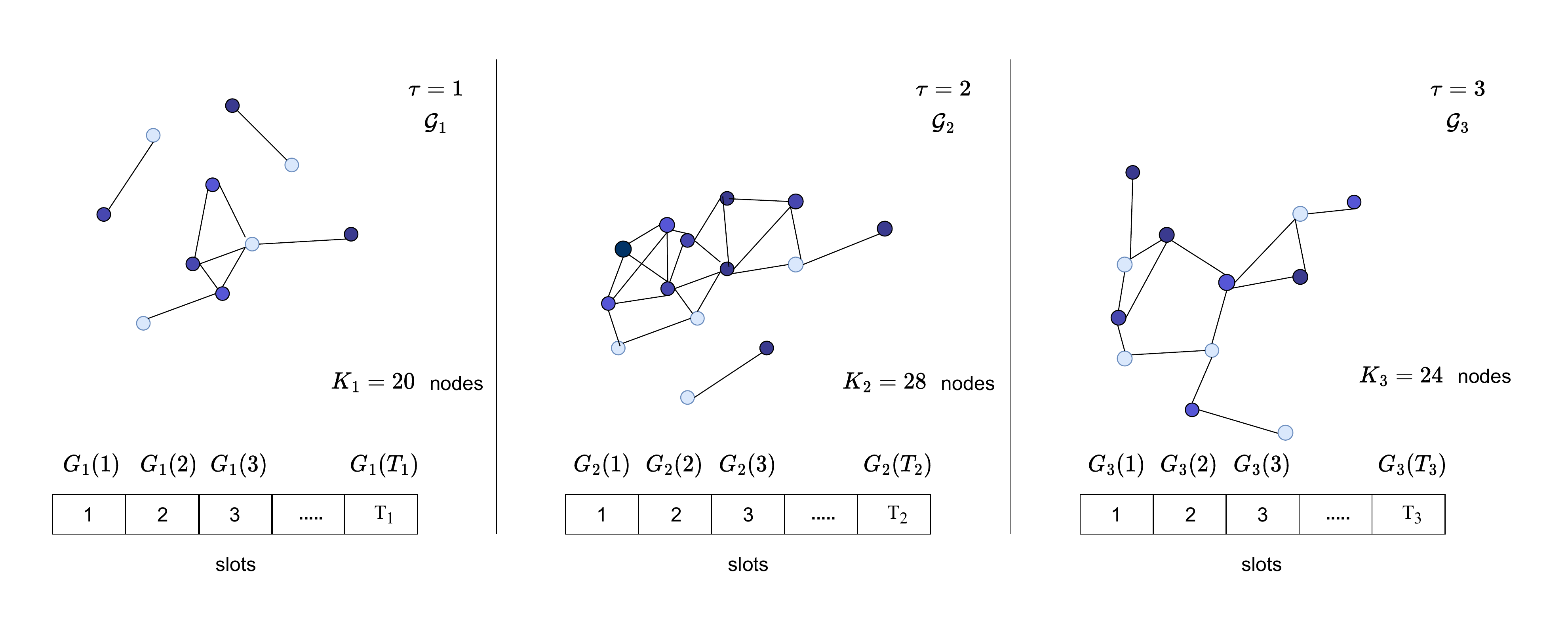}
\vspace*{-3mm}
\caption{Meta-learning for power control: In dynamic networks running over three periods $\tau = 1$, $\tau = 2$, and $\tau = 3$, the goal is to adapt the power control policy to each new topology using a few data-samples. }
\label{sys_mod_pc}
\end{figure*}
As shown in Fig.~\ref{sys_mod_pc}, we consider power control in complex networks with time-varying network topologies. In such dynamic networks, data-driven techniques based on fully connected deep-learning models entail training a different model whenever the number of devices changes, as such models commit to input and output layers of fixed sizes. In contrast, learning with inputs and outputs of variable size can be done using geometric models, such as \textbf{graph neural networks} (GNNs). 

GNNs have been introduced to address the problem of power control in  \cite{eisen2020optimal}. A GNN can encode information about the topology of a network through its underlying graph. Furthermore, the edge weights of the GNN \cite{eisen2020optimal}, are tied to the current channel realizations. As a result, the solution -- which is referred to as \textbf{random edge GNN} (REGNN) -- automatically adapts to time-varying channel conditions through the edge weights. The design problem consists of training the weights $\phi$ of the \textbf{graph filters}. 


We assume that the network is run over periods $k=1,...,K$, with topology possibly changing at each period $k$. During period $k$, the network is comprised of $V_k$ communication links. Transmissions on the $V_k$ links are assumed to occur at the same time using the same spectrum.  The resulting interference graph $\mathcal{G}_k = (\mathcal{V}_k, \mathcal{E}_k)$ includes an edge $(i, j) \in \mathcal{E}_k$ for any pair of links $i, j \in \mathcal{V}_k = \{ 1, ..., V_k \}$ with $i \neq j$ whose transmissions interfere with one another. We denote by $\mathcal{N}^i_k \subseteq \mathcal{V}_k$ the subset of links that interfere with link $i$ at period $k$. Both the number of links $V_k = |\mathcal{V}_k|$ and the topology defined by the edge set $\mathcal{E}_k$ generally vary across periods $k$. 

Each period contains $N$ time slots, indexed by $t = 1,...,N$. In time slot $t$ of period $k$, the channel between the transmitter of link $i$ and its intended receiver is denoted by $h^{i,i}_k (t)$, while $h^{j,i}_k (t)$ denotes the channel between transmitter of link $j$ and receiver of link $i$ with $j \in \mathcal{N}^i_k$. Channels account for both slow and fast fading effects, and, by definition of the interference graph $\mathcal{G}_k$, we have $h^{j,i}_k (t) = 0$ for $j \notin \mathcal{N}^i_k$. The channels for slot $t$ in period $k$ are arranged in the channel matrix $G_k (t) \in R^{V_k \times V_k}$, with the $(j,i)$ entry given by $\left[ G_k(t) \right]_{j,i} = g^{j,i}_k (t) = |h^{j,i}_k (t)|^2$.  
Channel states vary across time slots, and the designer is assumed to have access to channel realizations $\mathcal{D}_k = \{G_k (1), ..., G_k (N)\}$ over $N$ time slots in period $k$ comprising the per-task data set.

With this setup, given transmitted powers $p^i_k(t)$ in each $j$-th link, the achievable sum-rate in slot $t$ of frame $k$ is given by 
\begin{align}\label{eq:ch5:rate_k}
   c_k(p_k(t)) = \sum_{j=1}^{V_k} \log_2 \left(1 + \frac{g^{j,j}_k (t) p^j_k (t)}{\sigma^2 + \sum_{i \in \mathcal{N}_k^j} g^{i,j}_k (t) p^i_k(t)}\right),
\end{align}
where $\sigma^2$ denotes the per-symbol noise power. By \eqref{eq:ch5:rate_k}, interference is treated as worst-case additive Gaussian noise.
As per \cite{eisen2020optimal}, the power allocation vector in \eqref{eq:ch5:rate_k} is parametrized with a REGNN. Given a vector of filters $\phi_k$, this yields
\begin{align}\label{eq:ch5:power_param}
    p_k(t)  = \textrm{f} (G_k(t) \, | \, \phi_k),
\end{align}
where we can find the form of the REGNN function $\textrm{f} (G \, | \, \phi)$ in \cite{eisen2020optimal}.

\subsubsection{5.5.2 Conventional Learning}
Given a set of channel realizations, training of the REGNN parameters is done by tackling the unsupervised learning problem  \cite{eisen2020optimal} 
\begin{align}\label{eq:ch5:opt_phi}
    &\underset{\phi}{\text{min}} \left\{ L_{\mathcal{D}_k}(\phi) = - \frac{1}{N} \sum_{t=1}^{N} c_k(\textrm{f} (G_k(t) \, | \, \phi))\right\},
\end{align}
via SGD.  
Note that, the method in \cite{eisen2020optimal} adopts a joint learning strategy, whereby a single filter tap is optimized for all network configurations, i.e., the optimization in \eqref{eq:ch5:opt_phi} is carried out by summing the rates over all network topologies of interest.


\subsubsection{5.5.3 Black-Box Meta-Learning}
To apply conventional meta-learning, we first split the data set $\mathcal{D}_k$ into training part $\mathcal{D}_k^\text{tr}$ and validation part $\mathcal{D}_k^\text{va}$ as in the previous subsections. Using FOMAML and Reptile, as discussed in Section~\ref{sec:second_order_algorithms}, we aim to maximize the achievable rate in \eqref{eq:ch5:rate_k}, averaged across all tasks as
\begin{align}\label{eq:ch5:meta1}
   \min_{\theta} \left\{ \mathcal{L}_{\mathcal{D}^\text{mtr}}(\theta) = \frac{1}{K}  \sum_{k = 1}^{K} L_{\mathcal{D}_k^\text{va}}(\phi^\text{ma}(\mathcal{D}_k^\text{tr}|\theta)  )  \right\},
\end{align}
where the task-specific parameters $\phi^\text{ma}(\mathcal{D}_k^\text{tr}|\theta)$ are found by taking a single gradient step using the shared parameter $\theta$ as initialization:
\begin{align}\label{eq:ch5:meta}
   &\phi^\text{ma}(\mathcal{D}_k^\text{tr}|\theta) = \theta - \alpha \nabla_\theta L_{\mathcal{D}_k^\text{tr}}(\theta).
\end{align}
The second-order derivatives required to solve \eqref{eq:ch5:meta1} are ignored, and the initialization is computed as in \eqref{eq:outer_update_fo} and \eqref{eq:outer_update_re} for FOMAML and Reptile, respectively. We refer to such meta-learning schemes as ``black-box'', as they do not leverage the modular structure of GNN models.

\subsubsection{5.5.4 Modular Meta-Learning}
Power control has also been tackled in \cite{nikoloska2022modular} using the modular meta-learning method described in Section~\ref{sec:ch2:modular}. To do so, we define a set $\mathcal{M}$ of modules, each representing an instantiation of a REGNN filter. Representing the modules with indices $\mathcal{M} = \{1, ..., M\}$, and considering REGNNs with $L$ layers, each layer $l = 1,...,L$ is assigned one of the $M$ modules.
Accordingly, we introduce the discrete vector $S_k \in \{1,...,M\}^L$ to denote the module assignment which is a mapping between the layers $l = 1,...,L$ of the REGNN and the modules from the set $\mathcal{M}$.

The goal of modular meta-learning is to optimize the shared module set $\mathcal{M}$ so as to allow the system to find a combination of effective modules for any new topology during deployment. This is done by addressing problem
\begin{align}\label{eq:ch5:meta_mod}
    &\min_{\mathcal{M}} \left\{ \mathcal{L}_{\mathcal{D}^\text{mtr}}^\text{mod}(\mathcal{M}) =\frac{1}{K} \sum_{k=1}^K L_{\mathcal{D}_k^\text{va}}(\phi^\text{mod}(\mathcal{D}_k^\text{tr}|\mathcal{M})) \right\},
\end{align}
where the task-specific parameter $\phi^\text{mod}(\mathcal{D}_k^\text{tr}|\mathcal{M})$ is defined by the module set $\mathcal{M}$ and by the corresponding task-specific module assignment vector $S_k(\mathcal{M})$, i.e., $\phi^\text{mod}(\mathcal{D}_k^\text{tr}|\mathcal{M})=\phi^{(S_k(\mathcal{M}))}$ (cf. \eqref{eqn.mod}).
The module assignment vector is adapted per task as 
\begin{align}\label{eq:ch5:task_mod}
    & S_k(\mathcal{M}) = \underset{S \in \{1,...,M\}^L}{\text{argmin}} L_{\mathcal{D}_k^\text{tr}}(\phi^{(S(\mathcal{M}))}).
\end{align}


To tackle the mixed continuous-discrete problem over the module set and the assignment variables in \eqref{eq:ch5:meta_mod},  \cite{nikoloska2022modular} introduces a stochastic module assignment function given by a conditional distribution $\mathcal{P}_k(S_k |  \mathcal{M}, \mathcal{D}^{\text{tr}}_k)$, and reformulate the bi-level optimization problem as
\begin{align}\label{eq:ch5:opt_phi_meta_sto}
    &\underset{\mathcal{M}}{\text{min}}  \,\,\,\, \frac{1}{K}\sum_{k = 1}^{K} \underset{\mathcal{P}_k(S_k \,| \, \mathcal{M}, \mathcal{D}_k^{\text{tr}})}{\text{min}} \mathbb{E}_{S_k \sim \mathcal{P}_k(S_k |  \mathcal{M}, \mathcal{D}^{\text{tr}}_k)} \left[  L_{\mathcal{D}_k^\text{va}}(\phi^{(S_k(\mathcal{M}))}) \right].
\end{align}
In \eqref{eq:ch5:opt_phi_meta_sto}, the inner optimization is over the distributions $\{\mathcal{P}_k(\cdot \,| \, \mathcal{M}, \mathcal{D}_k^{\text{tr}})\}_{k = 1}^{K}$. We refer to \cite{nikoloska2022modular} for implementation details.

\begin{figure*}[tbp]
\centering
\includegraphics[width=0.7\linewidth]{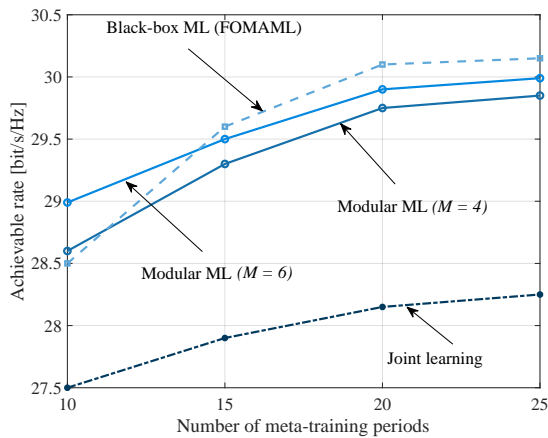}
\vspace*{-3mm}
\caption{Meta-learning for power control: Achievable rate in dynamic networks as a function of number of meta-training periods. The performance of the black-box and modular meta-learning is compared against joint learning (adapted from \cite{nikoloska2022modular}).}
\label{rate_tasks_BB_mod}
\end{figure*}

We now provide some numerical results under independent Rayleigh fading channels. Detailed settings can be found in \cite{nikoloska2022modular}. We compare the meta-learning methods to joint learning as proposed in \cite{eisen2020optimal}, which finds a single parameter vector by solving \eqref{eq:ch5:opt_phi} for the $K$ meta-training periods. We also consider both the black-box, i.e., standard, and modular meta-learning in Fig.~\ref{rate_tasks_BB_mod} by plotting the sum-rate for a network of dynamic size as a function of number of meta-training periods $K$.  

The results in Fig.~\ref{rate_tasks_BB_mod} demonstrate that modular meta-learning is advantageous over black-box methods when the number of meta-training tasks is smaller. However, as the number of meta-training tasks increases, due to the rigidity of modular methods, this gain is overcome by limitations due to bias, and black-box methods are able to achieve larger rates.



      

\section{Conclusions}
This section introduced several applications of meta-learning to wireless communication systems, ranging from demodulation to power control. For more references, we refer to \cite{jiang2019mind} for channel decoding; \cite{goutay2020deep, zhang2021embedding} for MIMO systems; and \cite{hu2021distributed, marini2022continual} for unmanned aerial vehicle (UAV) networks. We finally mention model-based meta-learning which may further reduce the resource overhead in communication systems \cite{raviv2022online, pratik2021neural}. Section~\ref{sec:ch7} contains some discussion on online and model-based meta-learning.

\chapter{Integration with Emerging Computing Technologies}
\label{sec:ch6}

This section covers the integration of meta-learning with two emerging information processing methods: neuromorphic computing and quantum computing. Both computing technologies promise to improve the efficiency of specific, distinct, classes of processing tasks, while relying on dedicated hardware implementations that move beyond the current von Neumann digital computing architecture. Machine learning can potentially enable applications of both computing technologies to problems of practical interest. Data scarcity is, however, often an issue when training machine learning models implemented using neuromorphic or quantum computing platforms. In fact, both technologies are highly synergistic with specialized input data types that may be in short supply.  It is hence of interest to investigate settings in which meta-learning can enhance sample efficiency, while accounting for the unique properties and constraints of the two computing methods. This section provides a very brief introduction to this, with the main goals of highlighting main conceptual aspects and of providing suitable pointers to the literature.

\section{Neuromorphic Computing}
\label{sec:ch6.1:neuro_compute}
\textbf{Neuromorphic computing} is a \emph{brain-inspired} signal processing paradigm. It excels at tasks involving streaming, sparse,  time series, and/or targeting low-energy, always-on, operation with low-latency responses \cite{mehonic2022brain,davies2021advancing}. Neuromorphic processors implement spiking neural networks (SNNs), which replace the static neurons of classical machine learning with dynamic, spiking, neuronal models that process information in the timing of spikes. The focus on spike-based processing is well aligned with scientific consensus in neuroscience on the key role played by spikes to ensure low-energy, low-latency, and high-accuracy signalling \cite{humphries2021spike}.   With a design that ensures a very low idle energy consumption, the spiking neurons of an SNN can ensure an energy usage level that is proportional to the number of spikes processed.

SNNs  are particularly well suited to analyze data produced by neuromorphic sensors, such as event-driven cameras  and touch sensors \cite{hu2016dvs,lichtsteiner2006128,lee2019neuro}.  Such data consist of time series in which information is encoded in the timing of events recorded by the sensors. For example, event-driven cameras produce a spike at a pixel when the brightness recorded by the pixel crosses a given threshold.


\subsection{Neuromorphic Computing and Machine Learning}

Neuromorphic computing platforms implement SNNs, whose operation is determined by synaptic weights describing the links between spiking neurons as in a standard artificial neural networks. In some applications, the synaptic weights are fixed as a function of the computing task. This is the case, most notably, when the SNN is used to solve convex optimization problems \cite{davies2021advancing,mancoo2020understanding}. In most other applications, however, the synaptic weights are optimized using machine learning tools based on the availability of training data.

Denote as $s_{i,t} \in\{0,1\}$ the output of a neuron at discrete time $t$, with $s_{i,t}=1$ representing the transmission of a spike to all neurons connected to neuron $i$ by synapses stemming out of neuron $i$. Various models can be used to implement the spiking mechanism, with the most commonly adopted for SNNs being the \textbf{spike response model (SRM)}. Under the SRM, in order to decide whether to spike or not,  neuron $i$ at time $t$ applies a threshold function to an internal variable known as its \textbf{membrane potential}, i.e., 
\begin{align}\label{eq:ch6:membrane}
	s_{i,t} = \Theta\big( u_{i,t}-\vartheta \big) \in \{0, 1\},
\end{align}
where $\Theta(\cdot)$ is the Heaviside step function;  $u_{i,t}$ is the membrane potential of neuron $i$ at time $t$; and  $\vartheta$ is a fixed threshold. According to (\ref{eq:ch6:membrane}), 
a spike $s_{i,t} = 1$ is emitted when the membrane potential $u_{i,t}$ crosses a fixed threshold $\vartheta$. The membrane potential evolves over time as a function of the responses of the synapses ending at neuron $i$ to incoming spikes, as well as of the response of the neuron itself to its own spikes. The latter mechanism can implement refractoriness, whereby a neuron tends not to produce spikes too close in time.

\begin{figure}
	\begin{center}
		\includegraphics[clip,scale=0.50]{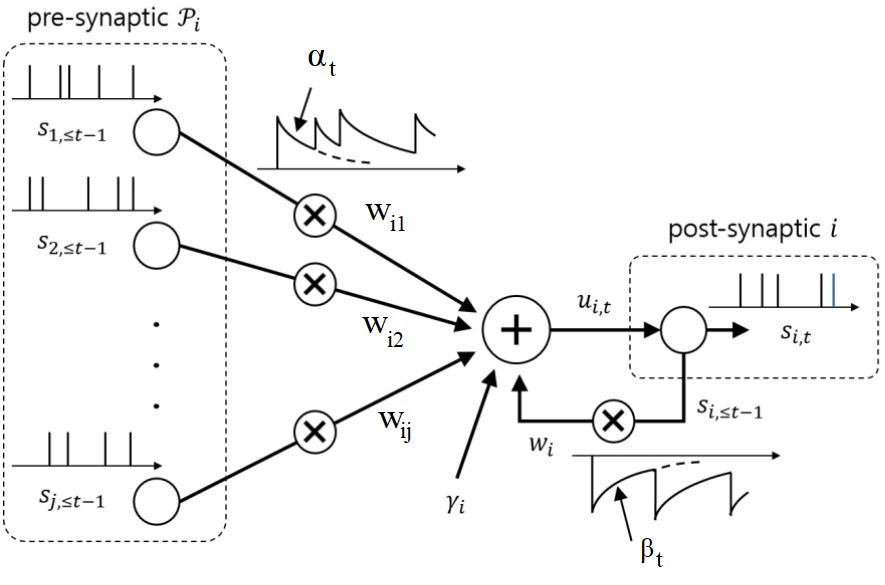}
		\par\end{center}\vspace{-0.4cm}\caption{Illustration of a spiking neuron. \label{fig:spiking}}
\end{figure}

Let us denote as $\mathcal{P}_i$ the set of neurons that have synapses ending at neuron $i$. The SRM stipulates that each such synapse will respond with a waveform $\alpha_t$ -- the impulse response of the synapse -- to each incoming spike.	Mathematically, as illustrated in Fig. \ref{fig:spiking}, the SRM prescribes the following update to the membrane potential of neuron $i$ at time $t$:
\begin{align}
	u_{i,t} &=  \underbrace{{\sum_{j \in \mathcal{P}_i} w_{ij} \big( \alpha_t \ast s_{j,t} \big)}}_{\text{pre-synaptic}} + \underbrace{\big( \beta_t  \ast s_{i,t}\big)}_{\text{post-synaptic}},
\end{align}where $\ast$ denotes the convolution operator. 
In this update, the contribution of pre-synaptic neurons  depends on the synaptic filter $\alpha_t$ through a learnable synaptic weights $w_{ij}$. Furthermore, the post-synaptic contribution of the spikes emitted by neuron $i$ is mediated through the feedback filter $\beta_t$. The duration of the synaptic filter $\alpha_t$ determines the memory of the synaptic response, while the duration of the feedback filter  $\beta_t$ dictates the effective length of refractory periods.

Focusing on \textbf{supervised learning}, we assume that the data set encompasses a target signal $x_{i,t}\in \{0,1\}$ for a subset $\mathcal{X}$ of neurons. In practice, the supervisory signals may be provided sequentially over time $t$, and hence training may take place \textit{online} as time index $t$ increases.  Accordingly, the training loss can be expressed as a sum of local losses $\ell(x_{i,t}, s_{i,t})$ evaluated on  each neuron $i \in \mathcal{X}$ over time $t=1,\ldots,T$, for some interval of time $T$, as 
\begin{align}\label{eq:ch6:nloss}
	\mathcal{L}(\theta) = \sum_{t=1}^T \sum_{i \in \mathcal{X}} \ell(x_{i,t}, s_{i,t}),
\end{align}
where each loss term $\ell(x_{i,t},s_{i,t})$ depends on the target output $x_{i,t}$ of neuron $i$ at time $t$ and on the actual outputs $s_{i,t}$. Since an SNN following the SRM neuronal model can be viewed as an \textit{recurrent neural network}, the training loss (\ref{eq:ch6:nloss}) can be, in principle, minimized via  gradient descent, with the gradient being computed via backpropagation over time. 

Denoting as  $\Theta'(\cdot)$ the first derivative of function  $\Theta(\cdot)$,	the general form of the partial derivative of the loss function (\ref{eq:ch6:nloss}) with respect to a synaptic weight $w_{ij}$ is given by  \begin{align}\label{eq:ch6:ngrad}
	\frac{\partial}{\partial w_{ij}} \mathcal{L}(\theta)=\sum_{t=1}^T \underbrace{e_{i,t}}_{\text{error signal}} \cdot \underbrace{\Theta'(u_{i,t}-\vartheta)}_{\text{post}_{i, t}} \cdot \big( \underbrace{\alpha_t \ast s_{j,t}}_{\text{pre}_{j, t}} \big),
\end{align}where: \begin{itemize}
	\item  $\text{pre}_{j,t} = \alpha_t \ast s_{j,t}$ is the \textbf{pre-synaptic trace}, which is 	large if the previous behavior of pre-synaptic neuron originating the synapse is consistent with synaptic receptive field of the synapses described by filter $\alpha_t$. For instance,  if  $\alpha_t$ decreases over time, the trace tends to large if the pre-synaptic neuron has spiked recently. 
	\item $\text{post}_{i,t} = \Theta'(u_{i,t} - \vartheta)$ is the  \textbf{post-synaptic term}, which measures 
	the ``sensitivity'' to changes in the membrane potential  of  post-synaptic  neuron $i$.
	\item  $e_{i,t} $ is \textbf{per-neuron error signal}, which is ideally evaluated via backpropagation through time as a function of the loss functions $\{\ell(x_{k,t}, s_{k,t})\}_{k \in \mathcal{X}}$ computed by the neurons $k\in \mathcal{X}$.
\end{itemize}

Using the partial derivative (\ref{eq:ch6:ngrad}), an online gradient descent rule can be implemented over discrete time $t$ as \begin{equation}\label{eq:ch6:n3fac}
	w_{ij} \leftarrow w_{ij} - \eta \underbrace{e_{i,t}}_{\text{error signal}} \cdot \underbrace{\Theta'(u_{i,t}-\vartheta)}_{\text{post}_{i, t}} \cdot \big( \underbrace{\alpha_t \ast s_{j,t}}_{\text{pre}_{j, t}} \big),
\end{equation} where $\eta>0$ is a learning rate. The synaptic update (\ref{eq:ch6:n3fac}) is an example of a \textbf{three-factor update rule}, whereby each synaptic weight is modified based on local information, in the form of the pre-synaptic and post-synaptic factors, as well as based on a per-neuron feedback signal. Accordingly, the update (\ref{eq:ch6:n3fac}) can be implemented at each synapse using \emph{locally} available information, in addition to the error signal, which requires feedback from the network, as discussed next.

Calculation of the gradient in (\ref{eq:ch6:ngrad}), and hence application of the three-factor rule (\ref{eq:ch6:n3fac}),  face two practical challenges:
\begin{itemize}
	\item \textbf{Credit assignment}: The impact of every synaptic weight propagates through neurons and time, and hence the calculation of the error signal $e_{i,t}$, generally requires backpropagating errors $\{\ell(x_{k,t}, s_{k,t})\}_{k \in \mathcal{X}}$ across the entire network and over all previous time instants $t'<t$. This problem is typically solved by approximating  backpropagation  through \textbf{truncated backprop} through time, possibly limited to a single time step,  and through \textbf{random feedback alignment}. Random feedback alignment computes the errors $e_{i,t}$ as a random function of the loss values  $\{\ell(x_{k,t}, s_{k,t})\}_{k \in \mathcal{X}}$.
	\item \textbf{Non-differentiability}: The activation function $\Theta(\cdot)$ is such that the derivative  $\Theta'(\cdot)$ is zero almost everywhere. To address this problem, the typical solution applies \textbf{surrogate gradient} methods, whereby the derivative $\Theta'(\cdot)$ is replaced with the derivative of a differentiable surrogate function, such as  sigmoid function.
\end{itemize}
We refer to \cite{neftci2019surrogate,jang2019introduction}  for additional discussion on gradient descent-based training of SNNs.

\subsection{Neuromorphic Computing and Meta-Learning}

Research in neuroscience has revealed learning mechanisms that operate at different time scales, with slower learning procedures targeting the acquisition of new skills and tasks \cite{lindsay2021models}. Through such outer, slower, learning loops, biological brains can acquire general concepts and methods, allowing a more efficient adaptation to specific activities or tasks \cite{karni1998acquisition, martin2000synaptic}. In this process, a variety of update techniques are at work to establish short-to-intermediate-term and long-term memory for the acquisition of new information over time, such as  \textit{long-term potentiation},  \textit{metaplasticity}, and \textit{heterosynaptic plasticity}. We refer to \cite{soures21tacos} for an overview. Meta-learning and continual learning for SNNs implement solutions that inspired by such mechanisms  \cite{soures21tacos, kudithipudi22lifelong}. In particular,   the three-factor rule (\ref{eq:ch6:n3fac}) can be directly built on to implement first-order meta-learning schemes such as FOMAML (see Section 2). We refer to \cite{rosenfeld2021fast} for details and results. 
%

\section{Quantum Computing}
\label{sec:ch6.2:quantum_compute}

Conceived in $1982$ by physicist Paul Benioff, and named after the subatomic physics it aims to harness, quantum computing is based on the concept of a \textbf{qubit}. A qubit is a quantum-mechanical system
that can represent the classical states,  $0$ and $1$ of a classical bit, as well as any superposition of both states \cite{benioff1982quantum}. The complex amplitudes defining a quantum state in superposition can mutually interfere, and they can define forms of correlation across multiple qubits, referred to as entanglement, with no classical counterpart. A quantum computer can be understood as a physical implementation of a number of interacting qubits with a precise control on the temporal evolution of the joint state of the qubits. Any quantum state evolution can be approximated by a sequence of a handful of elementary ``controls'', called \textbf{quantum gates}, which only act on one or two qubits at a time.  As a result, a \textit{universal} quantum computer only has to perform a small set of operations on qubits, much like classical computers are built on a limited number of logic gates. 

Examples of physical implementations of quantum computers involve the polarizations of photons, the discrete energy levels of an ion, the nuclear spins states of an atom, and the spin states of an electron.  Recent demonstrations of the potential of quantum computing based on such technologies have catalysed a booming activity in the field \cite{arute2019quantum}. At the time of writing, quantum computers have reached beyond the realm of a purely academic interest, and they appear to be at the critical point of becoming widely available for the commercial and scientific uses.

\subsection{Quantum Computing and Machine Learning}
A number of elementary quantum gates can be controlled via the selection of a vector $\phi$ of parameters. A quantum gate implements a linear, unitary, transformation of a quantum state. For a parameterized quantum gate, such unitary transformation is typically a function of rotation angles that make up vector $\phi$.  A sequence of parameterized and fixed quantum gates  gives rise to the workhorse of quantum machine learning -- the \textbf{parametrized quantum circuit (PQC)}. A PQC is often implemented using a so-called hardware-efficient \textit{ansatz} (i.e., model architecture), in which a layer of one-qubit unitary gates, parametrized by vector $\phi$, is followed by a layer of fixed, entangling, two-qubit gates. 

A PQC can be used to process and output classical or quantum data. Quantum data refers to quantum-mechanical systems encoding information in their quantum states. \textbf{Quantum data} may be produced by quantum sensors, which are emerging as important tools in various scientific fields \cite{degen2017quantum}. To extract \textbf{classical information} from a PQC, the state of the qubit register is measured, producing classical bits.

In quantum machine learning, for both cases of classical and quantum data, the parameters $\phi$ of a PQC are optimized in a data-dependent manner via a classical optimizer that keeps the PQC in the loop as shown in Fig. \ref{fig:ch1:qmlintro}. The classical optimizer receives measurement outputs from the PQC, and aims at updating the PQC parameters $\phi$ with the aim of optimizing a data-dependent cost function. Such optimization is typically done using standard methods like gradient descent.

The quantum machine learning architecture of Fig. \ref{fig:ch1:qmlintro} has a number of potential advantages over the traditional approach of handcrafting quantum algorithms assuming fault-tolerant quantum computers: \begin{itemize} 
	\item By keeping the quantum computer in the loop, the classical optimizer 
	can directly account for the non-idealities and limitations of quantum operations via measurements of the output of the quantum computer.
	\item If the PQC is sufficiently flexible and the classical optimizer sufficiently effective, the approach may automatically design well-performing quantum algorithms that would have been hard to optimize by hand via traditional formal methods.
\end{itemize}

\begin{figure}
	\begin{center}
		\includegraphics[clip,scale=0.30]{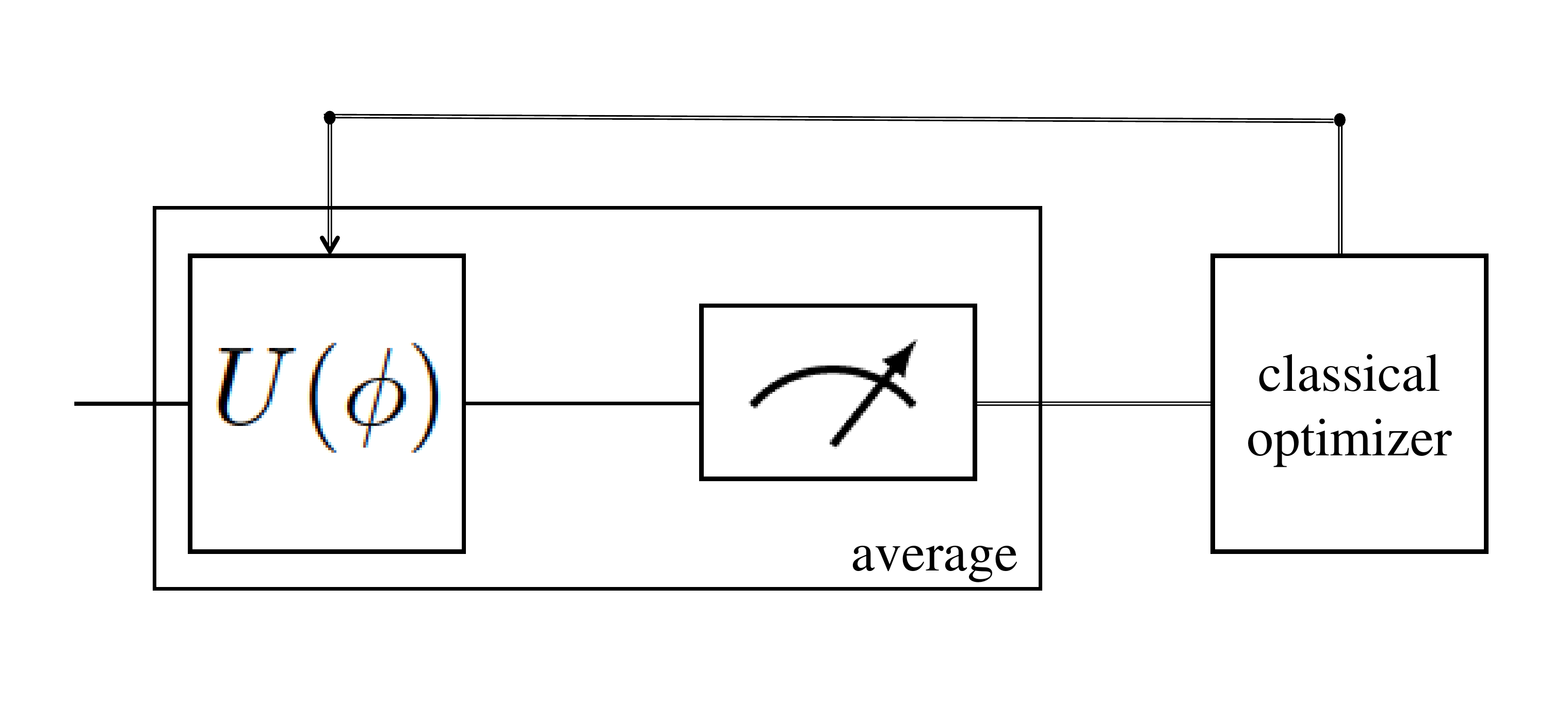}
		\par\end{center}\vspace{-0.4cm}\caption{Illustration of the quantum machine learning design methodology: A PQC with a pre-specified architecture is optimized via its vector of parameters, $\phi$, by a classical optimizer based on data and measurements of its outputs. The operation of a parametrized quantum circuit is defined by a unitary matrix $U(\phi)$ dependent on vector $\phi$. The block marked with a gauge sign represents quantum measurements, which convert quantum information produced by the quantum circuit into classical information. This conversion is inherently random, and measurement outputs are typically averaged before being fed to the classical optimizer. \label{fig:ch1:qmlintro}}
\end{figure}

\subsection{Quantum Machine Learning and Meta-Learning}
The integration between quantum machine learning and meta-learning can take two distinct forms, with the former supporting the latter or vice versa. 


\subsubsection{Classical Meta-Learning for Quantum Machine Learning}
Classical meta-learning algorithms as presented in this monograph can be leveraged to make the optimization of the PQC parameters $\phi$ more sample- or iteration-efficient. With this class of methods, the classical optimizer in Fig. \ref{fig:ch1:qmlintro} operates at two time scales, with the slower time scale processing data from multiple, related, meta-learning tasks. Classical neural network architectures, such as recurrent neural networks, can be meta-trained  to produce the PQC parameters $\phi$ in a more efficient manner than in the conventional case in which classical optimization applies separately to each learning task. We refer to \cite{wilson2021optimizing, verdon2019learning} for details and results.

\subsubsection{Quantum Machine Learning for Classical Meta-Learning}
Conversely, quantum machine learning models can be leveraged to enhance the performance of meta-learning for classical machine learning models. PQCs are particularly efficient as generative models that produce binary strings with complex joint distributions as the results of measurements at their outputs.  This suggests the use of PQCs to model variational distributions $q(\phi)$ in Bayesian meta-learning (see Section \ref{subs:bayes_maml}). 


To illustrate the idea of using quantum machine learning to aid classical meta-learning, consider the problem of training binary neural networks parameters' $\phi_k$ via Bayesian learning. 
The variational distribution $q(\phi_k)$ of the neural network's parameters $\phi_k$ is modelled implicitly via the output of the measurements of a PQC. Specifically, such measurements produce random binary strings $\phi_k \in\{0,1\}^{n}$, where $n = |\phi_k|$ denotes the total number of model parameters.  Importantly, such quantum models only provide samples, while the actual distribution of the measurements' outputs can only be estimated by averaging multiple measurements of the PQC's outputs. Therefore, PQCs model \textbf{implicit distributions}, and only define a stochastic procedure that directly generates samples for the model parameters $\phi_k$. 

Training from scratch for each task is thereby inefficient in terms of sample and iteration complexity and meta-learning alleviates these issues of optimizing the PQC. We refer to \cite{nikoloska2022quantum} for details and results.

\section{Conclusions}

This section has drown some connections between meta-learning and emerging computing technologies, which may play an important role in future machine learning systems. This is an active area of research, and more open problems will be reviewed in the next section.

\chapter{Outlook}
\label{sec:ch7}

This monograph has provided an introduction to meta-learning by surveying methods, theory, and application. The topic of meta-learning is currently the subject of intense research in different disciplines, including information theory,  machine learning, hardware design, and neuroscience. In this final section, we provide an outlook of directions for research that have not been covered in the text and appear to be particularly promising and challenging at the time of writing. We specifically focus on aspects of interest for researchers in signal precessing.

\section{Methods}
\label{sec:ch7.1:methods}

In this subsection, we highlight research topics concerning the development of meta-learning methods.

\subsection{Continual (Online) Meta-Learning}
\label{sec:ch7:online_meta}
The conventional formulation of meta-learning studied in this monograph assumes the availability of meta-training data set collected \textbf{offline} from $K$ learning tasks, which is denoted as  $\mathcal{D}^\text{mtr} = \{(\mathcal{D}_{k}^{\text{tr}}, \mathcal{D}_k^\text{te})_{k=1}^{K} \}$. As we have seen in Section~\ref{sec:genbounds_metalearning}, the number of tasks $K$ plays an important role in ensuring successful generalization to new tasks, avoiding meta-overfitting. The meta-training data set may be, for instance, collected by acquiring data sets for similar tasks from existing repositories; or by storing data gathered during previous interactions with similar learning environments. In the latter case, it is natural to consider settings in which the meta-training dataset is built in an \emph{online} fashion by accumulating data observed over time, and updating accordingly the hyperparameter $\theta$. This formulation is known as \textbf{continual, or online  meta-learning} \cite{finn2019online} (see also \cite{simeoneCUP}). Online meta-learning plays an important role also in models for computational intelligence \cite{alberta}.

\begin{figure}
	\begin{center}
		\includegraphics[width=0.7\textwidth]{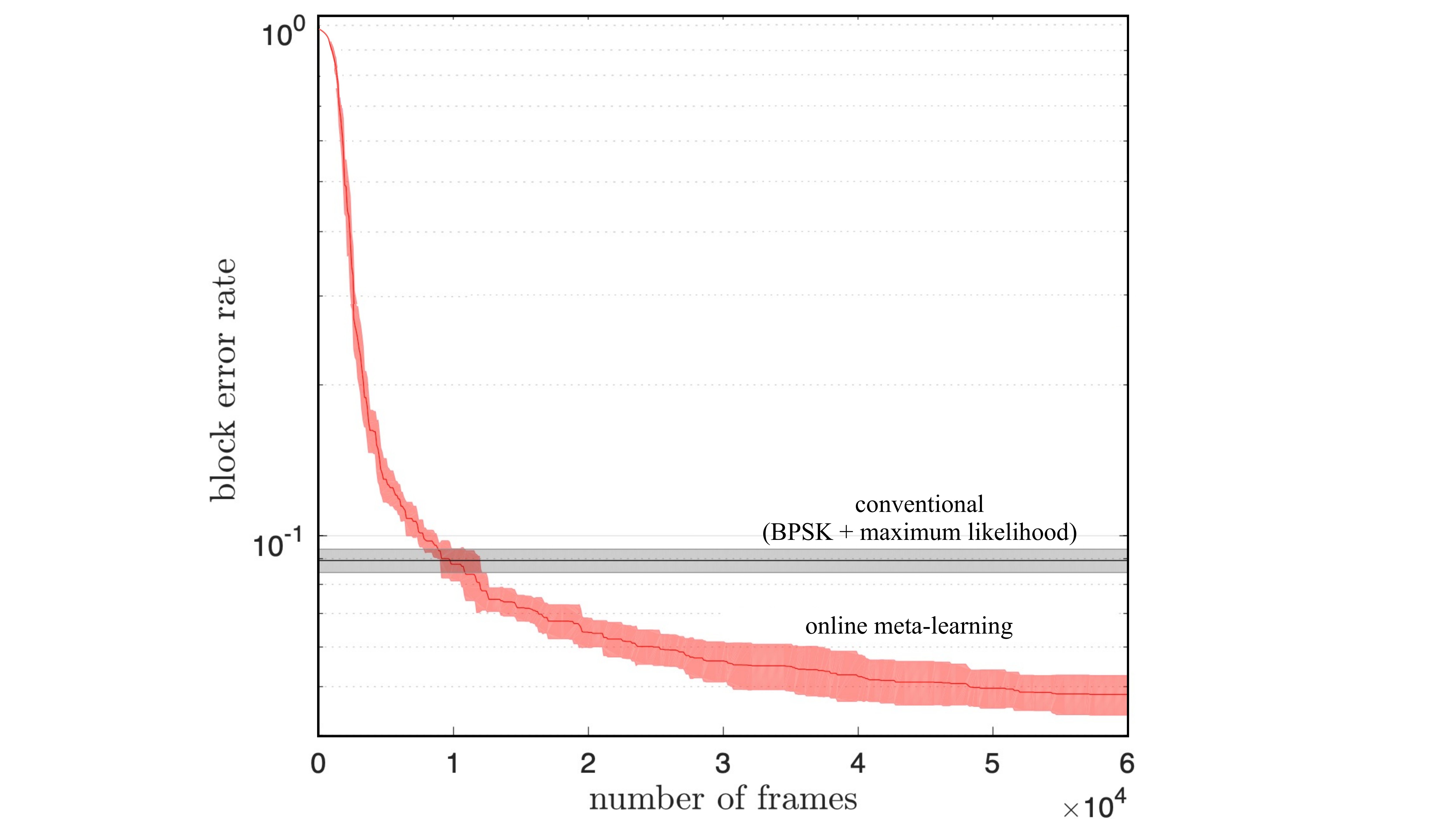}
	\end{center}
	\caption{Meta-learning for encoding and decoding without channel simulator: BLER as a function of the number of frames used during online meta-training phase ($8$ bits, $8$ complex channel uses; Rayleigh block fading channel with $3$ taps, $256$ messages per frame with $8$ pilot messages under a 10 \text{dB} SNR, adapted from \cite{park2020end}). \label{fig0ch7} }
\end{figure}

As an application of continual meta-learning, consider the problem of adapting a demodulator to changing channel conditions. While the setting studied in Section~\ref{sec:ch5_demodul} assumed the offline availability of a meta-training data set collected from a number of devices, a continual meta-learning formulation would operate in a streaming fashion. Accordingly, as data from more devices are collected, the hyperparameter $\theta$ is updated to better prepare the learning algorithm to adapt to new channel conditions. This particular application is studied in \cite{park2020learning}. 

When both encoder and decoder are updated in an online manner, revisiting the previous channel conditions is not feasible, and reference \cite{park2020end} proposed to continually update the meta-learned model at the receiver by applying the meta-gradient obtained from the current channel condition to the current hyperparameter vectors. Referring to \cite{park2020end} for details, Fig.~\ref{fig0ch7} and \ref{fig1ch7} illustrate the performance of the approach over channel conditions defined by an autoregressive Rayleigh fading process with temporal correlation factor $\rho$ \cite{park2020end}. Fig.~\ref{fig0ch7} gauges how many frames are needed for online meta-learning to successfully find a useful hyperparameter vector from the previous (meta-training) frames. In a manner similar to the discussions for offline meta-learning in Fig.~\ref{fig1_2ch5} and Fig.~\ref{fig2_1ch5}, Fig.~\ref{fig0ch7} shows that a sufficiently large number of frames are needed for a successful transfer of knowledge via meta-learning that ensures a performance gain with respect to a conventional per-frame solution. The impact of the channel correlation $\rho$ is analyzed in Fig.~\ref{fig1ch7}, which shows that meta-learning benefits from a smaller $\rho$. In fact, a large $\rho$ may cause meta-overfitting (see Section~\ref{sec:genbounds_metalearning}) due to the similarity of the channels observed during meta-training.

\begin{figure}
	\begin{center}
		\includegraphics[width=0.5\textwidth]{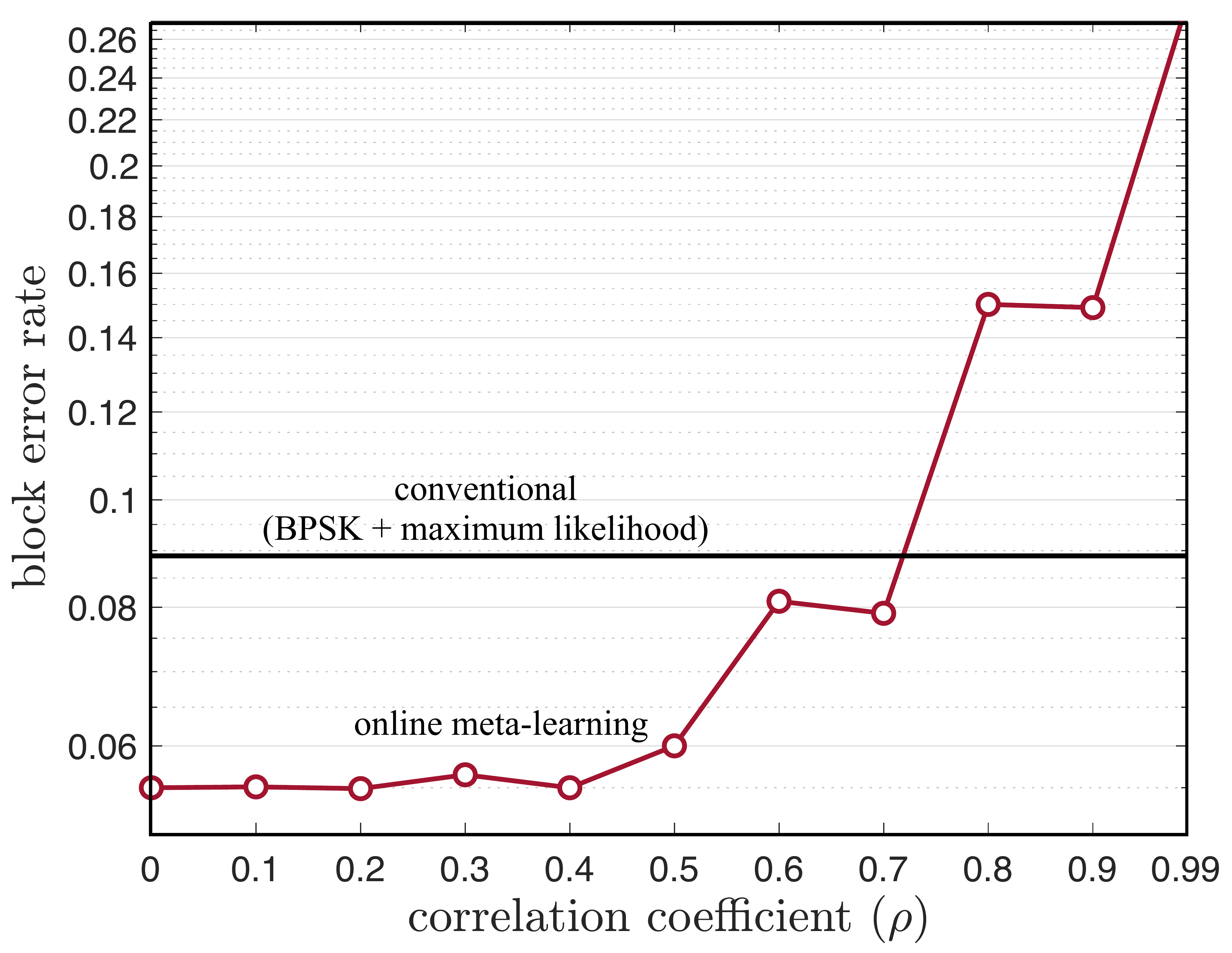}
	\end{center}
	\caption{Meta-learning for encoding and decoding without channel simulator: BLER as a function of the correlation coefficient $\rho$ of the time-varying channel model ($8$ bits, $8$ complex channel uses; Rayleigh block fading channel with $3$ taps, $256$ messages per frame with $8$ pilot messages under a 10 \text{dB} SNR, adapted from \cite{park2020end}). \label{fig1ch7} }
\end{figure}

\subsection{Meta-Learning for Reinforcement Learning}
This monograph has focused on  supervised and unsupervised learning problems. In such settings, the data sets are fixed. In contrast, in \textbf{reinforcement learning} (RL) data is collected through the interaction of the agent with the learning environment defining the given task. Meta-learning can be applied to RL problems with the goal of minimizing the duration of the interactions with new tasks that are required to obtain desirable performance levels \cite{Finn2017_maml, duan2016rl, nagabandi2018learning, rakelly2019efficient, berseth2021comps}. 

Continual meta-learning, as introduced in the previous subsection, can also be applied to RL. A key difference with respect to continual meta-learning for supervised or unsupervised learning is that it may be impossible to interact with previous tasks. This makes it impossible to evaluate the performance of new policies on previous tasks. For such practical scenarios, various techniques have been proposed, including model-based RL \cite{kaiser2019model, nagabandi2018learning, yu2020mopo}, off-policy RL \cite{degris2012off, rakelly2019efficient, yu2020mopo}, and behavior cloning \cite{mendonca2019guided, berseth2021comps}. 


As an example, unlike Section~\ref{sec:ch5:encoding_and_decoding}, which assumed knowledge of the channel model $p_h(y|x)$, RL-based solutions can optimize a transceiver through the direct interactions with the channel, assuming the presence of a feedback link from receiver to transmitter \cite{aoudia2019model}. 

As another application, consider the unmanned aerial base station (UABS) that provides radio coverage in vehicular networks \cite{3gpp2019uxnb}. Depending on a particular traffic pattern of the vehicles, an optimal trajectory of UABS can be found via RL \cite{deng2019joint}. However, such solutions may need retraining when the traffic pattern changes. In order to enable UABS to quickly adapt to new traffic patterns, the work \cite{marini2022continual} developed a meta-learning solution for RL that does not require revisiting the previous environments.

\subsection{Active Meta-Learning}
In the meta-learning formulations discussed so far, the meta-learning tasks are selected by ``nature''. This prevents the meta-learner from actively selecting tasks that are more informative about possible new tasks given what the meta-learner already knows. The active, sequential, selection of tasks is referred to as active meta-learning, and is currently an understudied area of research \cite{kaddour2020probabilistic,Nikoloska2022}. 

As an example, consider again the demodulation with few pilots studied in Section~\ref{sec:ch5_demodul}. Active meta-learning may help the designer reduce the number of required meta-training devices as in \cite{kfir2021towards}.


\subsection{Optimization for Overparameterized Meta-Learning}
When applied to deep learning models, meta-learning typically operates in the overparameterized regime, in which the number of the model parameters exceeds the amount of training data available. For example, ResNets-based MAML models have around 6 million parameters, but are trained on around 2 million meta-training samples  \cite{chen2018closer}.

When the meta-learning problem is overparameterized, the lower-level bilevel problem~\eqref{opt0-low} studied in Section~\ref{sec:ch3:bilevel-opt} may not be \emph{strongly} convex, and thus the lower-level problem has multiple solutions $\{\phi^*(\theta)\}$ given the hyperparameter vector $\theta$. This is problematic because the Hessian of the lower-level problem $\nabla_{\phi\phi}^2 g\big(\theta, \phi\big)$ may be not invertible, and thus the Hessian inverse used in the hyper-gradient \eqref{grad-deter-3} may not exist. Therefore, the alternating stochastic gradient-based ALSET method presented in Section \ref{sec:ch3:bilevel-opt} may not be theoretically justifiable in this case. 

To handle cases in which the lower-level problem has many solutions, two possible methods may be used. One is the optimistic solution that chooses a solution $\phi^*(\theta)$ by minimizing the upper-level objective (e.g., \cite{dempe2007new}), that is 
\begin{subequations}\label{oopt0}
	\begin{align}
		\min_{\theta\in \mathbb{R}^d, \phi^*(\theta)\in  \mathbb{R}^{\hat{d}}}&~~~{\cal L}(\theta):=\mathbb{E}_{\xi}\left[f\left(\theta, \phi^*(\theta);\xi\right)\right]~~~~~~~~~~~{\rm\sf (upper)} \\
		{\rm s. t.}~~~&\phi^*(\theta)\in \argmin_{\phi\in \mathbb{R}^{\hat{d}}}~\mathbb{E}_{\hat{\xi}}[g(\theta, \phi;\hat{\xi})]~~~~~~~~~~{\rm\sf (lower)}; \label{oopt0-low}
	\end{align} 
\end{subequations}
and the other is the pessimistic solution that chooses a solution $\phi^*(\theta)$ by maximizing the upper-level objective (e.g., \cite{dempe2014necessary}), that is   
\begin{subequations}\label{oopt1}
	\begin{align}
		&\min_{\theta\in \mathbb{R}^d}\max_{\phi^*(\theta)\in  \mathbb{R}^{\hat{d}}}~~~{\cal L}(\theta):=\mathbb{E}_{\xi}\left[f\left(\theta, \phi^*(\theta);\xi\right)\right]~~~~~~~~{\rm\sf (upper)} \\
		&~{\rm s. t.}~~~\phi^*(\theta)\in \argmin_{\phi\in \mathbb{R}^{\hat{d}}}~\mathbb{E}_{\hat{\xi}}[g(\theta, \phi;\hat{\xi})]~~~~~~~~~~~~~~~~{\rm\sf (lower)}. \label{oopt1-low}
	\end{align} 
\end{subequations}
The aforementioned bilevel optimization problems are much more challenging than those discussed in Section \ref{sec:ch3:bilevel-opt}, and their non-asymptotic analyses are relatively less explored \cite{vicol2022implicit, sinha2017review, liu2018pessimistic, liu2020methods,liu2020icml,sow2022constrained}.

\section{Theory}
\label{sec:ch7.2:theory}

We now turn to some open theoretical aspects of meta-learning.

\subsection{Benign Overfitting for Overparameterized Meta-Learning}
Statistical learning theory results derived using the standard techniques summarized in Section 4 suggest that overparameterized models tend to overfit \cite{hastie2009elements}.
Translating this insight into the meta-learning setting, one expects that, given the meta-training datasets $\{\mathcal{D}_{k}^{\text{tr}}\}_{k=1}^{K}$, if the model size grows large, the meta-generalization error $\Delta \mathcal{L}(\theta)$ defined in \eqref{eq:metageneralizationerror} also grows. However, empirical evidence reveals that overparameterized meta-learning methods still work well \cite{chen2018closer} -- a phenomenon often called ``benign overfitting.''

While generalization bounds for overparameterized models have been recently studied in the conventional learning setting \cite{Bartlett_benign_linear,Tsigler2020Benign_ridge,wang2021benign,frei2022benign}, their counterparts for meta-learning are  under-explored. The generalization performance under an overparameterized linear regression model has been studied in \cite{huang2022provable,chen2022understanding}, and it would be interesting to extend the analysis in \cite{huang2022provable,chen2022understanding} to nonlinear models by means of random features and neural tangent kernels. It is also interesting to investigate the implicit regularization effect \cite{neyshabur2014search, arora2019implicit} of meta-learning algorithms in overparameterized settings. 


\subsection{Epistemic Uncertainty of Bayesian Meta-Learning Under Model Misspecification}
The information-theoretic analysis of epistemic uncertainty for Bayesian meta-learning presented in Section~\ref{sec:ch4} relies on two crucial assumptions: $(a)$ the model is well-specified, and $(b)$ the exact meta-posterior distribution can be computed. However, neither of these assumptions seldom hold in practice. The true data distribution underlying the standard available data sets is not known in general, and Bayesian algorithms can only obtain approximate posterior distributions. Note that, in contrast, the PAC-Bayes bounds, presented in Section~\ref{sec:PACBayesianbounds_metalearning}, account for these practical considerations.

Characterizing the epistemic uncertainty when either of the above two assumptions is violated is an interesting open problem \cite{hullermeier2022quantifying}. For conventional learning, the recent work \cite{futami2022excess} explores this direction by combining the frequentist PAC-Bayesian generalization analysis with the Bayesian minimum excess risk analysis. Extensions to meta-learning offer an interesting line of future research.

\section{Applications}
\label{sec:ch7.3:apps}

We finally highlight an interesting research direction pertaining the application of meta-learning to communication systems. Also note that there are also many open problems at the intersection of meta-learning and emerging computing technologies as discussed in Section 6.

As discussed in Section~\ref{sec:ch5}, communication systems have been traditionally designed based on carefully designed models. Such models, even when inaccurate, may help define strong inductive biases that can be incorporated within data-driven approaches. For instance, the Viterbi algorithm \cite{viterbi1967error} is known to achieve the minimum BLER on known frequency-selective channels. When the channel is not known, the computation of branch metrics in the Viterbi algorithm can be designed in a data-driven fashion to mitigate the model deficit \cite{shlezinger2020viterbinet}. 

Model-based learning solutions have been reported to outperform both the conventional model-based algorithms and conventional black-box learning approaches \cite{shlezinger2020viterbinet, shlezinger2020deepsic}. Model-based meta-learning can further speed up model-based learning \cite{raviv2022online}. As an example, hypernetwork-based solutions (see Section~\ref{sec:ch2}) have been introduced for Kalman filter design \cite{pratik2021neural}, MIMO detection \cite{goutay2020deep}, and massive MIMO feedback \cite{liu2022learning} to aid model-based algorithms.


\begin{acknowledgements}

The work of Sharu Jose, Ivana Nikoloska, Sangwoo Park, and Osvaldo Simeone was supported by the European Research Council (ERC) under the European Union's
Horizon 2020 Research and Innovation Program (Grant Agreement No. 725731).
The work of Lisha Chen and Tianyi Chen was partially supported by National Science Foundation (NSF) CAREER Award 2047177, NSF MoDL-SCALE Grant 2134168 and the Rensselaer-IBM AI Research Collaboration (\url{http://airc.rpi.edu}), part of the IBM AI Horizons Network. 

\end{acknowledgements}
 
\backmatter  

\printbibliography

\end{document}